
\documentclass{article}

\usepackage{amsthm,amsmath,amsfonts,graphpap,amscd,mathrsfs,lscape}
\usepackage{epsfig,amssymb,amstext,xspace}
\usepackage{float}	
\usepackage[numbers]{natbib}
\usepackage{nicefrac}
\usepackage[shortlabels]{enumitem}
\def\defeq{\triangleq}
\newcommand{\E}{\mathbb{E}}
\usepackage{nicefrac}
\newcommand{\ldot}[2]{\ensuremath{\langle #1, #2 \rangle}}

\newtheorem{theorem}{Theorem}
\newtheorem{lemma}[theorem]{Lemma}
\newtheorem{prop}[theorem]{Proposition}

\newtheorem{corollary}[theorem]{Corollary}

\newtheorem{assumption}{Assumption}
\newenvironment{remark}{\noindent\textbf{Remark}
  \hspace*{1em}}{\smallskip}%

\newtheorem{defn}{Definition}

\def\squareforqed{\hbox{\rule{2.5mm}{2.5mm}}}

\def\QED{\ifmmode\squareforqed %
  \else{\nobreak\hfil   %
    \penalty50                 %
    \hskip1em                  %
    \null                      %
    \nobreak                   %
    \hfil                      %
    \squareforqed              %
    \parfillskip=0pt           %
    \finalhyphendemerits=0     %
    \endgraf}                  %
  \fi}

\def\blksquare{\rule{2mm}{2mm}}
\def\qedsymbol{\blksquare}
\newcommand{\bg}[1]{\medskip\noindent{\bf #1}}
\newcommand{\ed}{{\hfill\qedsymbol}\medskip}

\newcommand{\appref}[1]{Appendix~{\ref{sec:#1}}}
\newcommand{\assumpref}[1]{Assumption~{\ref{ass:main}.\ref{ass:#1}}}
\newcommand{\assumpsref}[1]{Assumptions~{\ref{ass:main}.\ref{ass:#1}}}
\newcommand{\assumpssref}[1]{{\ref{ass:main}.\ref{ass:#1}}}
\newcommand{\assumpargref}[2]{Assumption~{\ref{ass:#1}.\ref{ass:#2}}}

\newcommand{\corref}[1]{Corollary~{\ref{cor:#1}}}

\newcommand{\eqnref}[1]{\eqref{eqn:#1}}

\newcommand{\figref}[1]{Figure~{\ref{fig:#1}}}
\newcommand{\lemref}[1]{Lemma~{\ref{lem:#1}}}

\newcommand{\defref}[1]{Definition~{\ref{def:#1}}}
\newcommand{\secref}[1]{Section~{\ref{sec:#1}}}

\newcommand{\propref}[1]{Proposition~{\ref{prop:#1}}}

\newcommand{\thmref}[1]{Theorem~{\ref{thm:#1}}}

\newcommand{\R}{\ensuremath{\mathbb R}}

\newcommand{\N}{\ensuremath{\mathbb N}}

\newcommand{\todist}{\stackrel{d}{\to}} %
\newcommand{\toprob}{\stackrel{p}{\to}} %

\newcommand{\comment}[1]{}
 {}%

\newcommand{\junk}[1]{}

\def\balign#1\ealign{\begin{align}#1\end{align}}
\def\baligns#1\ealigns{\begin{align*}#1\end{align*}}
\def\balignat#1\ealign{\begin{alignat}#1\end{alignat}}
\def\balignats#1\ealigns{\begin{alignat*}#1\end{alignat*}}
\def\bitemize#1\eitemize{\begin{itemize}#1\end{itemize}}
\def\benumerate#1\eenumerate{\begin{enumerate}#1\end{enumerate}}

\newenvironment{talign*}
 {\let\displaystyle\textstyle\csname align*\endcsname}
 {\endalign}
\newenvironment{talign}
 {\let\displaystyle\textstyle\csname align\endcsname}
 {\endalign}

\def\balignst#1\ealignst{\begin{talign*}#1\end{talign*}}
\def\balignt#1\ealignt{\begin{talign}#1\end{talign}}

\def\Holder{H\"older\xspace}

\newcommand{\qtext}[1]{\quad\text{#1}\quad} 

\newcommand{\grad}{\nabla} %

\def\defeq{\triangleq} %

\newcommand{\inner}[2]{\langle{#1},{#2}\rangle} %
\newcommand{\norm}[1]{\|{#1}\|} %

\newcommand{\Var}{\ensuremath{\text{Var}}}
\newcommand{\Cov}{\ensuremath{\text{Cov}}}

\newcommand{\hset}[0]{\mathcal{H}}

\usepackage{color}              %
\usepackage{epsfig}

\usepackage[]{color-edits}
\addauthor{vs}{red}
\addauthor{lm}{green}
\addauthor{iz}{blue}

\def\mbb#1{\mathbb{#1}}
\def\mc#1{\mathcal{#1}}

\def\tbf#1{\textbf{#1}}

\def\staticindic#1{\mbb{I}[{#1}]} %

\newcommand{\ssest}[0]{{\hat{\theta}^{SS}}} %
\newcommand{\cfest}[0]{{\hat{\theta}^{CF}}} %

\usepackage{microtype}
\usepackage{graphicx}
\usepackage{subcaption}
\usepackage[colorlinks,breaklinks]{hyperref}

\usepackage[accepted]{icml2018}

\icmltitlerunning{Orthogonal Machine Learning: Power and Limitations}

\begin{document}

\twocolumn[
\icmltitle{Orthogonal Machine Learning: Power and Limitations}

\begin{icmlauthorlist}
\icmlauthor{Lester Mackey}{msr}
\icmlauthor{Vasilis Syrgkanis}{msr}
\icmlauthor{Ilias Zadik}{msr,mit}
\end{icmlauthorlist}

\icmlaffiliation{msr}{Microsoft Research New England, USA}
\icmlaffiliation{mit}{Operations Research Center, MIT, USA}

\icmlcorrespondingauthor{Lester Mackey}{lmackey@microsoft.com}
\icmlcorrespondingauthor{Vasilis Syrgkanis}{vasy@microsoft.com}
\icmlcorrespondingauthor{Ilias Zadik}{izadik@mit.edu}

\icmlkeywords{orthogonal machine learning, double machine learning, asymptotic normality, nuisance,
semiparametric inference, partially linear regression, high-dimensional regression, Stein's lemma,
treatment effect, causal inference}

\vskip 0.3in
]

\printAffiliationsAndNotice{}  %

\begin{abstract}
Double machine learning provides $\sqrt{n}$-consistent estimates of parameters of interest even when high-dimensional or nonparametric nuisance parameters are estimated at an $n^{-1/4}$ rate. The key is to employ \emph{Neyman-orthogonal} moment equations which are first-order insensitive to perturbations in the nuisance parameters. We show that the $n^{-1/4}$ requirement can be improved to $n^{-1/(2k+2)}$ by employing a $k$-th order notion of orthogonality that grants robustness to more complex or higher-dimensional nuisance parameters. In the partially linear regression setting, popular in causal inference, we show that we can construct second-order orthogonal moments if and only if the treatment residual is not normally distributed.  Our proof relies on Stein's lemma and may be of independent interest.  We conclude by demonstrating the robustness benefits of an explicit doubly-orthogonal estimation procedure for treatment effect.
\end{abstract}

\section{Introduction}
The increased availability of large and complex observational datasets is driving an increasing demand to conduct accurate causal inference of treatment effects in the presence of high-dimensional confounding factors. We take as our running example demand estimation from pricing and purchase data in the digital economy where many features of the world that simultaneously affect pricing decisions and demand are available in large data stores.
One often appeals to modern statistical machine learning (ML) techniques to model and fit the high-dimensional or nonparametric nuisance parameters introduced by these confounders.
However, most such techniques introduce bias into their estimates (e.g., via regularization) and hence yield invalid or inaccurate inferences concerning the parameters of interest (the treatment effects).

Several recent lines of have begun address the problem of debiasing ML estimators to perform accurate inference on a low dimensional component of model parameters. Prominent examples include Lasso debiasing \citep{ZhangZh2013,vanDeGeer2013,Javanmard2015} and post-selection inference \cite{Belloni2013,Berk2013,Tibhsirani2014}. 
The recent double / debiased ML work of \citet{Chernozhukov2016} describes a general-purpose strategy for extracting valid inferences for target parameters 
from somewhat arbitrary and relatively inaccurate estimates of nuisance parameters.

Specifically, \citet{Chernozhukov2016} analyze a two-stage process where in the first stage one estimates nuisance parameters using arbitrary statistical ML techniques on a first stage data sample and in the second stage
estimates the low dimensional parameters of interest via the generalized method of moments (GMM).
Crucially, the moments in the second stage are required to satisfy a \emph{Neyman orthogonality} condition, granting them first-order robustness to errors in the nuisance parameter estimation.
A main conclusion is that the second stage estimates are $\sqrt{n}$-consistent and asymptotically normal whenever the first stage estimates are consistently estimated at a $o(n^{-1/4})$ rate. 

To illustrate this result, let us consider the partially linear regression (PLR) model, popular in causal inference. In the PLR model we observe data triplets $Z = (T,Y,X)$, where $T \in \R$ represents a treatment or policy applied, $Y \in \R$ represents an outcome of interest, and $X \in \R^p$ is a vector of associated covariates.  These observations are related via the equations \begin{align*}
&Y=\theta_0T+f_0(X)+\epsilon, \quad  \E[\epsilon \mid X, T]=0\quad a.s.\\
&T=g_0(X)+\eta,\quad \E[\eta \mid X]=0\quad a.s.
\end{align*}
where $\eta$ and $\epsilon$ represent unobserved disturbances with distributions independent of $(\theta_0, f_0, g_0)$. The first equation features the treatment effect $\theta_0$, our object of inference. The second equation describes the relation between the treatment $T$ and the associated covariates $X$. The covariates $X$ affect the outcome $Y$ through the nuisance function $f_0$ and the treatment $T$ through the nuisance function $g_0$.
Using the Neyman-orthogonal moment of \citep[Eq. 4.55]{Chernozhukov2016}, the authors show that it suffices to estimate the nuisance $(f_0,g_0)$ at an $o(n^{-1/4})$ rate to construct a $\sqrt{n}$-consistent and asymptotically normal estimator of $\theta_0$. 

In this work, we provide a framework for achieving stronger robustness to first stage errors while maintaining second stage validity. In particular, we introduce a notion of higher-order orthogonality and show that if the moment is $k$-th order orthogonal then a first-stage estimation rate of $o(n^{-1/(2k+2)})$ suffices for $\sqrt{n}$-asymptotic normality of the second stage. 

We then provide a concrete application of our approach to the case of estimating treatment effects in the PLR model.
Interestingly, we show an impossibility result when the treatment residual follows a Gaussian distribution: no higher-order orthogonal moments with finite asymptotic variance exist, so first-order Neyman orthogonality appears to be the limit of robustness to first stage errors under Gaussian treatment residual. 
However, conversely, we also show how to construct appropriate second-order orthogonal moments whenever the treatment residual is not Gaussian. 
As a result, when the nuisance functions are linear in the high-dimensional confounders, our second-order orthogonal moments provide valid inferences whenever the number of relevant confounders is $o(\frac{n^{2/3}}{\log p})$; meanwhile the first-order orthogonality analyses of \citep{Chernozhukov2016} accommodate only $o(\frac{\sqrt{n}}{\log p})$ relevant confounders. %

We apply these techniques in the setting of demand estimation from pricing and purchase data, where highly non-Gaussian treatment residuals are standard.
In this setting, the treatment is the price of a product, and commonly, conditional on all observable covariates, the treatment follows a discrete distribution representing random discounts offered to customers over a baseline price linear in the observables.
In Figure~\ref{fig:comparison} we portray the results of a synthetic demand estimation problem with dense dependence on observables. 
Here, the standard orthogonal moment estimation has large bias, comparable to variance, while our second-order orthogonal moments lead to nearly unbiased estimation.
\begin{figure*}[htpb]
\centering
\begin{subfigure}[t]{0.45\textwidth}
\centering
\includegraphics[scale=.32]{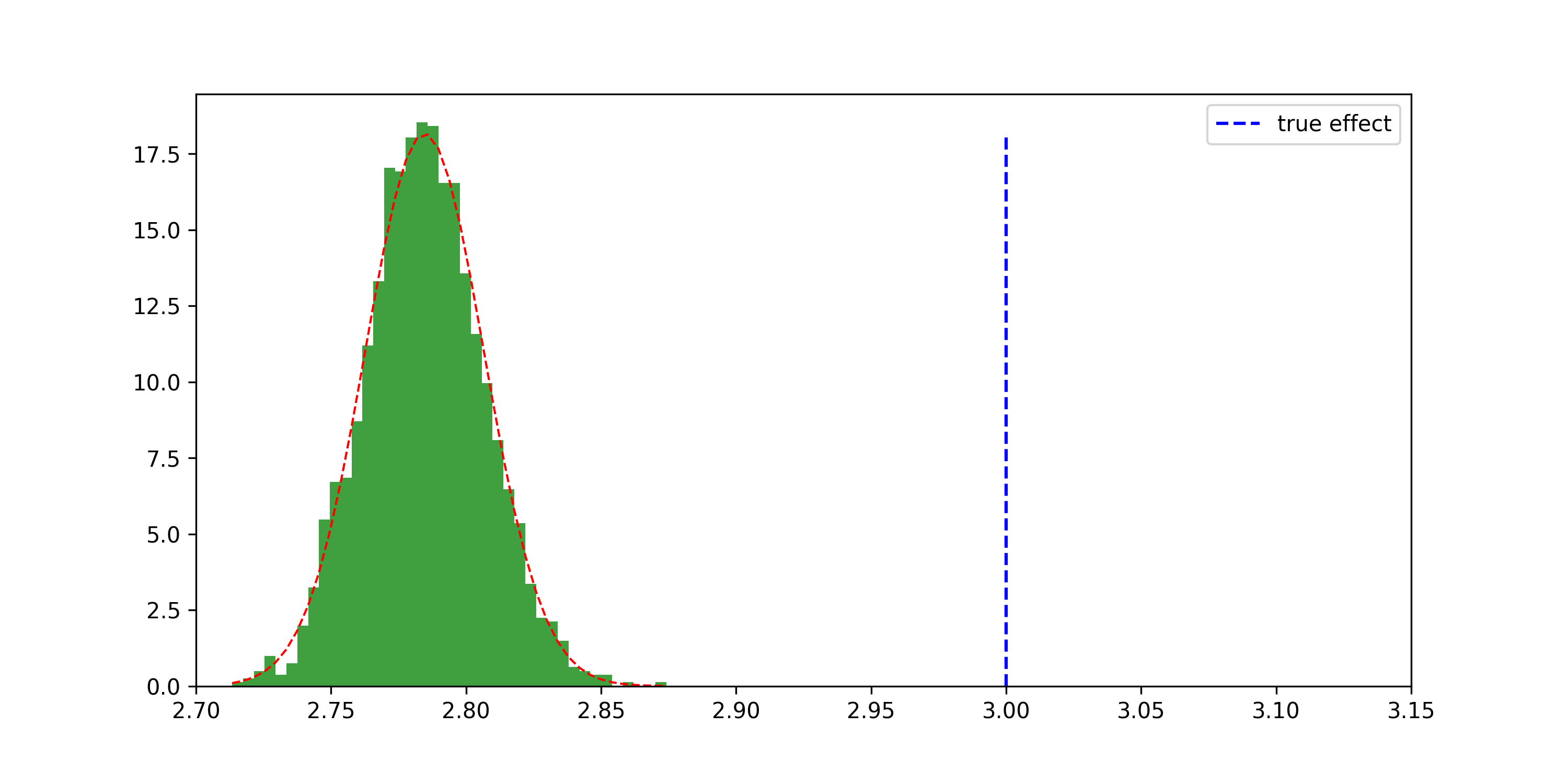}
\caption{Orthogonal estimates ($\hat{\theta}=2.78$, $\hat{\sigma}=.022$)}
\end{subfigure}
~~~~
\begin{subfigure}[t]{0.45\textwidth}
\centering
\includegraphics[scale=.32]{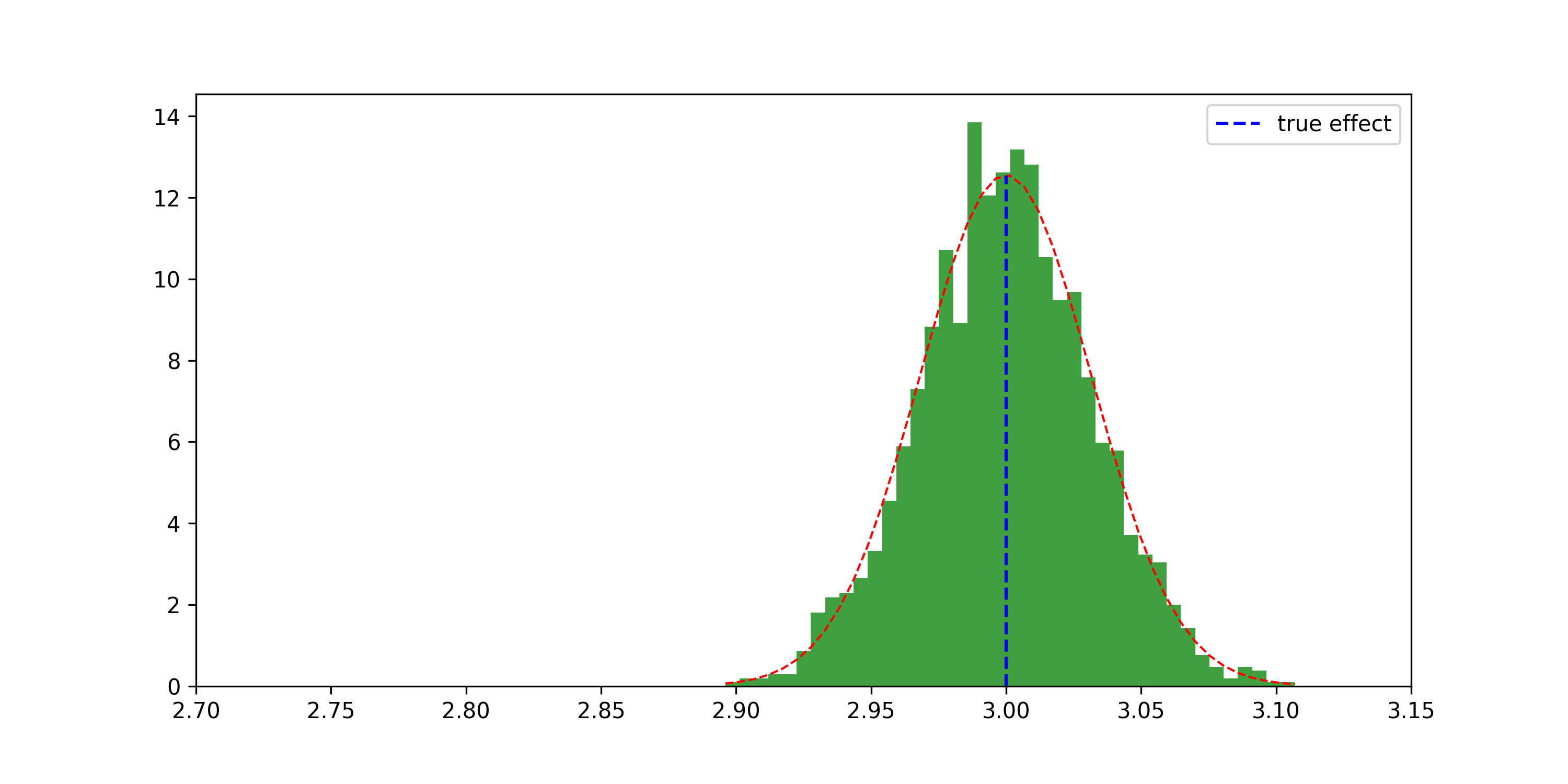}
\caption{Second-order orthogonal estimates ($\hat{\theta}=3.$, $\hat{\sigma}=.032$)}
\end{subfigure}
\caption{\small{We portray the distribution of estimates based on orthogonal moments and second-order orthogonal moments. The true treatment effect $\theta_0 = 3$. Sample size $n=5000$, dimension of confounders $d=1000$, support size of sparse linear nuisance functions $s=100$. The details of this experiment can be found in Section \ref{sec:exper}.}}
\label{fig:comparison}
\end{figure*}

\paragraph{Notational conventions}
For each $n\in \mathbb{N}$, we introduce the shorthand $[n]$ for $\{1,\dots,n\}$.
We let $\toprob$ and $\todist$ represent convergence in probability and convergence in distribution respectively.
When random variables $A$ and $B$ are independent, we use $\E_A[g(A,B)] \defeq \E[g(A,B) \mid B]$ to represent expectation only over the variable $A$.  
For a sequence of random vectors $(X_n)_{n=1}^\infty$ and a deterministic sequence of scalars $(a_n)_{n=1}^\infty$, we write $X_n = O_P(a_n)$ to mean $X_n/a_n$ is stochastically bounded, i.e., for any $\epsilon > 0$ there is $R_{\epsilon},N_{\epsilon}>0$ with $\Pr(\norm{X_n/a_n} > R_{\epsilon}) \leq \epsilon$ for all $n >N_{\epsilon}$. We let $N(\mu, \Sigma)$ represent a multivariate Gaussian distribution with mean $\mu$ and covariance $\Sigma$.

\section{$Z$-Estimation with Nuisance Functions and Orthogonality}\label{z_estimation}

Our aim is to estimate an unknown target parameter $\theta_0\in \Theta \subseteq \R^d$ given access to independent replicates $(Z_t)_{t=1}^{2n}$ of a random data vector $Z\in \R^{\rho}$ drawn from a distribution satisfying $d$ moment conditions,
\begin{equation}\label{eqn:population-moments}
\E[m(Z,\theta_0, h_0(X))|X] = 0\quad a.s.
\end{equation}
Here, $X\in \R^{p}$ is a sub-vector of the observed data vector $Z$, $h_0 \in \hset \subseteq \{h:\R^{p}\rightarrow \R^\ell\}$ is a vector of $\ell$ unknown nuisance functions, and $m: \R^{\rho} \times \R^d\times \R^\ell\rightarrow \R^d$ is a vector of $d$ known moment functions. 
We assume that these moment conditions exactly identify the parameter $\theta_0$, and we allow for the data to be high-dimensional, with $\rho$ and $p$ potentially growing with the sample size $n$. However, the number of parameters of interest $d$ and the number of nuisance functions $\ell$ are assumed to be constant.

We will analyze a two-stage estimation process where we first estimate the nuisance parameters using half of our sample\footnote{Unequal divisions of the sample can also be used; we focus on an equal division for simplicity of presentation.} and then form a $Z$-estimate of the target parameter $\theta_0$ using the remainder of the sample and our first-stage estimates of the nuisance. This \emph{sample-splitting} procedure proceeds as follows.
\begin{enumerate}
\item \emph{First stage.} Form an estimate $\hat{h} \in \hset$ of $h_0$ using $(Z_t)_{t=n+1}^{2n}$   (e.g., by running a nonparametric or high-dimensional regression procedure). 
\item \emph{Second stage.} Compute a $Z$-estimate $\ssest \in \Theta$ of $\theta_0$ using an empirical version of the moment conditions \eqnref{population-moments} and $\hat{h}$ as a plug-in estimate of $h_0$:
\begin{equation}\label{eqn:second-stage}
\textstyle
\ssest \qtext{ solves }\frac{1}{n} \sum_{t=1}^n m(Z_t, \theta, \hat{h}(X_t)) =0.
\end{equation}
\end{enumerate} 

Relegating only half of the sample to each stage represents a statistically inefficient use of data and, in many applications, detrimentally impacts the quality of the first-stage estimate $\hat{h}$. 
A form of repeated sample splitting called \emph{$K$-fold cross-fitting} \citep[see, e.g.,][]{Chernozhukov2016} addresses both of these concerns. $K$-fold cross-fitting partitions the index set of the datapoints $[2n]$ into $K$ subsets $I_1,\ldots,I_K$ of cardinality $\frac{2n}{K}$ (assuming for simplicity that $K$ divides $2n$)
and produces the following two-stage estimate:
\begin{enumerate}
\item \emph{First stage.} For each $k \in [K]$, form an estimate $\hat{h}_k \in \mathcal{H} $ of $h_0$ using only the datapoints $(Z_t)_{t \in I_k^c}$ corresponding to $I_k^c=[2n]\setminus I_k$.

\item \emph{Second stage.}  Compute a $Z$-estimate $\ssest \in \Theta$ of $\theta_0$ using an empirical version of the moment conditions and $(\hat{h}_k)_{k \in [K]}$ as plug-in estimators of $h_0$: 
\begin{equation}\label{eqn:cross}
\cfest \text{ solves }\ \frac{1}{2n} \sum_{k=1}^K  \sum_{t \in I_k} m(Z_t,\theta,\hat{h}_k(X_t))=0.
\end{equation}
\end{enumerate}
Throughout, we assume $K$ is a constant independent of all problem dimensions.
As we will see in \thmref{orth}, a chief advantage of cross-fitting over sample splitting is improved relative efficiency with an asymptotic variance that reflects the use of the full dataset in estimating $\theta$.

\emph{Main Question.} Our primary inferential goal is to establish conditions under which the estimators $\ssest$ in \eqnref{second-stage} and $\cfest$ \eqnref{cross} enjoy $\sqrt{n}$-asymptotic normality, that is
\begin{equation*}
\sqrt{n} (\ssest-\theta_0) \todist N(0,\Sigma)
\text{ and }
\sqrt{2n} (\cfest-\theta_0) \todist N(0,\Sigma)
\end{equation*}
for some constant covariance matrix $\Sigma$.
Coupled with a consistent estimator of $\Sigma$, asymptotic normality enables the construction of asymptotically valid confidence intervals for $\theta$ based on Gaussian or Student's t quantiles and asymptotically valid hypothesis tests, like the Wald test, based on chi-squared limits.

\subsection{Higher-order Orthogonality}
We would like our two-stage procedures to produce accurate estimates of $\theta_0$ even when the first stage nuisance estimates are relatively inaccurate.
With this goal in mind, \citet{Chernozhukov2016} defined the notion of Neyman-orthogonal moments, inspired by the early work of \citet{Neyman1979}. 
In our setting, the orthogonality condition of \cite{Chernozhukov2016} is implied by the following condition, which we will call \emph{first-order orthogonality}:
\begin{defn}[First-order Orthogonal Moments] A vector of moments $m: \R^\rho\times \R^d\times \R^\ell\rightarrow \R^d$ is \emph{first-order orthogonal} with respect to the nuisance $h_0(X)$ if
\begin{equation*}
\E\left[\nabla_{\gamma} m(Z,\theta_0,\gamma)|_{\gamma=h_0(X)}\, |\, X\right] = 0.
\end{equation*}
Here, $\nabla_{\gamma} m(Z,\theta_0, \gamma)$ is the gradient of the vector of moments with respect to its final $\ell$ arguments.
\end{defn}

Intuitively, first-order orthogonal moments are insensitive to small perturbations in the nuisance parameters and hence robust to small errors in estimates of these parameters.
A main result of \cite{Chernozhukov2016} is that, if the moments $m$ are first-order orthogonal, then $o(n^{-1/4})$ error rates\footnote{In the sense of root mean squared error: $n^{1/4} \sqrt{\E[\norm{h_0(X)-\hat{h}(X)}_2^2\mid \hat{h}]}\toprob 0$.} in the first stage estimation of $h_0$ are sufficient for $\sqrt{n}$-asymptotic normality of the estimates $\ssest$ and $\cfest$.

Our aim is to accommodate slower rates of convergence in the first stage of estimation by designing moments robust to larger nuisance estimation errors.
To achieve this, we will introduce a generalized notion of orthogonality that requires higher-order nuisance derivatives of $m$ to be conditionally mean zero. 
We will make use of the following higher-order differential notation: 
\begin{defn}[Higher-order Differentials] Given a vector of moments $m: \R^\rho \times \R^d\times \R^\ell\rightarrow \R^d$ and a vector $\alpha\in\N^\ell$ we denote by $D^{\alpha}m(Z,\theta, \gamma)$ the $\alpha$-differential of $m$ with respect to its final $\ell$ arguments:
\begin{equation}
D^{\alpha}m(Z,\theta, \gamma) = \nabla_{\gamma_1}^{\alpha_1} \nabla_{\gamma_2}^{\alpha_2}\ldots\nabla_{\gamma_\ell}^{\alpha_\ell}m(Z,\theta, \gamma)
\end{equation}
\end{defn}
We are now equipped to define our notion of \emph{$S$-orthogonal moments}:
\begin{defn}[$S$-Orthogonal Moments] \label{def:s-orthogonal}
A vector of moments $m: \R^\rho\times \R^d\times \R^\ell\rightarrow \R^d$ is \emph{$S$-orthogonal} with respect to the nuisance $h_0(X)$ for some \emph{orthogonality set} $S \subseteq \N^{\ell}$, if for any $\alpha \in S$: 
\begin{equation}
\E\left[D^{\alpha} m(Z,\theta_0, h_0(X))\right|X] = 0.
\end{equation}
\end{defn}
We will often be interested in the special case of \defref{s-orthogonal} in which $S$ is comprised of all vectors $\alpha \in \N^{\ell}$ with $\|\alpha\|_1\leq k$. This implies that all mixed nuisance derivatives of the moment of order $k$ or less are conditionally mean zero. 
We will refer to this special case as \emph{$k$-orthogonality} or \emph{$k$-th order orthogonality}.

\begin{defn}[$k$-Orthogonal Moments]
\label{def:k-orthogonal}
A vector of moments $m: \R^\rho\times \R^d\times \R^\ell\rightarrow \R^d$ is \emph{$k$-orthogonal} if it is $S_k$-orthogonal for 
the \emph{$k$-orthogonality set}, $S_k\defeq\{\alpha \in \N^{\ell}: \|\alpha\|_1\leq k\}$.
\end{defn}
The general notion of $S$-orthogonality allows for our moments to be more robust to errors in some nuisance functions and less robust to errors in others. 
This is particularly valuable when some nuisance functions are easier to estimate than others; we will encounter such an example in \secref{plr-power}.

\section{Higher-order Orthogonality and Root-$n$ Consistency}\label{sec:root_n_consistency}

We will now show that $S$-orthogonality together with appropriate consistency rates for the first stage estimates of the nuisance functions imply $\sqrt{n}$-consistency and asymptotic normality of the two-stage estimates $\ssest$ and $\cfest$. 
Beyond orthogonality and consistency, our main Assumption~\ref{ass:main} demands identifiability, non-degeneracy, and regularity of the moments $m$, all of which are standard for establishing the asymptotic normality of $Z$-estimators. 

\begin{assumption}\label{ass:main} For a non-empty orthogonality set $S \subseteq \N^{\ell}$ and $k\defeq\max_{\alpha \in S} \|\alpha\|_1$, we assume the following:
\begin{enumerate}
\item\label{ass:orthogonality} \tbf{\emph{$S$-Orthogonality.}} The moments $m$ are $S$-orthogonal.
\item\label{ass:identifiable}  \tbf{\emph{Identifiability.}} $\E[m(Z,\theta, h_0(X))] \neq 0$ when $\theta \neq \theta_0$. %
\item\label{ass:reg_full_rank} 
	\tbf{\emph{Non-degeneracy.}} The matrix $\E\left[\nabla_\theta m(Z,\theta_0,h_0(X))\right]$ is invertible.
\item\label{ass:continuity} 
	\tbf{\emph{Smoothness.}} $\grad^k m$ exists and is continuous.
\item\label{ass:first-stage-consistency} \tbf{\emph{Consistency of First Stage.}} The first stage estimates satisfy 
\begin{equation*}
\textstyle
\E[\prod_{i=1}^\ell |\hat{h}_i(X)-h_{0,i}(X)|^{4\alpha_i} \mid \hat{h} ] \toprob 0,
\quad\forall \alpha \in S,
\end{equation*}
where the convergence in probability is with respect to the auxiliary data set used to fit $\hat{h}$.
\item\label{ass:first-stage} \tbf{\emph{Rate of First Stage.}} The first stage estimates satisfy
\begin{equation*}
\textstyle
n^{1/2}\cdot\sqrt{\E[\prod_{i=1}^\ell |\hat{h}_i(X)-h_{0,i}(X)|^{2\alpha_i} \mid \hat{h} ]} \toprob 0,
\end{equation*}
$ \forall \alpha \in \{ a \in \mathbb{N}^{\ell}: \|a\|_1 \leq k+1 \} \setminus S$, where the convergence in probability is with respect to the auxiliary data set used to fit $\hat{h}$.
\item\label{ass:reg_moments} \tbf{\emph{Regularity of Moments.}} There exists an $r>0$ such that the following regularity conditions hold:
\begin{enumerate}
\item\label{ass:theta_dominated} 
 
	$\E[\sup_{\theta\in\mc{B}_{\theta_0,r}} \norm{\grad_\theta m(Z, \theta, h_0(X))}_F] < \infty$\\ 
for $\mc{B}_{\theta_0,r} \defeq \{ \theta \in \Theta: \|\theta-\theta_0\|_2 \leq r\}.$ %
\item\label{ass:gamma_theta_derivs} \mbox{}\\
\vspace{-.125\linewidth}
$$\displaystyle\sup_{h\in\mc{B}_{h_0,r}} \E[\sup_{\theta\in\mc{B}_{\theta_0,r}} \norm{\grad_\gamma \grad_\theta m(Z, \theta,h(X))}^2] < \infty$$
 for $\mc{B}_{h_0,r} \defeq
 \{ h \in \hset : $\\ 
 $\displaystyle\max_{\alpha:\norm{\alpha}_1 \leq k+1}\textstyle\E[\prod_{i=1}^\ell |h_i(X)-h_{0,i}(X)|^{2\alpha_i} ] \leq r\}.$
\item\label{ass:reg_bound_diff}
	$\displaystyle\max_{\alpha:\norm{\alpha}_1 \leq k+1} \sup_{h\in\mc{B}_{h_0,r}}  \E\left[|D^{\alpha}m(Z, \theta_0, h(X))|^4\right] \leq \lambda_*(\theta_0, h_0) < \infty$.
\item  $\E[\sup_{\theta\in A, h \in \mc{B}_{h_0,r}} \norm{m(Z,\theta, h(X))}_2] < \infty$,\\  for any compact $A\subseteq \Theta$,
 \label{ass:reg_m_dominated} %
\item $\sup_{\theta\in A, h \in \mc{B}_{h_0,r}} \E[\norm{\grad_\gamma m(Z,\theta, h(X)) }^2] < \infty$,\\ for any compact $A\subseteq \Theta$.
\label{ass:reg_grad_gamma} %
\end{enumerate}
\end{enumerate}
\end{assumption}

We are now ready to state our main theorem on the implications of $S$-orthogonality for second stage $\sqrt{n}$-asymptotic normality.
The proof can be found in \secref{orth-proof}.
\begin{theorem}[Main Theorem]\label{thm:orth} Under Assumption \ref{ass:main}, if $\ssest$ and $\cfest$ are consistent, then 
\begin{equation*}
\sqrt{n} (\ssest-\theta_0) \todist N(0,\Sigma)
\text{ and }
\sqrt{2n} (\cfest-\theta_0) \todist N(0,\Sigma)
\end{equation*}
where $\Sigma = J^{-1} V J^{-1}$ for $J = \E\left[\nabla_\theta m(Z,\theta_0, h_0(X))\right]$ and $V = \mathtt{Cov}(m(Z,\theta_0, h_0(X)))$. 
\end{theorem}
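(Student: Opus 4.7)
The plan is to follow the standard blueprint for $Z$-estimator asymptotic normality, upgraded to accommodate a Taylor expansion of order $k+1$ in the nuisance direction. I will handle the sample-splitting estimator first and then indicate the modifications for cross-fitting.

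First I would use the smoothness condition (\assumpref{continuity}) and a first-order Taylor expansion in $\theta$ around $\theta_0$ in the defining equation $\frac{1}{n}\sum_{t=1}^n m(Z_t,\ssest,\hat h(X_t))=0$ to obtain
\begin{equation*}
\sqrt{n}\,(\ssest-\theta_0) = -\hat{J}_n^{-1}\,\sqrt{n}\,\bar M_n,
\end{equation*}
where $\bar M_n = \frac{1}{n}\sum_{t=1}^n m(Z_t,\theta_0,\hat h(X_t))$ and $\hat J_n$ is a sample average of $\grad_\theta m$ at intermediate points between $\ssest$ and $\theta_0$. Using the consistency of $\ssest$ together with the regularity assumptions \assumpref{theta_dominated}--\assumpref{gamma_theta_derivs} and the first-stage consistency in \assumpref{first-stage-consistency}, a standard uniform-LLN sandwich gives $\hat J_n \toprob J$; by \assumpref{reg_full_rank} this matrix is invertible in the limit. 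I then split
\begin{equation*}
\bar M_n = \bar M_n^0 + \Delta_n, \qquad \bar M_n^0 \defeq \tfrac{1}{n}\textstyle\sum_{t=1}^n m(Z_t,\theta_0,h_0(X_t)),
\end{equation*}
and apply the ordinary CLT to $\bar M_n^0$, whose summands are i.i.d.\ with mean zero by \eqref{eqn:population-moments} and finite covariance $V$ (finiteness is the $\alpha=0$ case of \assumpref{reg_bound_diff}).

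The main work is the bound $\sqrt{n}\,\Delta_n = o_P(1)$. The crucial structural fact is that $\hat h$ is computed from the disjoint auxiliary half-sample and is therefore independent of $(Z_t)_{t=1}^n$. I would Taylor expand $m(Z_t,\theta_0,\cdot)$ around $h_0(X_t)$ to order $k+1$:
\begin{equation*}
m(Z_t,\theta_0,\hat h(X_t)) - m(Z_t,\theta_0,h_0(X_t)) = \sum_{1\le\|\alpha\|_1\le k}\tfrac{1}{\alpha!}\,D^\alpha m(Z_t,\theta_0,h_0(X_t))\textstyle\prod_{i=1}^\ell\bigl(\hat h_i(X_t)-h_{0,i}(X_t)\bigr)^{\alpha_i} + r_t,
\end{equation*}
with $r_t$ an order-$(k+1)$ integral remainder evaluated along the segment from $h_0(X_t)$ to $\hat h(X_t)$. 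For every $\alpha\in S$, $S$-orthogonality gives $\E[D^\alpha m(Z_t,\theta_0,h_0(X_t))\mid X_t]=0$, so the corresponding sample average has mean zero conditional on $\hat h$; its conditional second moment is bounded by a Cauchy--Schwarz product of $\sqrt{\E[(D^\alpha m)^4\mid\hat h]}$ (uniformly controlled by \assumpref{reg_bound_diff}) and $\sqrt{\E[\prod_i|\hat h_i-h_{0,i}|^{4\alpha_i}\mid\hat h]}$, which is $o_P(1)$ by \assumpref{first-stage-consistency}; a conditional Chebyshev inequality then gives $o_P(n^{-1/2})$. For each non-orthogonal index $\alpha\notin S$ with $\|\alpha\|_1\le k+1$ (including the remainder $r_t$), I would bound the conditional mean of each summand via Cauchy--Schwarz by $\sqrt{\E[(D^\alpha m)^2\mid\hat h]}\cdot\sqrt{\E[\prod_i|\hat h_i-h_{0,i}|^{2\alpha_i}\mid\hat h]}$, which is $o_P(n^{-1/2})$ by the sharper rate in \assumpref{first-stage}; the conditional variance of the corresponding sample average is negligible under the same moment bounds.

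Combining $\sqrt{n}\,\bar M_n^0\todist N(0,V)$, $\sqrt{n}\,\Delta_n\toprob 0$, and $\hat J_n\toprob J$ via Slutsky yields the sample-splitting conclusion with $\Sigma=J^{-1}VJ^{-1}$. For $\cfest$ I would replay the identical argument fold by fold: conditional on $\hat h_k$, the data in $I_k$ are i.i.d., so each fold's Taylor expansion is controlled as above; summing over the $K=O(1)$ folds recovers $\frac{1}{2n}\sum_{t=1}^{2n} m(Z_t,\theta_0,h_0(X_t))$ plus a $o_P((2n)^{-1/2})$ remainder, giving the stated $\sqrt{2n}$-limit with the same sandwich covariance. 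The main obstacle, requiring the most care, is controlling the Taylor remainder $r_t$: one must verify that with probability tending to one $\hat h\in\mc{B}_{h_0,r}$ so that the uniform $L^4$ bound of \assumpref{reg_bound_diff} applies along the segment joining $\hat h$ and $h_0$, and then leverage the product-moment rate of \assumpref{first-stage} at order $k+1$ to annihilate the remainder.
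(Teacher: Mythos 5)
Your proposal is correct and follows essentially the same route as the paper: a coordinate-wise mean-value expansion in $\theta$ giving the sandwich form, convergence of the empirical Jacobian to the invertible $J$, an order-$(k+1)$ Taylor expansion in the nuisance that splits the score into the i.i.d.\ CLT term, the $S$-orthogonal terms (killed via a conditional second-moment/Chebyshev argument exploiting independence of $\hat h$ from the estimation sample), and the non-orthogonal plus remainder terms (killed via Cauchy--Schwarz with \assumpref{reg_bound_diff} and the rate in \assumpref{first-stage}, with the remainder requiring $\hat h\in\mc{B}_{h_0,r}$ with probability tending to one), followed by Slutsky and a fold-by-fold repetition for $\cfest$. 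This is the paper's decomposition $B=C+G+E+F$ in all but notation.
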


A variety of standard sufficient conditions guarantee the consistency of $\ssest$ and $\cfest$.  Our next result, proved in \secref{proof-consistency}, establishes consistency under either of two commonly satisfied assumptions.
\begin{assumption}\label{ass:consistency}
One of the following sets of conditions is satisfied:
\begin{enumerate}
\item\tbf{\emph{Compactness conditions:}}\label{ass:compactness} $\Theta$ is compact. 
\item\tbf{\emph{Convexity conditions:}}\label{ass:convexity} 
$\Theta$ is convex, $\theta_0$ is in the interior of $\Theta$, and, with probability approaching 1,
the mapping $\theta \mapsto \frac{1}{n} \sum_{t=1}^n m(Z_t, \theta, \hat{h}(X_t))$ is the gradient of a convex function.
\end{enumerate}
\end{assumption}
\begin{remark}
A continuously differentiable vector-valued function $\theta \mapsto F(\theta)$
on a convex domain $\Theta$ 
is the gradient of a convex function whenever the matrix $\nabla_\theta F(\theta)$ is symmetric
and 
positive semidefinite for all $\theta$.
\end{remark}
\begin{theorem}[Consistency]\label{thm:consistency}
If Assumptions \ref{ass:main} and \ref{ass:consistency} hold, then $\ssest$ and $\cfest$ are consistent.
\end{theorem}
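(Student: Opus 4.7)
The plan is to establish uniform-in-probability convergence of the empirical second-stage moment to its population counterpart and then invoke standard $Z$-estimator consistency arguments, tailored to whichever branch of Assumption~\ref{ass:consistency} holds.

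I would focus first on the sample-splitting estimator $\ssest$, since the cross-fitting estimator $\cfest$ requires only the same analysis applied fold-by-fold and then combined across the $K$ folds. Conditioning on the auxiliary half-sample, $\hat h$ is frozen and $(Z_t)_{t=1}^n$ remain i.i.d. Writing $\widehat M_n(\theta)\defeq \frac{1}{n}\sum_{t=1}^n m(Z_t,\theta,\hat h(X_t))$ and $M(\theta)\defeq \E[m(Z,\theta,h_0(X))]$, I decompose
\begin{equation*}
\widehat M_n(\theta)-M(\theta) \;=\; \underbrace{\widehat M_n(\theta)-\E[m(Z,\theta,\hat h(X))\mid \hat h]}_{A_n(\theta)} \;+\; \underbrace{\E[m(Z,\theta,\hat h(X))\mid \hat h]-M(\theta)}_{B_n(\theta)}.
\end{equation*}

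For the bias $B_n$, I would Taylor expand $m(Z,\theta,\gamma)$ in $\gamma$ to first order around $h_0(X)$, apply Cauchy--Schwarz, and use the uniform gradient bound~\ref{ass:reg_grad_gamma} on $\mc{B}_{h_0,r}$ together with the fact that $\hat h$ lies in $\mc{B}_{h_0,r}$ with probability approaching $1$: this reduces $\sup_{\theta\in A}\|B_n(\theta)\|$ on any compact $A\subseteq\Theta$ to a constant multiple of $\sqrt{\E[\|\hat h(X)-h_0(X)\|^2\mid \hat h]}$. Jensen applied to the $\alpha \in S$ condition~\ref{ass:first-stage-consistency} controls the coordinates of $\hat h$ reached by $S$, while the complementary condition~\ref{ass:first-stage} covers the remaining coordinates (each standard basis vector $e_i$ has $\|e_i\|_1 \leq k+1$), so $\sup_{\theta\in A}\|B_n(\theta)\|\toprob 0$. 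For the stochastic part $A_n$, a uniform law of large numbers on compact $A$ delivers $\sup_{\theta\in A}\|A_n(\theta)\|\toprob 0$: the pointwise LLN holds under the envelope condition~\ref{ass:reg_m_dominated} (uniformly over $\hat h\in\mc{B}_{h_0,r}$), and stochastic equicontinuity in $\theta$ follows from~\ref{ass:theta_dominated}. Combining the two pieces yields $\sup_{\theta\in A}\|\widehat M_n(\theta)-M(\theta)\|\toprob 0$ for every compact $A\subseteq\Theta$.

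With uniform convergence on compacts established, consistency follows from classical $Z$-estimator arguments. Under the compactness branch~\ref{ass:compactness}, $M$ is continuous and, by identifiability~\ref{ass:identifiable}, has $\theta_0$ as its unique zero; any sequence of near-zeros of $\widehat M_n$ on the compact $\Theta$ must therefore converge to $\theta_0$. Under the convexity branch~\ref{ass:convexity}, $\widehat M_n$ is, with probability tending to $1$, the gradient of a convex function $\widehat F_n$; the corresponding population potential $F$ has $\theta_0$ as a strict local minimum by non-degeneracy~\ref{ass:reg_full_rank} and as its unique critical point by identifiability, so pointwise convergence of the convex potentials $\widehat F_n\toprob F$ (obtained by integrating the pointwise convergence of their gradients along line segments) upgrades to uniform convergence on compacts, and the usual argmin-continuity for convex functions gives $\ssest\toprob\theta_0$. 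The same reasoning applied separately to each of the $K$ folds then gives $\cfest\toprob\theta_0$.

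The step I expect to be most delicate is the uniform LLN for $A_n$ in the presence of a random, possibly high-dimensional $\hat h$: this requires the domination bounds in~\ref{ass:reg_moments} to hold uniformly over the neighborhood $\mc{B}_{h_0,r}$, and it requires verifying that $\hat h\in\mc{B}_{h_0,r}$ with high probability, which is precisely what Assumptions~\ref{ass:first-stage-consistency} and~\ref{ass:first-stage} guarantee when specialized to the standard basis vectors of $\N^{\ell}$.
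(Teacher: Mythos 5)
Your proposal matches the paper's proof essentially step for step: the same decomposition into a stochastic term (handled by a uniform law of large numbers on compacts under the domination conditions \ref{ass:main}.7) and a bias term (handled by a first-order expansion in $\gamma$, Cauchy--Schwarz, and the first-stage consistency assumptions), followed by the standard Newey--McFadden consistency arguments for the compactness and convexity branches respectively. Your remark that the coordinates of $\hat h$ not covered by $S$ in Assumption~\ref{ass:main}.5 are controlled by Assumption~\ref{ass:main}.6 applied to the standard basis vectors is a slightly more careful accounting than the paper gives, but it does not change the argument.
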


\subsection{Sufficient Conditions for First Stage Rates}
Our assumption on the first stage estimation rates, i.e., that $\forall \alpha \in \{ a \in \mathbb{N}^{\ell}: \|a\|_1 \leq k+1 \} \setminus S$
\balignst
n^{1/2}\cdot \sqrt{\E\left[\prod_{i=1}^\ell |\hat{h}_i(X)-h_{0,i}(X)|^{2\alpha_i} \mid \hat{h} \right]} \toprob 0
\ealignst
may seem complex, as it involves the interaction of the errors of multiple nuisance function estimates. In this section we give sufficient conditions that involve only the rates of individual nuisance function estimates and which imply our first stage rate assumptions. In particular, we are interested in formulating consistency rate conditions for each nuisance function $h_i$ with respect to an $\mathcal{L}^p$ norm, 
\begin{equation*}
\|\hat{h}_i - h_{0,i}\|_{p} = \E[ \|\hat{h}_i(X) - h_{0,i}(X)\|_p^{p} \mid\hat{h}]^{1/p}.
\end{equation*}
We will make use of these sufficient conditions when applying our main theorem to the partially linear regression model in \secref{plr-power}.

\begin{lemma}\label{thm:Sorth}
Let $k = \max_{a\in S}\|a\|_1$.  Then
\begin{itemize}
\item[(1)] \assumpref{first-stage} holds if any of the following holds $\forall \alpha \in \{ a \in \mathbb{N}^{\ell}: \|a\|_1 \leq k+1 \} \setminus S$:
\begin{align}
\label{eqn:suff-cond-1}
&\bullet\quad\textstyle\sqrt{n} \prod_{i=1}^\ell \|\hat{h}_i - h_{0,i}\|_{2\|\alpha\|_1}^{\alpha_i} \toprob 0 \\
\label{eqn:suff-cond-2}
&\bullet\quad \forall i,\quad n^{\frac{1}{\kappa_i \|\alpha\|_1}}  \|\hat{h}_i - h_{0,i}\|_{2\|\alpha\|_1} \toprob 0\\
&  \qquad\text{for some }  \kappa_i \in (0,2]
\text{ where }
\textstyle\frac{1}{\|\alpha\|_1}\sum_{i=1}^\ell \frac{\alpha_i}{\kappa_i} \geq  \frac{1}{2}\notag\\
\label{eqn:suff-cond-3}
&\bullet\quad \forall i,\quad n^{\frac{1}{\kappa_i \|\alpha\|_1}}  \|\hat{h}_i - h_{0,i}\|_{2\|\alpha\|_1} \toprob 0\\
&\qquad\text{for some}\quad
\kappa_i \in (0,2].  \notag
\end{align}
\item[(2)] \assumpref{first-stage-consistency} holds if $\forall i$, $\|\hat{h}_i - h_{0,i}\|_{4k} \toprob 0$.
\end{itemize}
\end{lemma}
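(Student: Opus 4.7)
The plan is to reduce both parts to the generalized H\"older inequality $\E[\prod_{i\in J} Y_i] \le \prod_{i\in J} \E[Y_i^{p_i}]^{1/p_i}$ valid whenever $\sum_{i\in J} 1/p_i = 1$. I would take $Y_i = |\hat{h}_i(X)-h_{0,i}(X)|^{c\alpha_i}$ and pick the exponents $p_i$ so that every marginal moment produced by H\"older is exactly an $L^{4k}$ or $L^{2\|\alpha\|_1}$ norm, matching the quantities appearing in the conclusion. Indices with $\alpha_i=0$ contribute a trivial factor of $1$ and can be dropped from the product, so I restrict attention to $J=\{i:\alpha_i>0\}$.

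For part (2), fix a nonzero $\alpha\in S$, so $1\le\|\alpha\|_1\le k$. Apply H\"older on $J$ with weights $p_i = k/\alpha_i$; since $\sum_{i\in J} 1/p_i = \|\alpha\|_1/k \le 1$, append if needed a dummy constant factor $1$ with weight $p_0 = k/(k-\|\alpha\|_1)$ so that the weights sum to one. The resulting bound
$$\E\Big[\prod_i|\hat{h}_i(X)-h_{0,i}(X)|^{4\alpha_i}\,\Big|\,\hat{h}\Big] \;\le\; \prod_i\|\hat{h}_i-h_{0,i}\|_{4k}^{4\alpha_i}$$
is $o_P(1)$, because each factor on the right is $o_P(1)$ by hypothesis and the product involves a fixed finite number of factors.

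For part (1), fix $\alpha \in \{a\in\N^\ell:\|a\|_1\le k+1\}\setminus S$, so $\|\alpha\|_1\ge 1$. The analogous application of H\"older with $p_i = \|\alpha\|_1/\alpha_i$ (already satisfying $\sum_{i\in J}1/p_i = 1$), followed by a square root, yields
$$\sqrt{\E\Big[\prod_i|\hat{h}_i(X)-h_{0,i}(X)|^{2\alpha_i}\,\Big|\,\hat{h}\Big]} \;\le\; \prod_i\|\hat{h}_i-h_{0,i}\|_{2\|\alpha\|_1}^{\alpha_i}.$$
Multiplying by $\sqrt{n}$, sufficient condition \eqref{eqn:suff-cond-1} is precisely the hypothesis that the right-hand side is $o_P(1)$, so the required assumption follows at once. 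For \eqref{eqn:suff-cond-2}, I would write $\|\hat{h}_i-h_{0,i}\|_{2\|\alpha\|_1} = R_{i,n}\,n^{-1/(\kappa_i\|\alpha\|_1)}$ with $R_{i,n}\toprob 0$; substitution turns the right-hand side into $n^{1/2-\sum_i\alpha_i/(\kappa_i\|\alpha\|_1)}\prod_i R_{i,n}^{\alpha_i}$, the hypothesized sum inequality makes the $n$-exponent non-positive, and the $R_{i,n}^{\alpha_i}$ factors drive the product to zero in probability. Condition \eqref{eqn:suff-cond-3} then follows from \eqref{eqn:suff-cond-2}, because $\kappa_i\in(0,2]$ automatically forces $\sum_i\alpha_i/(\kappa_i\|\alpha\|_1)\ge\sum_i\alpha_i/(2\|\alpha\|_1) = 1/2$.

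There is no real technical obstacle; the whole argument is a bookkeeping exercise with H\"older. The points requiring some attention are matching the H\"older exponents to the norm named in the conclusion, handling $\alpha_i=0$ coordinates by omission, and in part (2) inserting the dummy factor of $1$ to absorb the slack $1-\|\alpha\|_1/k$ when the inequality $\|\alpha\|_1\le k$ is strict.
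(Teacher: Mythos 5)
Your proof is correct and follows essentially the same route as the paper: a generalized H\"older inequality decouples the product of nuisance errors into individual $L^{2\|\alpha\|_1}$ (resp.\ $L^{4k}$) norms, after which the rate conditions follow from the same exponent bookkeeping (including the observation that $\kappa_i \le 2$ makes the sum condition in \eqref{eqn:suff-cond-2} automatic, which disposes of \eqref{eqn:suff-cond-3}). The only cosmetic difference is in part (2), where the paper first bounds by $L^{4\|\alpha\|_1}$ norms and then passes to $L^{4k}$ by monotonicity of $L^p$ norms, whereas you reach $L^{4k}$ in one step by padding the H\"older weights with a dummy factor.
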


A simpler description of the sufficient conditions arises under $k$-orthogonality (\defref{k-orthogonal}), since the set
$\{ a \in \mathbb{N}^{\ell}: \|a\|_1 \leq k+1 \} \setminus S_k$ contains only vectors $\alpha$ with $\|\alpha\|=k+1$.
\begin{corollary}\label{cor:korth}
If $S$ is the canonical $k$-orthogonality set $S_k$ (\defref{k-orthogonal}), then \assumpref{first-stage} holds whenever 
\begin{equation*}
\forall i,\quad n^{\frac{1}{2(k+1)}} \|\hat{h}_i - h_{0,i}\|_{2(k+1)} \toprob 0,
\end{equation*} 
and \assumpref{first-stage-consistency} holds whenever $\forall i$, $\|\hat{h}_i - h_{0,i}\|_{4k} \toprob 0$.
\end{corollary}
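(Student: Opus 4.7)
The plan is to derive the corollary by specializing Lemma~\ref{thm:Sorth} to $S = S_k$, which amounts to straightforward bookkeeping of exponents. First I would unpack the remainder index set: with $S = S_k = \{\alpha \in \mathbb{N}^\ell : \|\alpha\|_1 \leq k\}$, the set $\{a \in \mathbb{N}^\ell : \|a\|_1 \leq k+1\}\setminus S_k$ consists exactly of those $\alpha$ with $\|\alpha\|_1 = k+1$. In particular, $\|\alpha\|_1$ takes the single value $k+1$ across the entire remainder set; this is the collapse that makes the clean, unified hypothesis of the corollary possible.

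For part~(1), I would then apply sufficient condition~\eqref{eqn:suff-cond-2} of Lemma~\ref{thm:Sorth} with the uniform choice $\kappa_i = 2$ for every $i \in [\ell]$. The balance constraint $\frac{1}{\|\alpha\|_1}\sum_{i=1}^\ell \frac{\alpha_i}{\kappa_i} = \frac{1}{2\|\alpha\|_1}\sum_i \alpha_i = \frac{1}{2}$ is then satisfied with equality for every $\alpha$ in the remainder set, and the per-coordinate requirement $n^{1/(\kappa_i \|\alpha\|_1)}\|\hat h_i - h_{0,i}\|_{2\|\alpha\|_1}\toprob 0$ specializes to $n^{1/(2(k+1))}\|\hat h_i - h_{0,i}\|_{2(k+1)}\toprob 0$. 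Since this per-coordinate rate is the same for every $\alpha$ in the remainder set, the single hypothesis of the corollary simultaneously covers all required indices, yielding \assumpref{first-stage}. For part~(2), there is nothing further to do beyond invoking the second bullet of Lemma~\ref{thm:Sorth} directly: with $k = \max_{a \in S_k} \|a\|_1$ by definition, the assumption $\|\hat h_i - h_{0,i}\|_{4k}\toprob 0$ is exactly the condition that bullet requires in order to obtain \assumpref{first-stage-consistency}.

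I do not anticipate any real obstacle, since the content of the corollary is purely that the multi-index conditions of the lemma degenerate to a single uniform rate once the orthogonality set is the canonical symmetric set $S_k$. The only point to verify carefully is that the exponent $\frac{1}{\kappa_i\|\alpha\|_1}$ is independent of $\alpha$ on the remainder set; this holds because $\|\alpha\|_1$ is constant there, and it is what allows a single scalar rate $n^{1/(2(k+1))}$ to serve for all relevant multi-indices simultaneously.
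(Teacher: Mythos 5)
Your proposal is correct and follows the same route the paper intends: the remark preceding the corollary notes precisely that $\{a\in\mathbb{N}^\ell:\|a\|_1\le k+1\}\setminus S_k$ contains only multi-indices with $\|\alpha\|_1=k+1$, after which the corollary is an immediate specialization of Lemma~\ref{thm:Sorth} (your choice $\kappa_i=2$ in condition~\eqref{eqn:suff-cond-2} makes the balance constraint hold with equality and reproduces the stated rate, and part~(2) is the lemma's second bullet verbatim). No gaps.
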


In the case of first-order orthogonality, \corref{korth} requires that the first stage nuisance functions be estimated at a $o(n^{-1/4})$ rate with respect to the $\mathcal{L}^{4}$ norm. This is almost but not exactly the same as the condition presented in \cite{Chernozhukov2016}, which require $o(n^{-1/4})$ consistency rates with respect to the $\mathcal{L}^{2}$ norm. Ignoring the expectation over $X$, the two conditions are equivalent.\footnote{We would recover the exact condition in \cite{Chernozhukov2016} if we replaced \assumpref{reg_bound_diff} with the more stringent assumption that $\left|D^{\alpha}m(Z, \theta, h(X))\right| \leq \lambda_*$ a.s.} Moreover, in the case of $k$-orthogonality, \corref{korth} requires $o(n^{-1/2(k+1)})$ rates with respect to the $\mathcal{L}^{2(k+1)}$ norm. More generally, $S$-orthogonality allows for some functions to be estimated slower than others as we will see in the case of the sparse linear model.

\section{Second-order Orthogonality for Partially Linear Regression}\label{sec:partial_linear_model}

When second-order orthogonal moments satisfying Assumption~\ref{ass:main} are employed, \corref{korth} implies that an $o(n^{-1/6})$ rate
of nuisance parameter estimation is sufficient for $\sqrt{n}$-consistency of $\ssest$ and $\cfest$.
This asymptotic improvement over first-order orthogonality holds the promise of accommodating more complex and higher-dimensional nuisance parameters.
In this section, we detail both the limitations and the power of this approach in the partially linear regression (PLR) model setting popular in causal inference 
\citep[see, e.g,][]{Chernozhukov2016}. %

\begin{defn}[Partially Linear Regression (PLR)]~\label{def:plr}
In the partially linear regression model of observations $Z = (T,Y,X)$, 
$T \in \R$ represents a treatment or policy applied,
$Y \in \R$ represents an outcome of interest, 
and $X \in \R^p$ is a vector of associated covariates.  
These observations are related via the equations
\begin{align*}
&Y=\theta_0T+f_0(X)+\epsilon, \quad  \E[\epsilon \mid X, T]=0\quad a.s.\\
&T=g_0(X)+\eta,\quad \E[\eta \mid X]=0\quad a.s.
\end{align*}
where $\eta$ and $\epsilon$ represent unobserved noise variables with distributions independent of $(\theta_0, f_0, g_0)$.
\end{defn}

\subsection{Limitations: the Gaussian Treatment Barrier}
Our first result shows that, under the PLR model, if the treatment noise, $\eta$, is conditionally Gaussian given $X$, then no second-order orthogonal moment can satisfy Assumption~\ref{ass:main}, because every twice continuously differentiable $2$-orthogonal moment
has $\E\left[\nabla_\theta m(Z, \theta_0, h_0(X))\right] = 0$ (a violation of Assumption~\ref{ass:main}.3).
The proof in \secref{limitations-proof} relies on Stein's lemma.

\begin{theorem}\label{thm:limitations}
Under the PLR model, suppose that $\eta$ is conditionally Gaussian given $X$ (a.s.\ X).
If a twice differentiable moment function $m$ is second-order orthogonal with respect to the nuisance parameters $(f_0(X),g_0(X))$, then it must satisfy  $\E\left[\nabla_\theta m(Z, \theta_0, h_0(X))\right] = 0$ and hence violate Assumption~\ref{ass:main}.3.
Therefore no second-order orthogonal moment satisfies Assumption~\ref{ass:main}. 
\end{theorem}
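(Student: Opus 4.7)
The plan is to leverage Stein's lemma together with invariance identities obtained from reparametrizing the PLR data-generating process. Throughout, let $\partial_T, \partial_Y, \partial_\theta, \partial_{\gamma_f}, \partial_{\gamma_g}$ denote partial derivatives of $m$ in its corresponding arguments, evaluated at the truth $(Z, \theta_0, f_0(X), g_0(X))$.

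First, I would reduce $\E[\partial_\theta m \mid X]$ to an $\eta$-weighted expression. The PLR model is invariant under the reparametrization $(\theta_0, Y) \mapsto (\theta_0 + s, Y + sT)$: the shifted data $(T, Y+sT, X)$ is distributed exactly as a PLR sample with true effect $\theta_0 + s$ and unchanged nuisance $(f_0, g_0)$. Hence the moment condition yields $\E[m(T, Y+sT, X, \theta_0+s, f_0, g_0) \mid X] = 0$ for all $s$; differentiating at $s=0$ gives $\E[\partial_\theta m + T \partial_Y m \mid X] = 0$. The analogous $(f_0, Y) \mapsto (f_0 + c, Y + c)$ reparametrization gives $\E[\partial_Y m + \partial_{\gamma_f} m \mid X] = 0$, which combined with first-order orthogonality $\E[\partial_{\gamma_f} m \mid X] = 0$ produces $\E[\partial_Y m \mid X] = 0$. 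Substituting $T = g_0(X) + \eta$ and collapsing $g_0(X)\E[\partial_Y m \mid X] = 0$ yields
\[
\E[\partial_\theta m \mid X] = -\E[\eta\, \partial_Y m \mid X].
\]

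Next, I would invoke Stein's lemma. Since $\eta \mid X$ is Gaussian with variance $\sigma^2(X)$, Stein's identity gives $\E[\eta\, \partial_Y m \mid X] = \sigma^2(X) \E[\partial_\eta \partial_Y m \mid X]$, where by the chain rule through $T = g_0 + \eta$ and $Y = \theta_0 T + f_0 + \epsilon$ the operator $\partial_\eta$ acts on functions of $(T,Y,X)$ as $\partial_T + \theta_0 \partial_Y$. Consequently
\[
\E[\partial_\theta m \mid X] = -\sigma^2(X)\, \E\bigl[\partial_T \partial_Y m + \theta_0 \partial_Y^2 m \,\big|\, X\bigr].
\]

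The third step is to show the right-hand side vanishes (possibly only after averaging over $X$). I would combine two further reparametrization identities with second-order orthogonality: the second-order expansion of the $(f_0, Y)$-reparametrization gives $\E[\partial_Y^2 m + 2 \partial_Y \partial_{\gamma_f} m \mid X] = 0$ (after applying $\E[\partial_{\gamma_f}^2 m \mid X] = 0$), and the mixed $(f_0, g_0)$-reparametrization (which induces $T \mapsto T + c_2$ and $Y \mapsto Y + c_1 + \theta_0 c_2$) yields a cross-identity that, after zeroing out $\E[\partial_{\gamma_f}\partial_{\gamma_g} m \mid X] = 0$ by second-order orthogonality, rewrites $\E[\partial_T \partial_Y m + \theta_0 \partial_Y^2 m \mid X]$ as a combination of $\E[\partial_Y \partial_{\gamma_g} m \mid X]$ and $\E[\partial_\eta \partial_{\gamma_f} m \mid X]$. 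A further application of Stein's lemma converts the latter into $\sigma^{-2}(X) \E[\eta\, \partial_{\gamma_f} m \mid X]$; after taking the full expectation in $X$ and collecting terms, the contributions cancel to give $\E[\partial_\theta m] = 0$, which contradicts Assumption~\ref{ass:main}.3.

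The main obstacle is the bookkeeping in the final step: several interlocking Stein and perturbation identities must be combined without introducing uncontrolled higher-order terms. The structural heart of the argument is that Gaussianity of $\eta$, via Stein's lemma, is precisely what allows the conversion between $\eta$-weighted expectations and $\eta$-derivative expectations that is needed to link second-order orthogonality in the nuisance arguments to the vanishing of $\E[\partial_\theta m]$. Under any non-Gaussian $\eta \mid X$, the analogous computation would leave a residual proportional to a higher odd conditional central moment of $\eta \mid X$ (e.g.\ $\E[\eta^3 \mid X]$), which is forced to zero only under Gaussianity, explaining both the conclusion of the theorem and why the converse constructions of second-order orthogonal moments become possible as soon as $\eta$ is non-Gaussian.
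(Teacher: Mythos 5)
Your overall strategy coincides with the paper's: both reduce $\E[\partial_\theta m\mid X]$ to $-\E[\eta\,\partial_Y m\mid X]$ via the invariance identities plus first-order orthogonality, and both then use Stein's lemma to trade the factor $\eta$ for the derivative $\partial_\eta=\partial_T+\theta_0\partial_Y$. Your first two steps are correct. The gap is in your third step, which is precisely where second-order orthogonality has to do its work. The identities you actually write down there are the second-order Taylor expansions of the $0$-orthogonality condition in the $(f_0,f_0)$ and $(f_0,g_0)$ directions, which (after removing the pure nuisance second derivatives by $2$-orthogonality) give only the symmetrized combinations $\E[\partial_Y^2m+2\partial_Y\partial_{\gamma_f}m\mid X]=0$ and $\E[(\partial_Y+\partial_{\gamma_f})(\partial_T+\theta_0\partial_Y+\partial_{\gamma_g})m\mid X]=0$. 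Writing $a=\E[\partial_Y\partial_{\gamma_f}m\mid X]$, $b=\E[\partial_Y\partial_{\gamma_g}m\mid X]$, $c=\E[\partial_T\partial_{\gamma_f}m\mid X]$, these two equations only tell you that your target $\E[\partial_T\partial_Ym+\theta_0\partial_Y^2m\mid X]$ equals $-b-c-\theta_0 a$, with $a,b,c$ undetermined; no cancellation follows. Your proposed rescue — applying Stein's lemma a second time to rewrite $\E[\partial_\eta\partial_{\gamma_f}m\mid X]$ as $\sigma^{-2}(X)\E[\eta\,\partial_{\gamma_f}m\mid X]$ — is circular: first-order orthogonality gives $\E[\partial_{\gamma_f}m\mid X]=0$ but says nothing about $\E[\eta\,\partial_{\gamma_f}m\mid X]$, and the only route to that quantity is undoing the Stein step you just took.

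The missing ingredient is to differentiate the \emph{first-order} orthogonality identities themselves, not just the $0$-orthogonality identity. Since $\E[\partial_{\gamma_f}m\mid X]=0$ and $\E[\partial_{\gamma_g}m\mid X]=0$ hold for every admissible $(f_0,g_0)$, you may apply the perturbation operators $\partial_Y+\partial_{\gamma_f}$ and $\partial_T+\theta_0\partial_Y+\partial_{\gamma_g}$ inside these expectations as well. Combined with $2$-orthogonality ($\E[\partial_{\gamma_f}^2m\mid X]=\E[\partial_{\gamma_f}\partial_{\gamma_g}m\mid X]=0$) and equality of mixed partials, this yields $a=b=c=0$, hence $\E[\partial_Y^2m\mid X]=0$ and $\E[\partial_T\partial_Ym\mid X]=0$, and the cancellation happens conditionally on $X$ with no averaging needed. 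This is exactly how the paper closes the argument, so your proposal is not a different route but an incomplete version of the same one. Your closing intuition — that for non-Gaussian $\eta$ the computation leaves a residual governed by higher conditional moments of $\eta$ — is the right picture and is what drives the constructions in \thmref{h1} and \thmref{unknown2}.
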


In the following result, proved in \secref{lemma_degeneracy}, we establish that under mild conditions \assumpref{reg_full_rank} is necessary for the $\sqrt{n}$-consistency of $\ssest$ in the PLR model.

\begin{prop}\label{prop:deg}
Under the PLR model, suppose that $|\Theta| \geq 2$ and that the conditional distribution of $(\epsilon, \eta)$ given $X$ has full support on $\R^2$ (a.s.\ X).
Then no moment function $m$ simultaneously satisfies
\begin{enumerate}
\item Assumption~\ref{ass:main}, except for \assumpref{reg_full_rank},
\item $\E\left[\nabla_\theta m(Z, \theta_0, h_0(X))\right] = 0$, and
\item $\ssest-\theta_0 = O_P(1/\sqrt{n})$.
\end{enumerate} 
\end{prop}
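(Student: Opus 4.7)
The plan is a proof by contradiction. Suppose that some moment function $m$ simultaneously satisfies conditions (1), (2), and (3). The strategy is to match two incompatible asymptotic limits for $\sqrt{n}\,M_n(\theta_0)$, where $M_n(\theta) \defeq \tfrac{1}{n}\sum_{t=1}^n m(Z_t, \theta, \hat h(X_t))$, forcing $V \defeq \mathtt{Cov}(m(Z,\theta_0,h_0(X))) = 0$, and then derive a contradiction from the PLR full-support hypothesis.

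First I would exploit $J = 0$. Condition (3) forces $\ssest\toprob\theta_0$, so a mean value theorem applied to the Z-equation $0 = M_n(\ssest)$ produces some intermediate $\bar\theta_n$ with
\begin{equation*}
\sqrt{n}\,M_n(\theta_0) \;=\; -\nabla_\theta M_n(\bar\theta_n)\,\sqrt{n}\,(\ssest-\theta_0).
\end{equation*}
A uniform law of large numbers, justified by \assumpref{theta_dominated} and the consistency of $\hat h$ from \assumpref{first-stage-consistency}, combined with $\bar\theta_n\toprob\theta_0$, gives $\nabla_\theta M_n(\bar\theta_n)\toprob J = 0$ by (2). Paired with the $O_P(1)$ bound on $\sqrt{n}\,(\ssest-\theta_0)$ from (3), this yields $\sqrt{n}\,M_n(\theta_0)\toprob 0$.

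Second I would invoke the asymptotic-normality component of the proof of \thmref{orth}: the step that linearizes $\sqrt{n}\,M_n(\theta_0)$ around $h_0$ using $S$-orthogonality with the first-stage rates from \assumpref{first-stage} and then invokes the classical CLT for i.i.d.\ sums. Crucially, this step never invokes the invertibility of $J$, so it still delivers $\sqrt{n}\,M_n(\theta_0)\todist N(0,V)$ under our weakened version of \assumpref{main}. Matching the two limits forces $V = 0$, so $m(Z,\theta_0,h_0(X)) = 0$ almost surely.

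The main obstacle, and the final move, is converting this vanishing into a contradiction. Since $(\epsilon,\eta)\mid X$ has full support on $\R^2$ a.s., the affine bijection $(\epsilon,\eta)\mapsto(T,Y) = (g_0(X)+\eta,\theta_0 T + f_0(X)+\epsilon)$ transfers full support to $(T,Y)\mid X$; combined with continuity of $m$ (\assumpref{continuity}), the almost-sure vanishing upgrades to the pointwise identity $m(t,y,x,\theta_0,h_0(x)) = 0$ for every $(t,y)\in\R^2$ and a.e.\ $x$. To finish, I would pick $\theta_1\in\Theta\setminus\{\theta_0\}$ (which exists by $|\Theta|\geq 2$). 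Identifiability (\assumpref{identifiable}) requires $\phi(\theta_1) \defeq \E[m(Z,\theta_1,h_0(X))] \neq 0$. On the other hand, since $\phi(\theta_0) = 0$ and $\phi'(\theta_0) = J = 0$, a first-order Taylor expansion of $m$ in $\theta$, combined with the pointwise vanishing at $\theta_0$ and the PLR-specific parametrization of the data by $(\epsilon,\eta)$, cascades the degeneracy at $\theta_0$ into the vanishing of $\phi$ at $\theta_1$, contradicting identifiability. The delicate part is precisely this propagation of ``mean-zero at $\theta_0$'' to ``zero at $\theta_1$'' through the PLR data-generating mechanism, for which the full-support hypothesis is essential.
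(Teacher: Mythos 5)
Your argument tracks the paper's proof closely through its first three stages: the mean-value identity $\sqrt{n}\,M_n(\theta_0) = -\nabla_\theta M_n(\bar\theta_n)\,\sqrt{n}(\ssest-\theta_0)$, the observation that the asymptotic normality of $\sqrt{n}\,M_n(\theta_0)$ and the convergence $\nabla_\theta M_n(\bar\theta_n)\toprob J$ established in the proof of Theorem~\ref{thm:orth} never invoke the invertibility of $J$, the resulting conclusion $V=0$ and hence $m(Z,\theta_0,h_0(X))=0$ a.s., and the upgrade to the pointwise identity $m(t,y,\theta_0,h_0(x))=0$ for all $(t,y)\in\R^2$ via continuity and the full-support hypothesis. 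All of this is correct and coincides with the paper's argument.

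The final step, however, contains a genuine gap. Knowing $\phi(\theta_0)=0$, $\phi'(\theta_0)=J=0$, and even that $z\mapsto m(z,\theta_0,h_0(x))$ vanishes identically tells you nothing about $m(\cdot,\theta_1,\cdot)$ at a fixed $\theta_1\neq\theta_0$: a first-order Taylor expansion controls $\phi$ only in an infinitesimal neighborhood of $\theta_0$, and even there it gives $\phi(\theta_1)=o(|\theta_1-\theta_0|)$ rather than exact vanishing (consider $\phi(\theta)=(\theta-\theta_0)^2$, which is compatible with everything you have derived at $\theta_0$). No manipulation of the PLR parametrization at the single truth $\theta_0$ can force $\phi(\theta_1)=0$. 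The paper closes the argument by a different device: the hypotheses of the proposition are required of $m$ for \emph{every} admissible true parameter, so the entire variance-zero argument is re-run with $\theta_1$ (and the same nuisance functions) playing the role of the truth. Since the marginal distribution of $X$ does not depend on $\theta$, this yields $m(a,b,\theta_1,f_0(x),g_0(x),h_0(x))=0$ for all $(a,b)\in\R^2$ and a.e.\ $x$, and a function that vanishes identically in its data arguments also has zero expectation under the \emph{actual} data distribution, contradicting identifiability. You need to add this ``re-run the argument with $\theta_1$ as the truth'' step, together with the implicit quantification of the hypotheses over all true parameters; the propagation you sketch cannot be carried out from the degeneracy at $\theta_0$ alone.
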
 

\subsection{Power: Second-order Orthogonality under Non-Gaussian Treatment}
\label{sec:plr-power}
We next show that, inversely, second-order orthogonal moments are available whenever the conditional distribution of treatment noise given $X$ is not a.s.\ Gaussian.
Our proofs rely on a standard characterization of a Gaussian distribution, proved in \secref{proof-MGFgaussian}:
\begin{lemma}\label{MGFgaussian}
If $\E[\eta|X]=0$ a.s., the conditional distribution of $\eta$ given $X$ is a.s.\ Gaussian if and only if for all $r \in \mathbb{N},r \geq 2$ it holds that, $\E \left[ \eta^{r+1} |X\right] = r \E[\eta^2|X]\E \left[ \eta^{r-1} |X\right] $ a.s.
\end{lemma}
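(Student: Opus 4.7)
The plan is to prove the two directions separately, in each case working conditionally on $X$ and reducing to a statement about a single centered random variable with conditional variance $\sigma^2(X) \defeq \E[\eta^2 \mid X]$.

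For the ``only if'' direction, assume $\eta \mid X \sim N(0,\sigma^2(X))$ a.s. The key tool is a conditional Stein's lemma: for any polynomial $g$, $\E[\eta\, g(\eta) \mid X] = \sigma^2(X)\, \E[g'(\eta) \mid X]$ a.s. Applying this with $g(u) = u^{r-1}$ yields
\[
\E[\eta^{r} \mid X] = (r-1)\,\sigma^2(X)\, \E[\eta^{r-2} \mid X],
\]
which, after re-indexing $r \mapsto r+1$, is exactly the claimed recursion. All integrals are finite since Gaussian variables have moments of every order.

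For the ``if'' direction, assume the recursion. Using the base cases $\E[\eta \mid X] = 0$ and $\E[\eta^2 \mid X] = \sigma^2(X)$ and iterating the recursion, a straightforward induction on $r$ gives, almost surely,
\[
\E[\eta^{2k+1} \mid X] = 0 \qquad \text{and} \qquad \E[\eta^{2k} \mid X] = (2k-1)!!\, \sigma^{2k}(X), \quad k \in \N.
\]
These are precisely the moments of $N(0,\sigma^2(X))$, so it remains to invoke the fact that a Gaussian distribution is uniquely determined by its moment sequence to conclude that $\eta \mid X \sim N(0,\sigma^2(X))$ a.s.

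The main subtlety is in the ``if'' direction: the recursion alone only constrains moments, and in general two distinct distributions can share all their moments. The required step is therefore a moment-determinacy argument, which is immediate for Gaussian moments via Carleman's criterion --- Stirling's approximation gives $((2k-1)!!\,\sigma^{2k}(X))^{1/(2k)} = \Theta(\sqrt{k}\,\sigma(X))$, so the series $\sum_k \bigl((2k-1)!!\,\sigma^{2k}(X)\bigr)^{-1/(2k)}$ diverges whenever $\sigma^2(X) > 0$. The null event $\{\sigma^2(X) = 0\}$ is handled trivially, since there $\eta = 0$ a.s.\ conditionally on $X$, i.e., a degenerate $N(0,0)$.
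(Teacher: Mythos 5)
Your proof is correct and rests on the same essential idea as the paper's (one-sentence) proof: the Gaussian distribution is uniquely determined by its moments, so matching the moment recursion forces the conditional law to be $N(0,\sigma^2(X))$. You justify determinacy via Carleman's criterion where the paper cites finiteness of the characteristic/moment generating function and L\'evy inversion, and you make explicit the Stein-identity computation and the inductive moment calculation that the paper leaves implicit --- your write-up is a strictly more complete version of the same argument.
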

We will focus on estimating the nuisance functions $q_0=f_0+\theta_0g_0$ and $g_0$ instead of the nuisance functions $f_0$ and $g_0$,
since the former task is more practical in many applications.
This is because estimating $q_0$ can be accomplished by carrying out an arbitrary non-parametric regression of $Y$ onto $X$. In contrast, estimating $f_0$ typically involves regressing $Y$ onto $(X,T)$, where $T$ is constrained to enter linearly. The latter might be cumbersome when using arbitrary ML regression procedures.

Our first result, established in \secref{proof-h1}, produces finite-variance 2-orthogonal moments when an appropriate moment of the treatment noise $\eta$ is known.
\begin{theorem}\label{thm:h1}
Under the PLR model, suppose that we know $\E[\eta^r|X]$ and that $\E[\eta^{r+1}]\not = r\E[\E[\eta^2|X]\E[\eta^{r-1}|X]]$ for some $r\in\N$, 
so that the conditional distribution of $\eta$ given $X$ is \textbf{not} a.s.\ Gaussian. 
Then the moments
\begin{align*}
m&\left(Z, \theta, q(X), g(X),\mu_{r-1}(X)\right) \\
\defeq & \left(Y-q(X)-\theta \left(T-g(X)\right) \right)\\
&\times \left(\left(T-g(X)\right)^r-\E[\eta^r|X]- r\left(T-g(X)\right)\mu_{r-1}(X)\right)
\end{align*} 
satisfy each of the following properties
\begin{itemize}
\item \tbf{\emph{2-orthogonality}} with respect to the nuisance $h_0(X) = (q_0(X),g_0(X),\E[\eta^{r-1}|X])$,
\item \tbf{\emph{Identifiability:}} When $\theta \neq \theta_0,$ $$\E[m(Z,\theta, q_0(X),g_0(X),\E[\eta^{r-1}|X])] \neq 0,$$
\item \tbf{\emph{Non-degeneracy:}} $$\E[\nabla_{\theta}m\left(Z, \theta_0, q_0(X),g_0(X),\E[\eta^{r-1}|X]\right)] \not =0,$$
\item \tbf{\emph{Smoothness:}} $\grad^k m$ is continuous for all $k\in\N$.
\end{itemize} 
\end{theorem}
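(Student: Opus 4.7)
The plan is to verify each of the four claimed properties by direct calculation after substituting the true nuisance, exploiting the structural simplifications $Y - q_0(X) - \theta_0(T - g_0(X)) = \epsilon$ and $T - g_0(X) = \eta$. Writing $m = A \cdot B$ with $A = Y - q - \theta(T-g)$ and $B = (T-g)^r - \E[\eta^r|X] - r(T-g)\mu_{r-1}$, at the true nuisance one gets $A_0 = \epsilon$ and $B_0 = \eta^r - \E[\eta^r|X] - r\eta\,\E[\eta^{r-1}|X]$. The key observations are: (a) by the PLR assumption $\E[\epsilon|X,T]=0$, any term of the form $\epsilon \cdot \phi(X,T)$ has conditional mean zero given $X$; and (b) $\E[B_0|X] = 0$ using $\E[\eta|X] = 0$ and the definition of $\E[\eta^{r-1}|X]$ as the third nuisance component. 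In particular, the population moment identity $\E[m(Z,\theta_0,h_0(X))|X] = \E[\epsilon B_0|X] = 0$ follows from (a).

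For 2-orthogonality, I would enumerate the multi-indices $\alpha \in \N^3$ with $1 \le \|\alpha\|_1 \le 2$ and compute $D^\alpha m$. Since $A$ is linear in $\gamma=(q,g,\mu)$ and $B$ is a polynomial in $\gamma_2$ and linear in $\gamma_3$, the Leibniz expansion has only a handful of nonzero terms per $\alpha$. I would then group the resulting terms into two types: those carrying $\epsilon$ as a factor, which vanish in conditional expectation by (a); and those that are pure functions of $(\eta,X)$, which reduce to linear combinations of $\eta^{r-1} - \E[\eta^{r-1}|X]$, $\eta^r - \E[\eta^r|X]$, and $\eta$, each of conditional mean zero given $X$. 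The only subtle case is $\partial^2_{\gamma_2\gamma_2} m$, which produces both a $\theta_0 \partial_{\gamma_2}B_0$ piece and an $\epsilon\, r(r-1)\eta^{r-2}$ piece, both of which are handled by the two principles above.

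For identifiability and non-degeneracy, I would substitute general $\theta$ into $A$ to obtain $A = (\theta_0 - \theta)\eta + \epsilon$, so $\E[m] = (\theta_0 - \theta)\E[\eta B_0]$ (the $\epsilon B_0$ term again drops by (a)). A one-line computation gives
\[
\E[\eta B_0] = \E[\eta^{r+1}] - \E[\E[\eta|X]\E[\eta^r|X]] - r\E[\E[\eta^2|X]\E[\eta^{r-1}|X]],
\]
and the middle term vanishes because $\E[\eta|X]=0$, leaving exactly the quantity the theorem's hypothesis asserts to be nonzero. Non-degeneracy is then obtained essentially for free: $\nabla_\theta m = -(T-g)B$, so $\E[\nabla_\theta m(Z,\theta_0,h_0(X))] = -\E[\eta B_0]$ is the same nonzero quantity. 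Smoothness is immediate because $m$ is a polynomial in $(T,Y,\theta,\gamma)$ with coefficients depending measurably on $X$ (through $\E[\eta^r|X]$). I expect no real obstacle beyond careful bookkeeping of the mixed partials; the substantive content is already baked into the construction, where the correction $-r(T-g)\mu_{r-1}$ is precisely what kills the conditional expectation of $\partial_{\gamma_2} m|_{h_0}$ and simultaneously ensures that all higher mixed derivatives of order two continue to pair with either $\epsilon$ or a conditionally-centered function of $\eta$.
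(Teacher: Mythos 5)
Your proposal is correct and follows essentially the same route as the paper's proof: both verify the four properties by direct computation at the true nuisance, reducing everything to the two facts $\E[\epsilon\mid X,T]=0$ and $\E[\eta\mid X]=0$ (hence $\E[B_0\mid X]=0$), and both obtain identifiability and non-degeneracy from the same quantity $\E[\eta B_0]=\E[\eta^{r+1}]-r\E[\E[\eta^2\mid X]\E[\eta^{r-1}\mid X]]$. Your up-front factorization $m=A\cdot B$ and the two stated principles merely reorganize the paper's term-by-term enumeration of the first and second partials; no substantive difference.
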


Our next result, proved in \secref{proof-unknown2}, addresses the more realistic setting in which we do not have exact knowledge of $\E \left[ \eta^{r} |X\right]$. We introduce an additional nuisance parameter and still satisfy an orthogonality condition with respect to these parameters. 

\begin{theorem}\label{thm:unknown2}
Under the PLR model, suppose that $\E[\eta^{r+1}]\not = r\E[\E[\eta^2|X]\E[\eta^{r-1}|X]]$ for $r\in\N$,
so that the conditional distribution of $\eta$ given $X$ is \textbf{not} a.s.\ Gaussian. 
Then, if 
\[
S\defeq\{\alpha \in \mathbb{N}^4 : \|\alpha\|_1 \leq 2\}\setminus \{(1,0,0,1),(0,1,0,1)\},
\] the moments
\begin{align*}
m&\left(Z, \theta, q(X),g(X),\mu_{r-1}(X), \mu_{r}(X)\right)\\
\defeq &\left(Y-q(X)-\theta \left(T-g(X)\right) \right)  \\
& \times \left(\left(T-g(X)\right)^r-\mu_{r}(X)- r\left(T-g(X)\right)\mu_{r-1}(X)\right)
\end{align*} 
satisfy each of the following properties
\begin{itemize}
\item \tbf{\emph{$S$-orthogonality}} with respect to the nuisance $h_0(X) = (q_0(X),g_0(X),\E[\eta^{r-1}|X], \E[\eta^{r}|X])$,
\item \tbf{\emph{Identifiability:}} When $\theta \neq \theta_0$, $$\E[m(Z,\theta, q_0(X),g_0(X),\E[\eta^{r-1}|X], \E[\eta^{r}|X])] \neq 0,$$ 
\item \tbf{\emph{Non-degeneracy:}} \begin{align*}&\E[\nabla_{\theta}m\left(Z, \theta_0, q_0(X),g_0(X),\E[\eta^{r-1}|X], \E[\eta^{r}|X]\right)]\\
& \not =0,\end{align*}
\item \tbf{\emph{Smoothness:}} $\grad^k m$ continuous for all $k\in\N$.
\end{itemize}

In words, $S$-orthogonality here means that $m$ satisfies the orthogonality condition for all mixed derivatives of total order at most 2 with respect to the four nuisance parameters except the mixed derivatives with respect to $(q_0(X),\E[\eta^{r}|X])$ and $(g_0(X),\E[\eta^{r}|X])$. 
\end{theorem}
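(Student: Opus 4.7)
The plan is to verify each of the four stated properties by direct computation after substituting the true nuisance, reducing $m$ to a function of $(\epsilon,\eta,X)$ alone and then exploiting two core identities: $\E[\epsilon\mid X,\eta]=0$ (inherited from $\E[\epsilon\mid X,T]=0$ via $T=g_0(X)+\eta$) and $\E[\eta\mid X]=0$. Writing $e\defeq Y-q-\theta(T-g)$ and $w\defeq (T-g)^r-\mu_r-r(T-g)\mu_{r-1}$ so that $m=e\cdot w$, evaluation at the true nuisance gives $e_0=\epsilon$ and $w_0=\eta^r-\E[\eta^r\mid X]-r\eta\,\E[\eta^{r-1}\mid X]$, with $\E[w_0\mid X]=0$ via $\E[\eta\mid X]=0$. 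Smoothness is immediate because $m$ is a polynomial in $(\theta,q,g,\mu_{r-1},\mu_r)$ for integer $r$.

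For $S$-orthogonality I would first exploit structural linearity: $m$ is separately affine in $q$, in $\mu_{r-1}$, and in $\mu_r$, so $D^\alpha m\equiv 0$ whenever any of $\alpha_1,\alpha_3,\alpha_4\geq 2$ or whenever $\alpha_3,\alpha_4\geq 1$ simultaneously. This kills $(2,0,0,0),(0,0,2,0),(0,0,0,2),(0,0,1,1)$ for free. For each remaining $\alpha\in S$, a product-rule computation expresses $D^\alpha m$ at the truth as a sum of terms of the form $\epsilon\cdot\phi(X,\eta)$ --- conditionally mean zero by $\E[\epsilon\mid X,\eta]=0$ --- plus polynomial-in-$\eta$ terms whose conditional means cancel via $\E[\eta^j\mid X]-\E[\eta^j\mid X]=0$ or $\E[\eta\mid X]=0$. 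The genuinely nontrivial case is $\alpha=(0,2,0,0)$, where both $e$ and $w$ carry nontrivial second $g$-dependence and one must separate a $2\theta_0 r(-\eta^{r-1}+\E[\eta^{r-1}\mid X])$ piece from an $r(r-1)\epsilon\eta^{r-2}$ piece, each vanishing in conditional expectation for its own reason. The two excluded $\alpha\in\{(1,0,0,1),(0,1,0,1)\}$ evaluate to the nonzero constants $1$ and $-\theta_0$ respectively, which is precisely why they must be dropped from $S$.

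For identifiability, expanding $e=\epsilon-(\theta-\theta_0)\eta$ at the true nuisance makes the $\epsilon\cdot w_0$ term vanish in expectation and leaves
\[
\E[m(Z,\theta,h_0)] \;=\; -(\theta-\theta_0)\,\bigl(\E[\eta^{r+1}]-r\,\E[\E[\eta^2\mid X]\,\E[\eta^{r-1}\mid X]]\bigr),
\]
which is nonzero by the hypothesized failure of the moment identity from \lemref{MGFgaussian}. Non-degeneracy reduces to the same computation: $\nabla_\theta m=-(T-g)w$ gives $\E[\nabla_\theta m(Z,\theta_0,h_0)]=-\E[\eta w_0]$, equal to the same nonzero quantity up to sign. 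The main obstacle is purely the bookkeeping in the $S$-orthogonality step: the four-coordinate product rule on $e\cdot w$ produces many cross-terms, and the linearity observations above are what reduce verification to the short list $\{(0,0,0,0),(1,0,0,0),(0,1,0,0),(0,0,1,0),(0,0,0,1),(1,1,0,0),(1,0,1,0),(0,1,1,0),(0,2,0,0)\}$, each of which is a one-line check using the two core identities.
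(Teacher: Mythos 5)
Your proposal is correct and follows essentially the same route as the paper: direct term-by-term verification that each $D^\alpha m$ in $S$ is conditionally mean zero using $\E[\epsilon\mid X,T]=0$ and $\E[\eta\mid X]=0$, together with the same computations for identifiability and non-degeneracy reducing to $\E[\eta^{r+1}]-r\,\E[\E[\eta^2\mid X]\E[\eta^{r-1}\mid X]]\neq 0$. Your multilinearity shortcut and the explicit evaluation of the two excluded derivatives (to $1$ and $-\theta_0$) are nice organizational touches but not a different method.
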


\subsection{Application to High-dimensional Linear Nuisance Functions}\label{sec:sparse_linear_model}
We now consider deploying the PLR model in the high-dimensional linear regression setting, where
$f_0(X) = \inner{X}{\beta_0}$ and $g_0(X) = \inner{X}{\gamma_0}$ for two $s$-sparse vectors $\beta_0, \gamma_0 \in \mathbb{R}^p$, 
$p$ tends to infinity as $n\to \infty$,
and $(\eta, \epsilon, X)$ are mutually independent.
Define $q_0 = \theta_0 \beta_0 + \gamma_0$.
In this high-dimensional regression setting, \citet[Rem. 4.3]{Chernozhukov2016} showed that two-stage estimation with first-order orthogonal moments 
\balignt\label{eqn:first-order-moments}
&m\left(Z, \theta, \inner{X}{q},\inner{X}{\gamma}\right)=\\ \notag
&\left(Y-\inner{X}{q}-\theta \left(T-\inner{X}{\gamma}\right) \right)\left(T-\inner{X}{\gamma}\right)
\ealignt
and Lasso estimates of the nuisance provides a $\sqrt{n}$-asymptotically normal estimator of $\theta_0$ when $s = o({n^{\frac{1}{2}}}{/\log p})$. 
Our next result, established in \appref{proof-sparsethm}, shows that we can accommodate  $s = o({n^{\frac{2}{3}}}{/\log p})$ with an explicit set of higher-order orthogonal moments.
\begin{theorem}\label{thm:sparsethm} 
In the high-dimensional linear regression setting, suppose that either $\mathbb{E}[\eta^3] \not = 0$ (non-zero skewness) or $\mathbb{E}[\eta^4] \not = 3\mathbb{E}[\eta^2]^2$ (excess kurtosis),
that $X$ has i.i.d.\ mean-zero standard Gaussian entries,
that $\epsilon$ and $\eta$ are almost surely bounded by the known value $C$,
and that $\theta_0 \in [-M,M]$ for known $M$.
If 
$
s=o({n^{2/3}/}{\log p}),
$
and in the first stage of estimation we
\begin{itemize}
\item[(a)] create estimates $\hat{q},\hat{\gamma}$ of $q_0,\gamma_0$ via Lasso regression of $Y$ on $X$ and $T$ on $X$ respectively, with regularization parameter $\lambda_n=2CM\sqrt{3\log(p)/n}$ and
\item[(b)] estimate $\mathbb{E}[\eta^2]$ and $\mathbb{E}[\eta^3]$ using $\hat{\eta}_t \defeq T_t' - \inner{X_t'}{\hat{\gamma}}$, 
\balignst
 &\hat{\mu}_{2} = \frac{1}{n}\sum_{t=1}^n\hat{\eta}_t^2,
\text{ and }
\hat{\mu}_{3} = \frac{1}{n}\sum_{t=1}^n(\hat{\eta}_t^3 
- 3\hat{\mu}_{2}\hat{\eta}_t),
\ealignst
for $(T_t', X_t')_{t=1}^n$ an i.i.d. sample independent of $\hat{\gamma}$,
\end{itemize} then, using the moments $m$ of \thmref{unknown2} with $r=2$ in the case of non-zero skewness or $r=3$ in the case of excess kurtosis, $\ssest$ and $\cfest$ are $\sqrt{n}$-asymptotically normal estimators of $\theta_0$. 
\end{theorem}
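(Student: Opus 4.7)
The plan is to deduce the $\sqrt n$-asymptotic normality of $\ssest$ and $\cfest$ by verifying the hypotheses of \thmref{orth} and \thmref{consistency} for the $S$-orthogonal moment supplied by \thmref{unknown2}, taking $r=2$ when $\E[\eta^3]\neq 0$ and $r=3$ when $\E[\eta^4]\neq 3\E[\eta^2]^2$. That theorem already delivers \assumpref{orthogonality}, \assumpref{identifiable}, \assumpref{reg_full_rank} and \assumpref{continuity}, so what remains is to (i) obtain $\mathcal{L}^p$-rates for the four first-stage estimates, (ii) use these to verify \assumpref{first-stage-consistency} and \assumpref{first-stage}, (iii) check the regularity bounds in \assumpref{reg_moments}, and (iv) invoke \thmref{consistency} via the compactness of $\Theta=[-M,M]$ to conclude consistency of $\ssest$ and $\cfest$.

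For the Lasso stage I would appeal to the standard prediction-norm guarantee under Gaussian design and sub-Gaussian noise: with the stated penalty $\lambda_n=2CM\sqrt{3\log p/n}$, both $\|\hat\gamma-\gamma_0\|_2$ and $\|\hat q-q_0\|_2$ are $O_P(\sqrt{s\log p/n})$. Because $X$ has i.i.d.\ $\mathcal{N}(0,1)$ entries, $\inner{X}{\hat\gamma-\gamma_0}$ is, conditionally on the training sample, a centered Gaussian with variance $\|\hat\gamma-\gamma_0\|_2^2$, so every $\mathcal{L}^p$-norm of $\hat g-g_0$ is of the same order as the $\mathcal{L}^2$-norm (and similarly for $\hat q-q_0$). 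For the scalar estimators $\hat\mu_2,\hat\mu_3$ of $\E[\eta^2],\E[\eta^3]$, I would expand $\hat\eta_t=\eta_t+\inner{X'_t}{\gamma_0-\hat\gamma}$, use independence of $(T'_t,X'_t)$ from $\hat\gamma$ together with $\eta\perp X$ and the vanishing of odd Gaussian moments to cancel cross terms, and collect an $O_P(n^{-1/2})$ CLT fluctuation plus an $O_P(\|\gamma_0-\hat\gamma\|_2^2)=O_P(s\log p/n)$ bias; the $-3\hat\mu_2\hat\eta_t$ correction in $\hat\mu_3$ is what ensures no slower contribution appears, yielding $|\hat\mu_r-\E[\eta^r]|=O_P(n^{-1/2}+s\log p/n)$ for $r\in\{2,3\}$.

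With these rates, the central computation is \assumpref{first-stage} for every $\alpha\in\mathbb{N}^4$ with $\|\alpha\|_1\leq 3$ lying outside $S$, namely the two excluded mixed derivatives $(1,0,0,1),(0,1,0,1)$ and the vectors of $\ell_1$-norm exactly three. For each excluded derivative, the relevant quantity factors into $n^{1/2}|\hat\mu_r-\E[\eta^r]|\cdot\|\hat q-q_0\|_{\mathcal{L}^2}$, which evaluates to $\sqrt{s\log p/n}+(s\log p)^{3/2}/n$ and tends to zero iff $s\log p=o(n^{2/3})$. For pure $(q,g)$ derivatives of total order three, joint Gaussianity of $(\inner{X}{\hat q-q_0},\inner{X}{\hat g-g_0})$ conditional on the training data together with Cauchy-Schwarz give $\sqrt{\E[|\hat q-q_0|^{2\alpha_1}|\hat g-g_0|^{2\alpha_2}\mid\hat h]}=O_P((s\log p/n)^{3/2})$, and multiplication by $n^{1/2}$ again yields $O_P((s\log p)^{3/2}/n)\toprob 0$ under the same threshold; mixed derivatives involving $\mu$-components are strictly faster. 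This is exactly what pins down the $s=o(n^{2/3}/\log p)$ regime.

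The remaining pieces are routine. \assumpref{first-stage-consistency} follows from the same Gaussian-moment equivalence since $\|\hat h_i-h_{0,i}\|_8\toprob 0$ under the given sparsity. \assumpref{reg_moments} reduces, via the polynomial form of $m$ and boundedness of $\epsilon,\eta,\theta_0$, to uniform bounds on polynomial moments of linear functionals of $X$ and of $(\eta,\epsilon)$ over the neighborhoods $\mathcal{B}_{h_0,r}$, all finite by Gaussianity of $X$. Consistency of $\ssest$ and $\cfest$ is supplied by \thmref{consistency} using the compactness of $\Theta=[-M,M]$ (the compactness condition of Assumption~\ref{ass:consistency}). Combining these with \thmref{orth} yields the asymptotic normality claim. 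The main technical obstacle is the rate analysis of $\hat\mu_3$: its cubic expansion generates several cross-terms of mixed order, and verifying that the $-3\hat\mu_2\hat\eta_t$ correction cleanly leaves only the $O_P(n^{-1/2}+s\log p/n)$ residual, rather than a slower term that would push the sparsity threshold above $n^{2/3}$, is the delicate bookkeeping step.
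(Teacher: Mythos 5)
Your proposal is correct and follows essentially the same route as the paper: verify the hypotheses of \thmref{orth} and \thmref{consistency} for the $S$-orthogonal moment of \thmref{unknown2}, obtain $\ell_2$ Lasso rates $O_P(\sqrt{s\log p/n})$ and upgrade them to all $\mathcal{L}^p$ norms via Gaussianity of $\inner{X}{\cdot}$, establish $|\hat{\mu}_3-\E[\eta^3]|=O_P(n^{-1/2})$ up to a bias controlled by moments of $\delta=\inner{X}{\gamma_0-\hat\gamma}$, and check the rate conditions of Lemma~\ref{thm:Sorth} case by case, which pins down $s=o(n^{2/3}/\log p)$ exactly as in the paper (your bias bound $O_P(s\log p/n)$ for $\hat\mu_2$ is in fact slightly sharper than the paper's $O_P(n^{-1/3})$; both suffice). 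The one genuine divergence is the consistency step: you invoke the compactness condition with $\Theta=[-M,M]$, whereas the paper takes $\Theta=\R$ and verifies the convexity condition of \assumpargref{consistency}{convexity}, using that $m$ is linear in $\theta$ with asymptotically sign-definite slope $3\E[\eta^2]^2-\E[\eta^4]$. Both options are sanctioned by Assumption~\ref{ass:consistency}; the paper's choice has the minor advantage that an exact root of the linear empirical moment equation is guaranteed to exist on $\R$, while on a compact $\Theta$ one must interpret $\ssest$ as an approximate solution in the usual extremum-estimator sense.
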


\section{Experiments}\label{sec:exper}
\label{sec:monte_carlo}
\label{sec:varying_sparsity}

We perform an experimental analysis of the second order orthogonal estimator of \thmref{sparsethm} with $r=3$ for the case of estimating treatment effects in the PLR model with high-dimensional sparse linear nuisance functions. We compare our estimator with the double ML estimator (labeled `dml' in our figures) based on the 
first-order orthogonal moments \eqnref{first-order-moments} of \citep{Chernozhukov2016}. 
Our experiments are designed to simulate demand estimation from pricing and purchase data, where non-Gaussian treatment residuals are standard.
Here, our covariates $X$ correspond to all collected variables that may affect a pricing policy. 
A typical randomized experiment in a pricing policy takes the form of random discounts from a baseline price as a company offers random discounts to customers periodically to gauge demand level. In this case, the treatment residual  -- the unexplained fluctuation in price -- is decidedly non-Gaussian and specifically follows a discrete distribution over a small number of price points.  
Python code recreating all experiments is available at \url{https://github.com/IliasZadik/double_orthogonal_ml}.

\begin{figure*}[htpb]
\centering
\includegraphics[scale=.455]{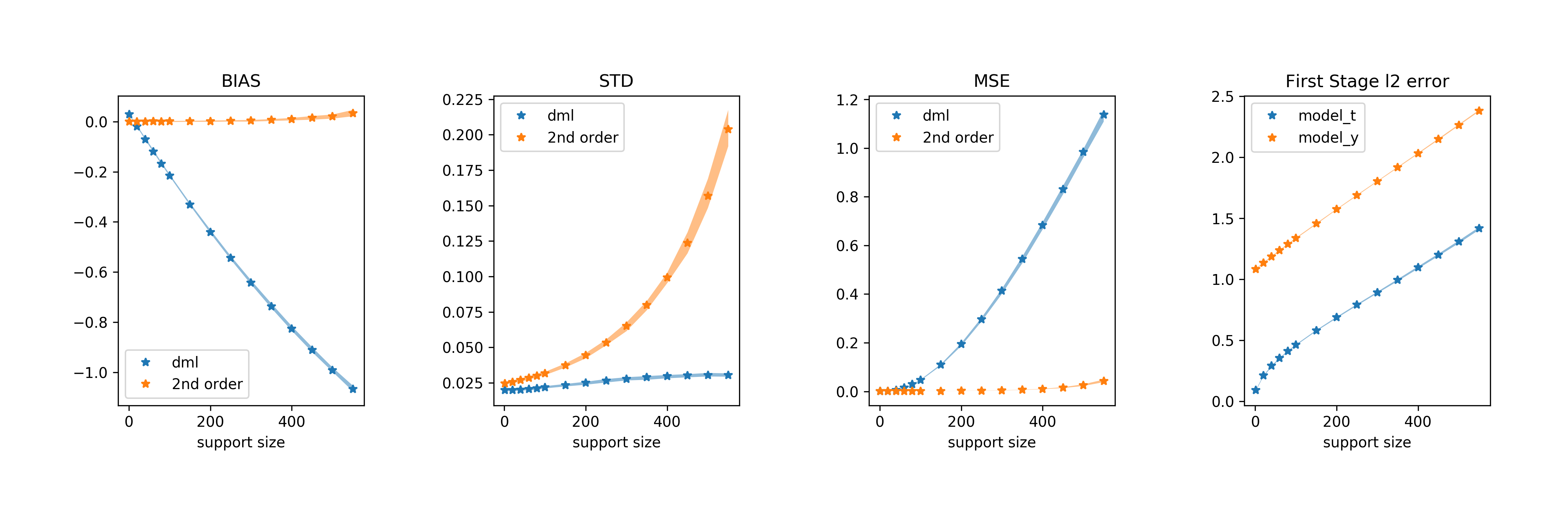}
\caption{\small{Comparison of estimates $\cfest$ based on orthogonal moments and second order orthogonal moments under the PLR model as a function of the number of non-zero coefficients in the nuisance vectors $\gamma_0$ and $\beta_0$.  See \secref{varying_sparsity} for more details. The parameters used for this figure were $n=5000$, $p=1000$, $\sigma_{\epsilon}=1$}. The fourth figure displays the $\ell_2$ error in the coefficients discovered by the first stage estimates for each of the nuisance functions: model\_t is the model for $\E[T|X]$ and model\_y is the model for $\E[Y|X]$.}
\label{fig:varying_support_multi_instance}
\end{figure*}

\paragraph{Experiment Specification} We generated $n$ independent replicates of outcome $Y$, treatment $T$, and confounding covariates $X$. The confounders $X$ have dimension $p$ and have independent components from the $N(0,1)$ distribution. The treatment is a sparse linear function of $X$, $T= \ldot{\gamma_0}{X} + \eta$, where only $s$ of the $p$ coefficients of $\gamma_0$ are non-zero. The $x$-axis on each plot is the number of non-zero coefficients $s$. Moreover, $\eta$ is drawn from a discrete distribution with values $\{0.5, 0, -1.5, -3.5\}$ taken respectively with probabilities $(.65, .2, .1, .05)$. Here, the treatment represents the price of a product or service, and this data generating process simulates random discounts over a baseline price. Finally, the outcome is generated by a linear model, $Y = \theta_0 T + \ldot{\beta_0}{X} + \epsilon$, where $\theta_0=3$ is the treatment effect, $\beta_0$ is another sparse vector with only $s$ non-zero entries, and $\epsilon$ is drawn independently from a uniform $U(-\sigma_\epsilon, \sigma_\epsilon)$ distribution. Importantly, the coordinates of the $s$ non-zero entries of the coefficient $\beta_0$ are the same as the coordinates of the $s$ non-zero entries of $\gamma_0$. The latter ensures that variables $X$ create a true endogeneity problem, i.e., that $X$ affects both the treatment and the outcome directly. In such settings, controlling for $X$ is important for unbiased estimation.

To generate an instance of the problem, the common support of both $\gamma_0$ and $\beta_0$ was generated uniformly at random from the set of all coordinates, and each non-zero coefficient was generated independently from a uniform $U(0,5)$ distribution. The first stage nuisance functions were fitted for both methods by running the Lasso on a subsample of $n/2$ sample points. 
For the first-order method all remaining $n/2$ points were used for the second stage estimation of $\theta_0$. 
For the second-order method, the moments $\mathbb{E}[\eta^2]$ and $\mathbb{E}[\eta^3]$ were estimated using a subsample of $n/4$ points as described in \thmref{sparsethm}, and the remaining $n/4$ sample points were used for the second stage estimation of $\theta_0$. For each method we performed cross-fitting across the first and second stages, and for the second-order method we performed nested cross-fitting between the $n/4$ subsample used for the $\mathbb{E}[\eta^2]$ and $\mathbb{E}[\eta^3]$ estimation and the $n/4$ subsample used for the second stage estimation. The regularization parameter $\lambda_n$ of each Lasso was chosen to be $\sqrt{\log(p)/n}$.

For each instance of the problem, i.e., each random realization of the coefficients, we generated $2000$ independent datasets to estimate the bias and standard deviation of each estimator. We repeated this process over $100$ randomly generated problem instances, each time with a different draw of the coefficients $\gamma_0$ and $\beta_0$, to evaluate variability across different realizations of the nuisance functions. 

\paragraph{Distribution of Errors with Fixed Sparsity}
In Figure \ref{fig:comparison},
we display the distribution of estimates based on orthogonal moments and second-order orthogonal moments for a particular sparsity level $s=100$ and for $n=5000$ and $p=1000$. We observe that both estimates are approximately normally distributed, but the orthogonal moment estimation exhibits significant bias, an order of magnitude larger than the variance. 

\paragraph{Bias-Variance Tradeoff with Varying Sparsity} Figure \ref{fig:varying_support_multi_instance} portrays the median quantities (solid lines) and  maximum and minimum of these quantities (error bars) across the $100$ different nuisance function draws as a function of the support size for $n=5000$, $p=1000$, and $\sigma_\epsilon=1$. 

\paragraph{Varying $n$ and $p$} In \figref{comparison-mse}, we display how performance varies with $n$ and $p$. Due to computational considerations, for this parameter exploration, we only used a single problem instance for each $(n, p, s)$ triplet rather than $100$ instances as in the exploration above. We note that for $n=2000, p=5000$ the breaking point of our method is around $s=100$, while for $n=5000, p=2000$ it is around $s=550$. For $n=10000, p=1000$ our method performs exceptionally well even until $s=800$.

\begin{figure}[htpb]
\begin{subfigure}[t]{0.15\textwidth}
\centering
\includegraphics[scale=.23]{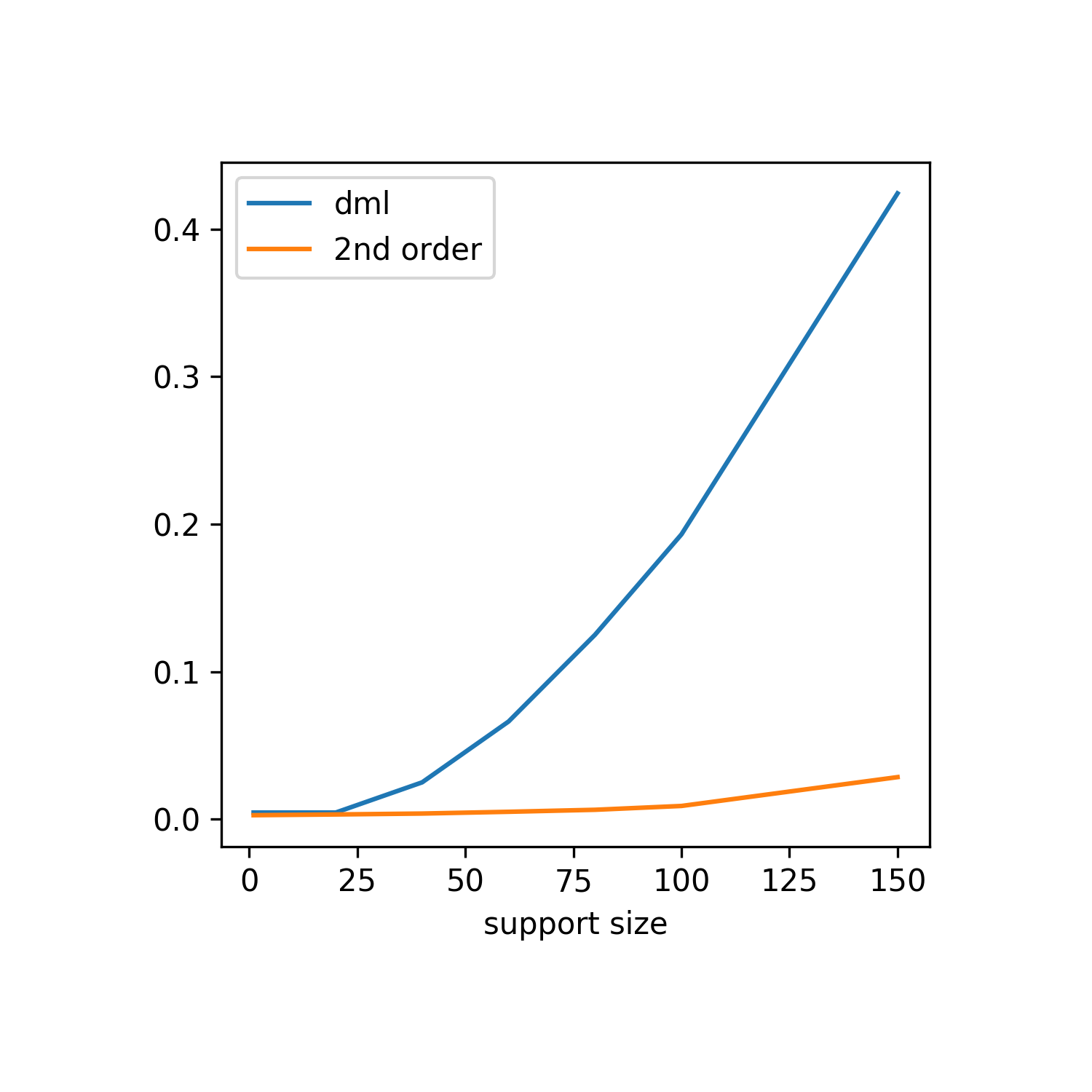}
\caption{\tiny{$n=2000, p=1000$}}
\end{subfigure}
~
\begin{subfigure}[t]{0.15\textwidth}
\centering
\includegraphics[scale=.23]{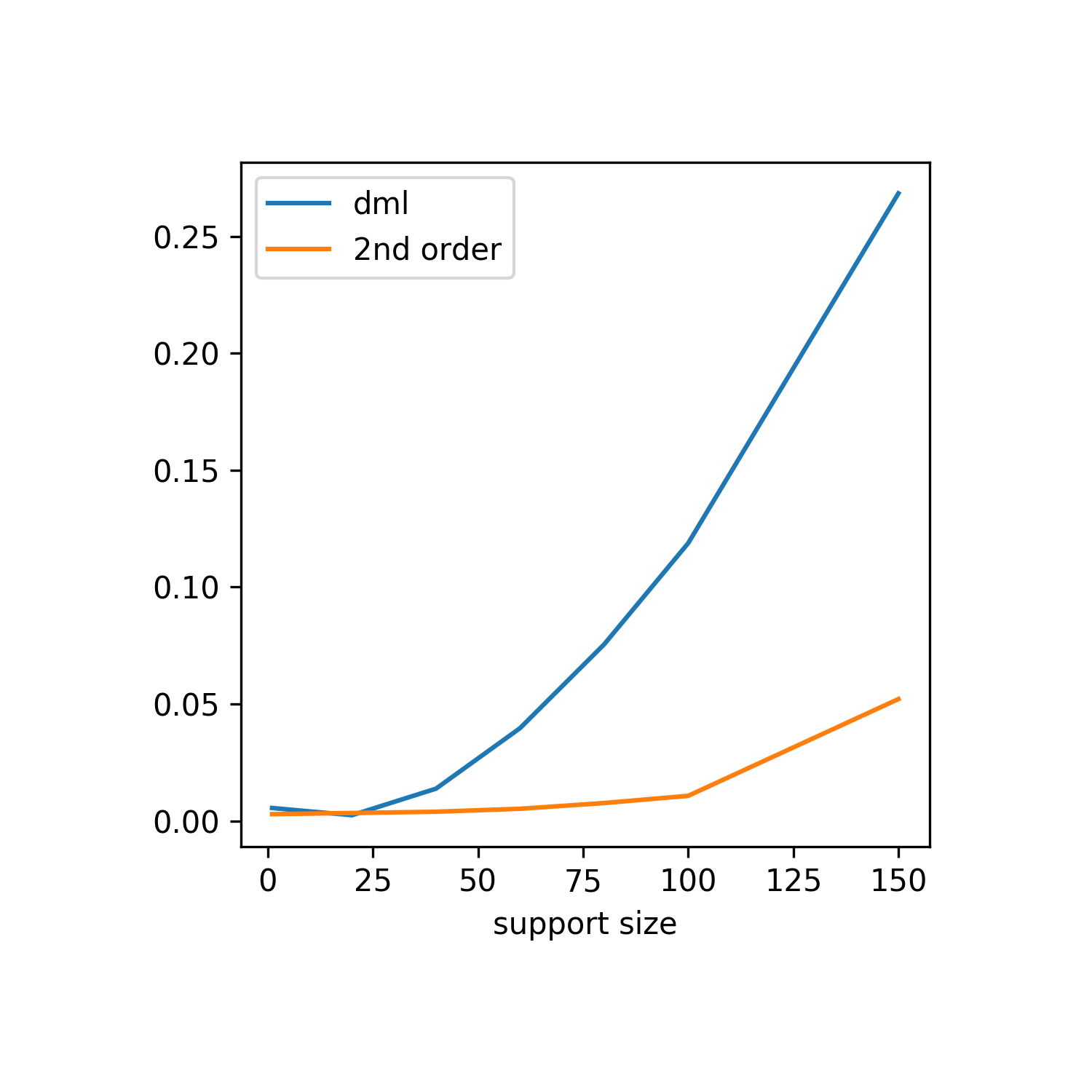}
\caption{\tiny{$n=2000, p=2000$}}
\end{subfigure}
~
\begin{subfigure}[t]{0.15\textwidth}
\centering
\includegraphics[scale=.23]{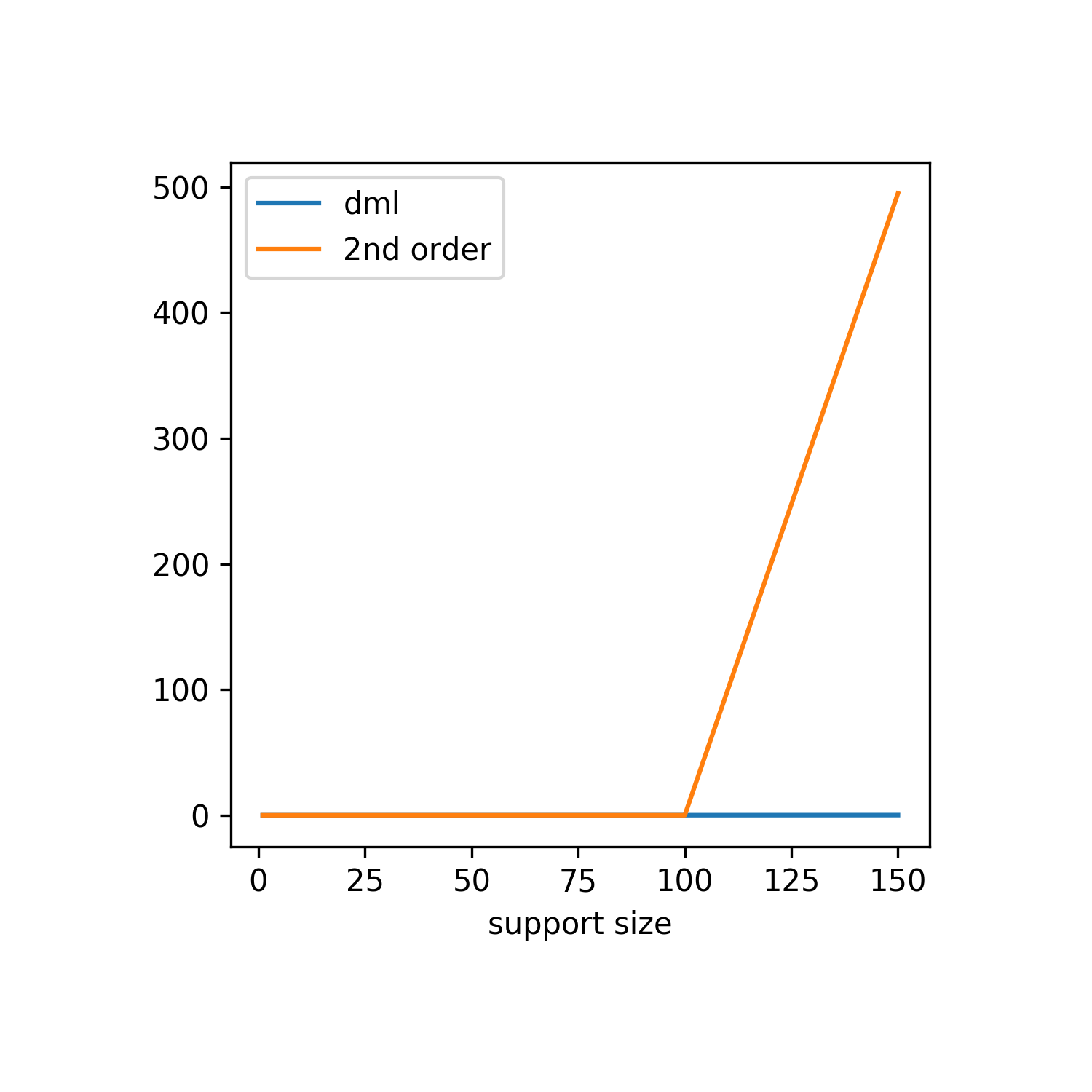}
\caption{\tiny{$n=2000, p=5000$}}
\end{subfigure}

\begin{subfigure}[t]{0.15\textwidth}
\centering
\includegraphics[scale=.23]{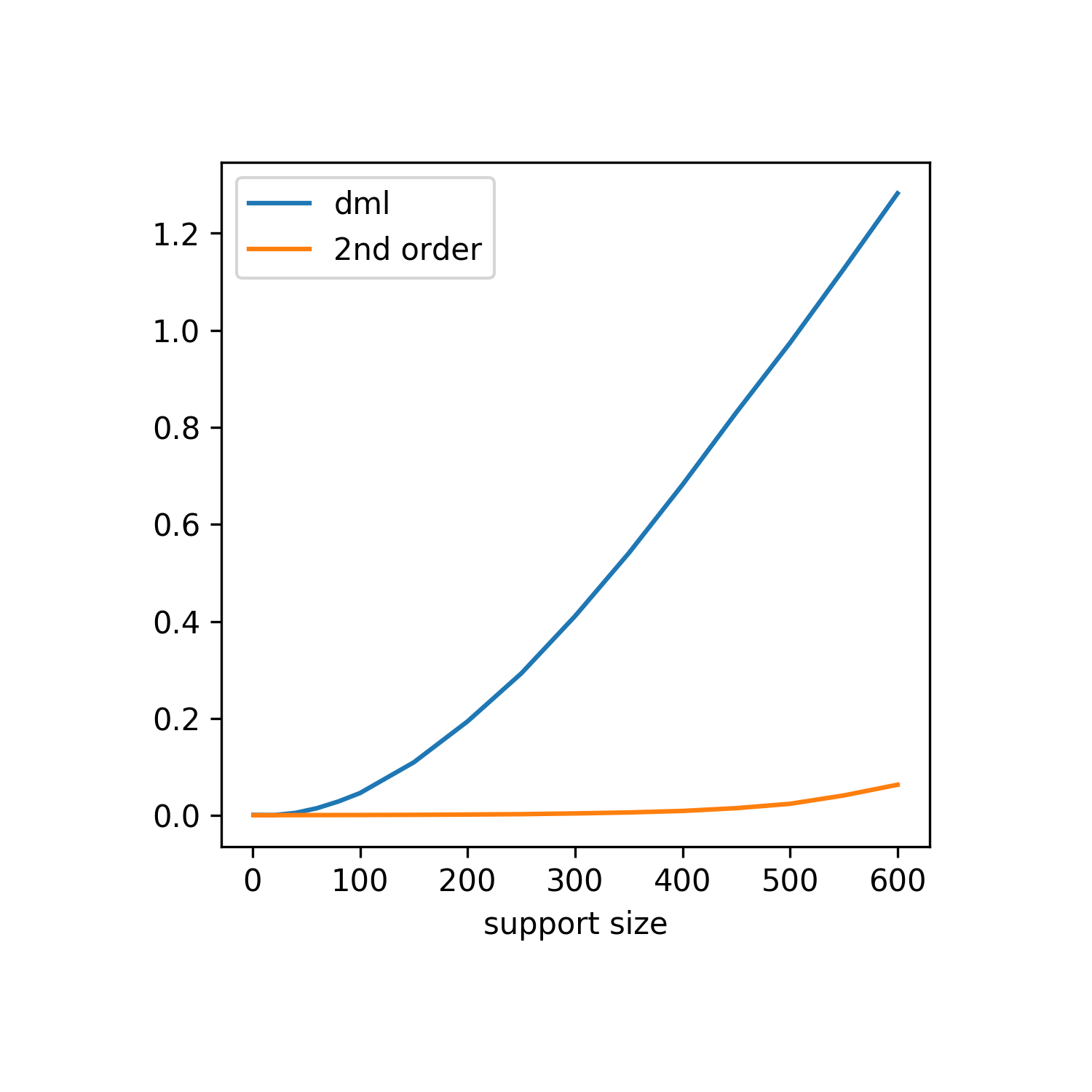}
\caption{\tiny{$n=5000, p=1000$}}
\end{subfigure}
~
\begin{subfigure}[t]{0.15\textwidth}
\centering
\includegraphics[scale=.23]{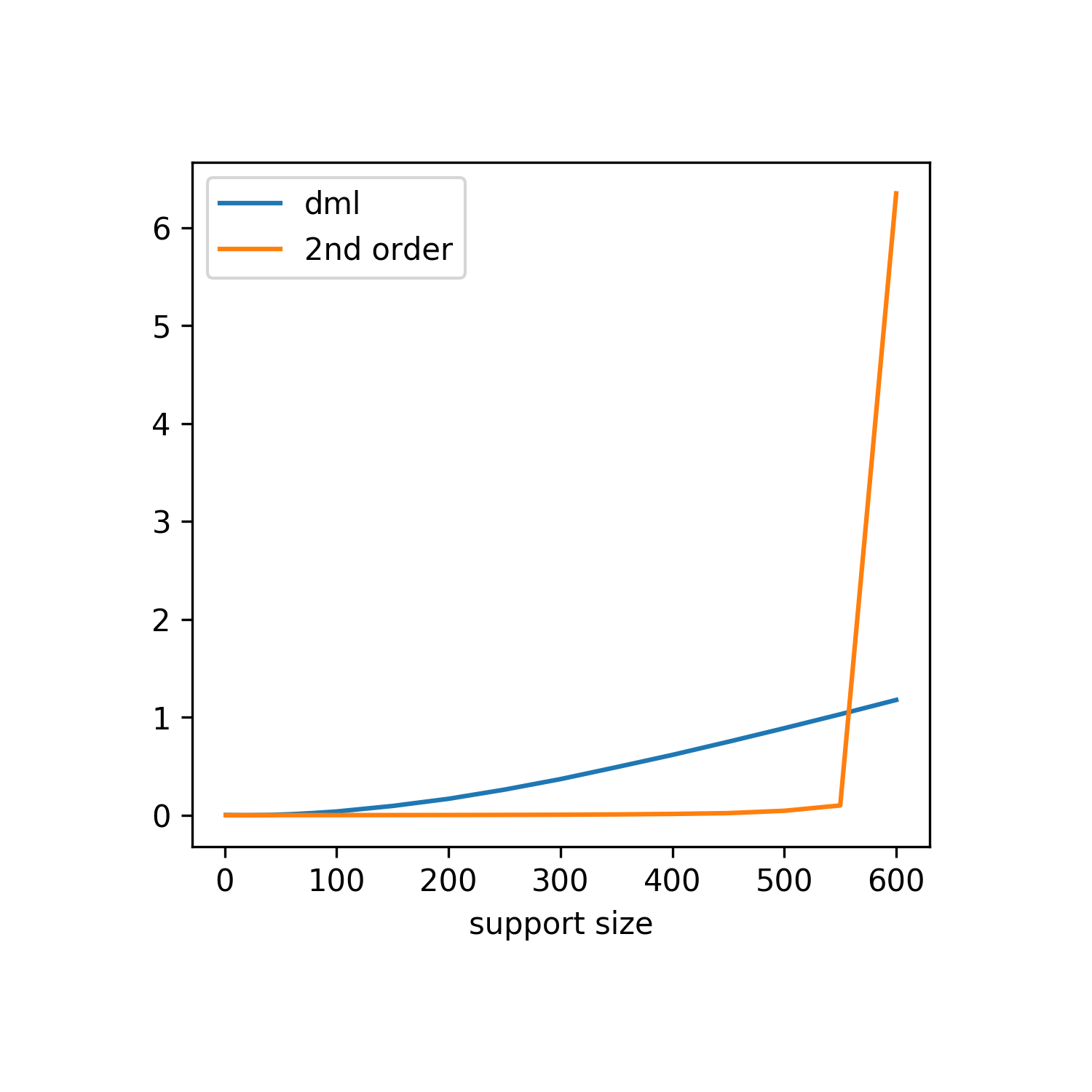}
\caption{\tiny{$n=5000, p=2000$}}
\end{subfigure}
\begin{subfigure}[t]{0.16\textwidth}
\centering
\includegraphics[scale=.23]{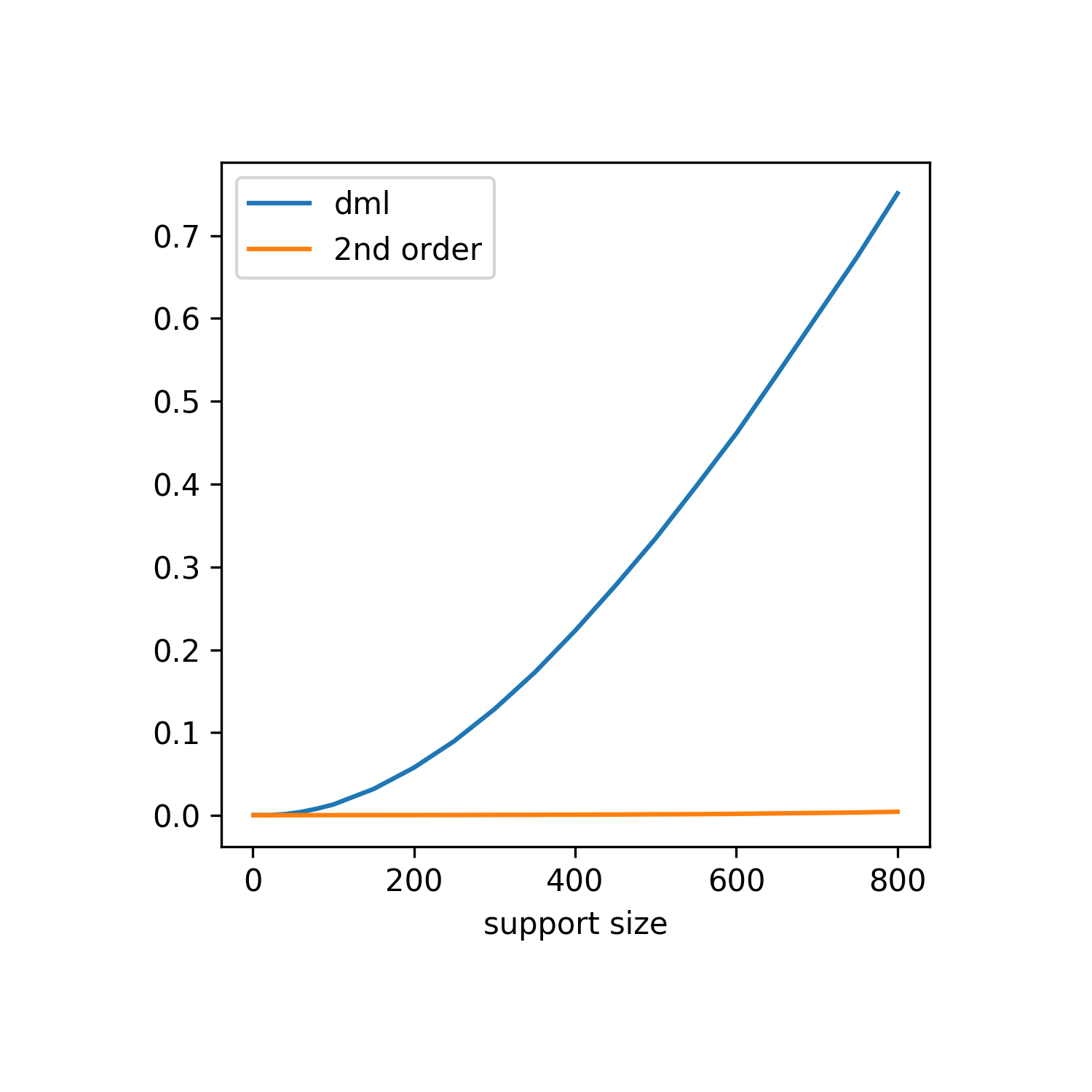}
\caption{\tiny{$n=10000, p=1000$}}
\end{subfigure}
\caption{\small{MSE of both estimators as the sparsity varies for different sample size and dimension pairs $(n,p)$. Note that the range of the support sizes is larger for larger $n$. $\sigma_\epsilon=1$.}}
\label{fig:comparison-mse}
\end{figure}

\paragraph{Varying $\sigma_\epsilon$} Finally \figref{comparison-mse-var} displays performance as the variance $\sigma_\epsilon$ of the noise $\epsilon$ grows. 

\begin{figure}[htpb]
\begin{subfigure}[t]{0.15\textwidth}
\centering
\includegraphics[scale=.23]{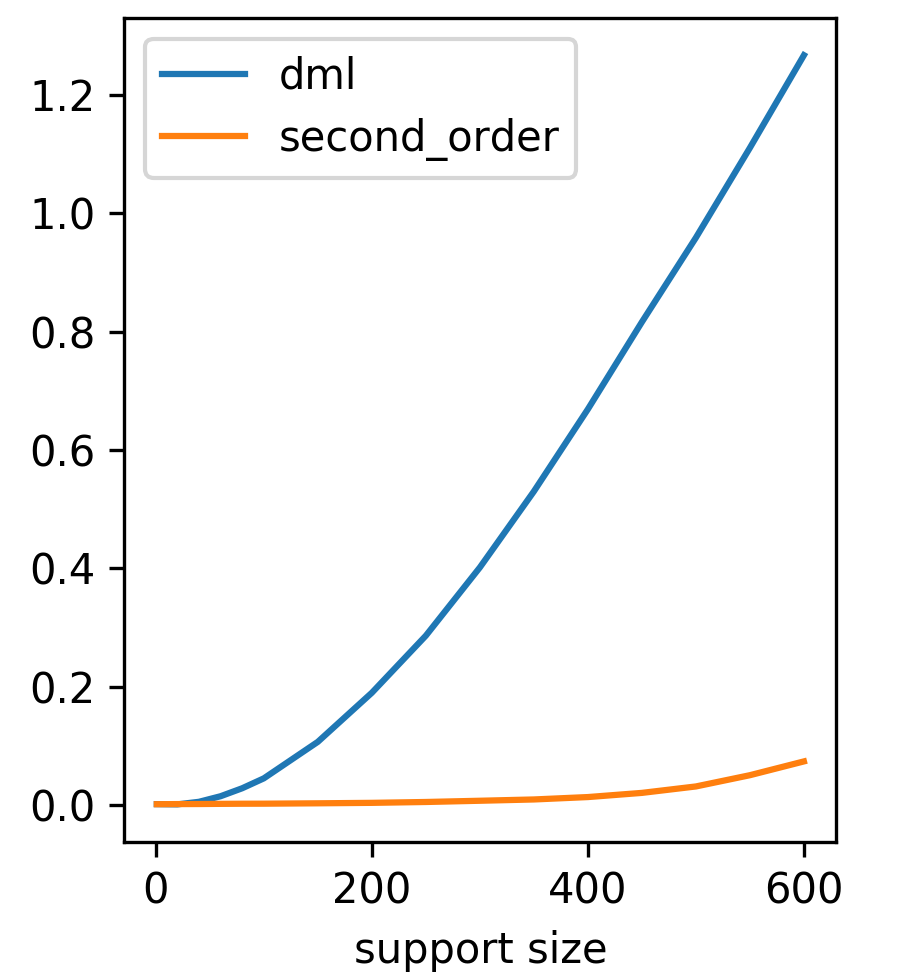}
\caption{\tiny{$\sigma_\epsilon=3$}}
\end{subfigure}
~
\begin{subfigure}[t]{0.15\textwidth}
\centering
\includegraphics[scale=.23]{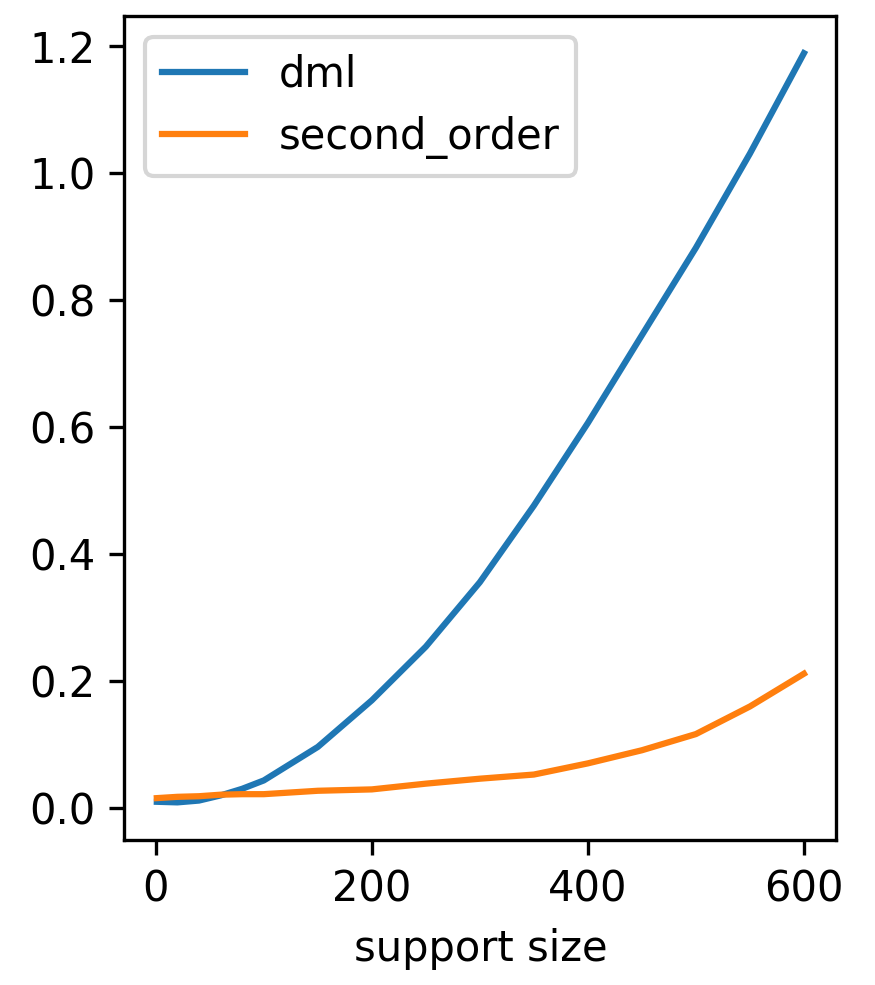}
\caption{\tiny{$\sigma_\epsilon=10$}}
\end{subfigure}
~
\begin{subfigure}[t]{0.15\textwidth}
\centering
\includegraphics[scale=.23]{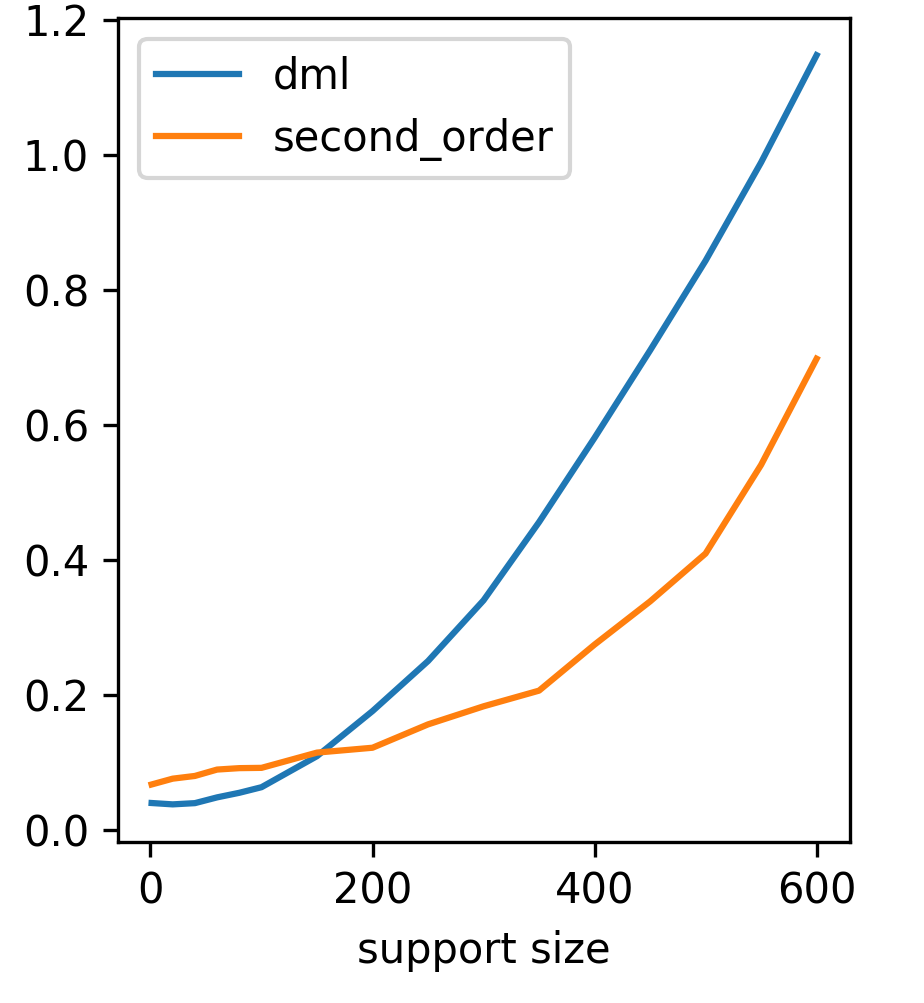}
\caption{\tiny{$\sigma_\epsilon=20$}}
\end{subfigure}
\caption{\small{MSE of both estimators as the sparsity varies for different variance parameters $\sigma_\epsilon$. $n=5000, p=1000$.}}
\label{fig:comparison-mse-var}
\end{figure}

\section{Conclusion}
Our aim in this work was to conduct accurate inference for fixed-dimensional target parameters in the presence of high-dimensional or nonparametric nuisance. 
To achieve this, we introduced a notion of $k$-th order orthogonal moments for two-stage $Z$-estimation, generalizing the first-order Neyman orthogonality studied in \citep{Chernozhukov2016}. Given $k$-th order orthogonal moments, we established that estimating nuisance at an $o(n^{-1/(2k+2)})$ rate suffices for $\sqrt{n}$-consistent and asymptotically normal estimates of target parameters. We then studied the PLR model popular in causal inference and showed that a valid second-order orthogonal moment exists if and only if the treatment residual is not normally distributed. In the high-dimensional linear nuisance setting, these explicit second-order orthogonal moments tolerate significantly denser nuisance vectors than those accommodated by \citep{Chernozhukov2016}. We complemented our results with synthetic demand estimation experiments showing the benefits of second-order orthogonal moments over standard Neyman-orthogonal moments.

\bibliography{ortho}
\bibliographystyle{icml2018}
\appendix
\onecolumn{

\section{Proof of Theorem \ref{thm:orth}} \label{sec:orth-proof}
We first prove the result for the sample-splitting estimator $\ssest$ in \eqnref{second-stage} and then discuss how to generalize for the $K$-fold cross fitting estimator $\cfest$ in \eqnref{cross} with $\sqrt{2n}$ scaling.

For each coordinate moment function $m_i$, the mean value theorem and the definition of $\ssest$ imply that
\begin{equation}\label{eqn:Taylor1}
 \frac{1}{n} \sum_{t=1}^n \inner{\nabla_\theta m_i(Z_t, \tilde{\theta}^{(i)}, \hat{h}(X_t))}{\theta_0-\ssest} 
 	= \frac{1}{n} \sum_{t=1}^n ( m_i(Z_t,\theta_0, \hat{h}(X_t)) - m_i(Z_t,\ssest, \hat{h}(X_t)))
	=\frac{1}{n} \sum_{t=1}^n m_i(Z_t,\theta_0, \hat{h}(X_t)) 
\end{equation}
for some convex combination, $\tilde{\theta}^{(i)}$, of $\ssest$ and $\theta_0$.
Hence,
\begin{align*} 
&\sqrt{n} (\theta_0-\ssest)
	\staticindic{\det \hat{J}(\hat{h})\not =0}
=\hat{J}(\hat{h})^{-1}\staticindic{\det \hat{J}(\hat{h})\not =0}\underbrace{\frac{1}{\sqrt{n}} \sum_{t=1}^n m(Z_t,\theta_0, \hat{h}(X_t))}_B
\\
&\text{for}
\quad\hat{J}(h) \defeq \frac{1}{n} \sum_{t=1}^n 
\begin{bmatrix}
\nabla_\theta m_1(Z_t, \tilde{\theta}^{(1)}, h(X_t)) \\
\cdots\\
\nabla_\theta m_d(Z_t, \tilde{\theta}^{(d)}, h(X_t)) \\
\end{bmatrix}
\in\R^{d\times d}.
\end{align*}

We will first show in \secref{J-convergence} that 
$\hat{J}(\hat{h})$ converges in probability to the invertible matrix $J = \E\left[\nabla_\theta m(Z,\theta_0, h_0(X))\right]$.
Hence, we will have $\staticindic{\det \hat{J}(\hat{h})\not =0} \toprob \staticindic{\det J\not =0} =1$ and $\hat{J}(\hat{h})^{-1}\staticindic{\det \hat{J}(\hat{h})\not =0} \toprob J^{-1}$ by the continuous mapping theorem \citep[Thm.\ 2.3]{vander}.
We will next show in \secref{B-convergence} that $B$ converges in distribution to a mean-zero multivariate Gaussian distribution with constant covariance matrix $V = \Cov(m(Z,\theta_0, h_0(X)))$. 
Slutsky's theorem \citep[Thm.\ 2.8]{vander} will therefore imply that $ \sqrt{n} (\theta_0-\ssest) \staticindic{\det \hat{J}(\hat{h})\not =0}$ converges in distribution to $N(0,J^{-1} VJ^{-1})$. Finally, the following lemma, proved in \secref{proof_conditionhighprob}, will imply that  $\sqrt{n} (\theta_0-\ssest)$ also converges in distribution to $N(0,J^{-1} VJ^{-1})$, as desired.

\begin{lemma}\label{lem:conditionhighprob}
Consider a sequence of binary random variables $Y_n \in \{0,1\}$ satisfying $Y_n \toprob  1$.
If $X_n Y_n \toprob  X$, then $X_n \toprob X$. 
Similarly, if $X_nY_n \todist X$, then $X_n \todist X$.
\end{lemma}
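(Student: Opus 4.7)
My plan is to handle the two claims separately: the convergence-in-probability part will follow from a direct event decomposition on the binary variable $Y_n$, and the convergence-in-distribution part will follow from a standard Slutsky-type argument after extracting a vanishing remainder term.

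For the first claim, I would fix $\epsilon > 0$ and condition on the value of $Y_n$. On the event $\{Y_n = 1\}$ we have $X_n Y_n = X_n$ exactly, so that $|X_n - X| = |X_n Y_n - X|$ on this event, yielding
\[
\Pr(|X_n - X| > \epsilon) \leq \Pr(|X_n Y_n - X| > \epsilon) + \Pr(Y_n = 0).
\]
Both summands vanish in the limit: the first by the hypothesis $X_n Y_n \toprob X$, and the second because $Y_n \toprob 1$ combined with $Y_n \in \{0,1\}$ forces $\Pr(Y_n = 0) \to 0$.

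For the distributional claim, I would use the telescoping decomposition $X_n = X_n Y_n + X_n(1 - Y_n)$ and argue that the residual $X_n(1-Y_n)$ converges to $0$ in probability. This is immediate from the same observation: the product vanishes identically on $\{Y_n = 1\}$, so $\Pr(|X_n(1-Y_n)| > \epsilon) \leq \Pr(Y_n = 0) \to 0$ for every $\epsilon > 0$. Combining $X_n Y_n \todist X$ (by hypothesis) with $X_n(1-Y_n) \toprob 0$ via Slutsky's theorem then yields $X_n \todist X$.

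I do not foresee serious obstacles in this proof, which is essentially a two-line observation in each part. The one subtlety worth emphasising is that in the distributional case one cannot simply invoke a multiplicative continuous-mapping argument, since we have made no tightness assumption on $X_n$; fortunately, the argument sidesteps this entirely because $X_n(1-Y_n)$ is not merely small in probability but identically zero whenever $Y_n = 1$.
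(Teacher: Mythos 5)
Your proof is correct and takes essentially the same route as the paper: both arguments rest on the observation that $X_n(1-Y_n)$ vanishes identically on $\{Y_n=1\}$, so $\Pr(|X_n(1-Y_n)|>\epsilon)\le\Pr(Y_n=0)\to 0$, with Slutsky's theorem finishing the distributional claim. Your handling of the in-probability claim via a direct event inclusion is a trivial variant of the paper's use of Slutsky on the sum $X_nY_n + X_n(1-Y_n)$, not a genuinely different argument.
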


\subsection{Convergence of $\hat{J}(\hat{h}) - J$.} \label{sec:J-convergence}
For each coordinate $j$ and moment $m_i$ and $r>0$ defined in \assumpref{reg_moments}, the mean value theorem and Cauchy-Schwarz imply that
\begin{align*}
&\E\left[\left| \hat{J}_{ij}(\hat{h}) - \hat{J}_{ij}(h_0)\right|\staticindic{\tilde{\theta}^{(i)} \in \mc{B}_{\theta_0,r}} \mid \hat{h} \right] \staticindic{\hat{h} \in \mc{B}_{h_0,r}}\\
&~\leq \E\left[ 
	\left|\nabla_{\theta_j} m_i(Z_t, \tilde{\theta}^{(i)}, \hat{h}(X_t)) - 
	\nabla_{\theta_j} m_i(Z_t, \tilde{\theta}^{(i)}, h_0(X_t)) \right| \staticindic{\tilde{\theta}^{(i)} \in \mc{B}_{\theta_0,r}} \mid \hat{h} \right]\staticindic{\hat{h} \in \mc{B}_{h_0,r}} \\
	&~= \E\left[ \left| \inner{\hat{h}(X_t)-h_0(X_t)}{\grad_\gamma \nabla_{\theta_j} m_i(Z_t, \tilde{\theta}^{(i)}, \tilde{h}^{(j)}(X_t))} \right| \staticindic{\tilde{\theta}^{(i)} \in \mc{B}_{\theta_0,r}} \mid \hat{h} \right]\staticindic{\hat{h} \in \mc{B}_{h_0,r}} \\
	&~\leq \sqrt{\E\left[ \norm{\hat{h}(X_t)-h_0(X_t)}_2^2  \mid \hat{h} \right]\, 
		\  \sup_{h\in\mc{B}_{h_0,r}}\E\left[ \sup_{\theta\in\mc{B}_{\theta_0,r}}\norm{\grad_\gamma \nabla_{\theta_j} m_i(Z_t, \theta, h(X_t))}_2^2\right]}
\end{align*}
for $\tilde{h}^{(j)}(X_t)$ a convex combination of $h_0(X_t)$ and $\hat{h}(X_t)$.
The consistency of $\hat{h}$ (\assumpref{first-stage}) and the regularity condition \assumpref{gamma_theta_derivs}
therefore imply that $\E[| \hat{J}_{ij}(\hat{h}) - \hat{J}_{ij}(h_0)| \staticindic{\tilde{\theta}^{(i)} \in \mc{B}_{\theta_0,r}} \mid \hat{h} ]  \staticindic{\hat{h} \in \mc{B}_{h_0,r}} \toprob 0$
and hence that $| \hat{J}_{ij}(\hat{h}) - \hat{J}_{ij}(h_0)| \staticindic{\hat{h} \in \mc{B}_{h_0,r}, \tilde{\theta}^{(i)} \in \mc{B}_{\theta_0,r}}\toprob 0$
by the following lemma, proved in \secref{proof_conditional-moment-convergence}.

\begin{lemma}\label{lem:conditional-moment-convergence}
Consider a sequence of two random variables $X_n, Z_n$, where $X_n$ is a finite $d$-dimensional random vector. Suppose that
$\E\left[\|X_n\|_p^p | Z_n\right] \toprob 0$ for some $p\geq 1$. Then $X_n \toprob 0$.
\end{lemma}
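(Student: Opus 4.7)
\textbf{Proof proposal for Lemma~\ref{lem:conditional-moment-convergence}.} The plan is to convert the assumed conditional-moment convergence into an unconditional tail bound by combining Markov's inequality with a bounded-convergence argument. First, I would fix an arbitrary $\epsilon>0$ and apply Markov's inequality \emph{conditionally on $Z_n$} to obtain
\begin{equation*}
\Pr(\|X_n\|_p>\epsilon \mid Z_n) \;\leq\; \frac{\E[\|X_n\|_p^p \mid Z_n]}{\epsilon^p}.
\end{equation*}
Taking expectations over $Z_n$ on both sides yields
\begin{equation*}
\Pr(\|X_n\|_p>\epsilon) \;\leq\; \E\!\left[\min\!\left(1,\frac{\E[\|X_n\|_p^p \mid Z_n]}{\epsilon^p}\right)\right],
\end{equation*}
where the truncation at $1$ is valid because any probability is at most $1$.

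Next I would show the right-hand side tends to $0$. The random variable $W_n \defeq \min(1, \E[\|X_n\|_p^p \mid Z_n]/\epsilon^p)$ is bounded by $1$ and, by the hypothesis $\E[\|X_n\|_p^p \mid Z_n] \toprob 0$ together with the continuous mapping theorem, satisfies $W_n \toprob 0$. Since $(W_n)$ is uniformly bounded, convergence in probability upgrades to convergence in $L^1$ via the bounded convergence theorem (or, equivalently, the standard subsequence argument: every subsequence has a further a.s.\ convergent subsequence, and dominated convergence gives $\E[W_n]\to 0$ along it, so the full sequence converges). Hence $\Pr(\|X_n\|_p>\epsilon)\to 0$ for every $\epsilon>0$, i.e., $\|X_n\|_p \toprob 0$.

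Finally, since $X_n$ takes values in the finite-dimensional space $\R^d$, all norms on $\R^d$ are equivalent, so $\|X_n\|_p \toprob 0$ is precisely $X_n \toprob 0$ in the usual sense. The only subtlety worth flagging is the bounded-convergence step, which must be applied to the truncated version $W_n$ rather than to $\E[\|X_n\|_p^p\mid Z_n]$ itself, since the latter need not be uniformly integrable under the stated hypotheses; aside from this, the argument is a routine combination of conditional Markov with bounded convergence.
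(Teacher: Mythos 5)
Your proof is correct and takes essentially the same route as the paper's: conditional Markov's inequality, followed by an argument that upgrades convergence in probability of the conditional moment to convergence of the unconditional tail probability. The paper merely unrolls your bounded-convergence step as an explicit $\epsilon$--$\delta$ event-splitting argument and works coordinate-by-coordinate with a union bound over the $d$ components rather than with $\|X_n\|_p$ directly; these are presentational differences only.
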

Now \assumpsref{first-stage} and \assumpssref{first-stage-consistency} and the continuous mapping theorem imply that $\staticindic{\hat{h} \in \mc{B}_{h_0,r}} \toprob 1$.  Therefore, by Lemma \ref{lem:conditionhighprob}, we further have $| \hat{J}_{ij}(\hat{h}) - \hat{J}_{ij}(h_0)| \staticindic{\tilde{\theta}^{(i)} \in \mc{B}_{\theta_0,r}}  \toprob 0$.

The regularity \assumpsref{continuity} and \assumpssref{theta_dominated} additionally imply the uniform law of large numbers,
\begin{equation*}
\sup_{\theta\in \mc{B}_{\theta_0,r}} \|\textstyle\frac{1}{n} \sum_{t=1}^n
\nabla_\theta m_i(Z_t, \theta, h_0(X_t))- \E_{Z}[\nabla_\theta m_i(Z, \theta, h_0(X))]\|_2 \toprob 0
\end{equation*}
for each moment $m_i$ \citep[see, e.g.,][Lem. 2.4]{Newey1994}.
Taken together, these conclusions yield
$$\left[\hat{J}_i(\hat{h}) - \E_{Z}[\nabla_\theta m_i(Z, \tilde{\theta}^{(i)}, h_0(X))]\right]\staticindic{\tilde{\theta}^{(i)} \in \mc{B}_{\theta_0,r}}  \toprob 0,$$
for each $m_i$, where $\hat{J}_i(\hat{h})$ denotes the $i$-th row of $\hat{J}(\hat{h})$.

Since $\tilde{\theta}^{(i)}$ is a convex combination of $\ssest$ and $\theta_0$, the consistency of $\ssest$ implies that $\tilde{\theta}^{(i)} \toprob \theta_0$ and therefore that $\staticindic{\tilde{\theta}^{(i)} \in \mc{B}_{\theta_0,r}} \toprob 1$ and $\E_{Z}[\nabla_\theta m_i(Z, \tilde{\theta}^{(i)}, h_0(X))] \toprob \E_{Z}[\nabla_\theta m_i(Z, \theta_0, h_0(X))]$ by the continuous mapping theorem. 
Lemma \ref{lem:conditionhighprob} therefore implies that $\hat{J}_i(\hat{h}) - \E_{Z}[\nabla_\theta m_i(Z, \tilde{\theta}^{(i)}, h_0(X))]  \toprob 0$
and hence that $\hat{J}_i(\hat{h}) \toprob \E_{Z}[\nabla_\theta m_i(Z, \theta_0, h_0(X))]$, as desired.

\subsection{Asymptotic Normality of $B$.} \label{sec:B-convergence}
For a vector $\gamma\in \mathbb{R}^{\ell}$ and a vector $\alpha\in \mathbb{N}^{\ell}$, we define the shorthand ${\gamma}^{\alpha}\defeq \prod_{i=1}^\ell\gamma_\ell^{\alpha_\ell}$.

To establish the asymptotic normality of $B$, we let $k=\max_{\alpha \in S} \|\alpha\|_1$ and apply Taylor's theorem with $k+1$-order remainder around $h_0(X_t)$ for each $X_t$: 
\begin{equation}
\begin{aligned}
B = \underbrace{\frac{1}{\sqrt{n}} \sum_{t=1}^n m(Z_t, \theta_0, h_0(X_t))}_{C}  &+
\underbrace{ \frac{1}{\sqrt{n}} \sum_{t=1}^n \sum_{ \alpha: \alpha \in S } \frac{1}{\norm{\alpha}_1!} D^{\alpha} m(Z_t, \theta_0, h_0(X_t)) \left(\hat{h}(X_t) - h_0(X_t)\right)^{\alpha}}_{G}\\
&+ \underbrace{ \frac{1}{\sqrt{n}} \sum_{t=1}^n \sum_{ \alpha: \|\alpha\|_1 \leq k,\alpha \not  \in S } \frac{1}{\norm{\alpha}_1!} D^{\alpha} m(Z_t, \theta_0, h_0(X_t)) \left(\hat{h}(X_t) - h_0(X_t)\right)^{\alpha}}_{E}\\ 
&+ \underbrace{\frac{1}{\sqrt{n}} \sum_{t=1}^n \sum_{ \alpha: \|\alpha\|_1 = k+1 } \frac{1}{(k+1)!} \begin{bmatrix}
D^{\alpha} m_1(Z_t,\theta_0, \tilde{h}^{(1)}(X_t)) \\
\cdots\\
D^{\alpha} m_d(Z_t,\theta_0, \tilde{h}^{(d)}(X_t)) \\
\end{bmatrix}  \left(\hat{h}(X_t)-h_0(X_t)\right)^{\alpha}}_{F},
\end{aligned}
\end{equation}
where $\tilde{h}^{(i)}(X_t),i=1,2,\ldots,d$ are vectors which are (potentially distinct) convex combinations of $\hat{h}(X_t)$ and $h_0(X_t)$.
Note that $C$ is the sum of $n$ i.i.d.\ mean-zero random vectors divided by $\sqrt{n}$ and that the covariance $V=\Cov(m(Z,\theta_0, h_0(X)))$ of each vector is finite by \assumpref{reg_m_dominated}. Hence, the central limit theorem implies that $C\rightarrow_d N(0,V)$. It remains to show that $G, E, F\toprob 0$. 

First we argue that the rates of first stage consistency (\assumpref{first-stage})  imply that $E,F\toprob 0$. To achieve this we will show that $\E[|E_i|\mid \hat{h}], \E[|F_i|\mid \hat{h}] \toprob 0$, where $E_i$ and $F_i$ represent the $i$-th entries of $E$ and $F$ respectively. Since the number of entries $d$ is a constant, Lemma \ref{lem:conditional-moment-convergence} will then imply that $E, F\toprob 0$. First we have
\begin{align*}
\E [|E_i| \mid \hat{h}] \leq~&\sum_{ \alpha: \|\alpha\|_1 \leq k,\alpha \not  \in S } \frac{ \sqrt{n} }{\norm{\alpha}_1!} \E_{Z_t}[| D^{\alpha} m_i(Z_t, \theta_0, h_0(X_t)) (\hat{h}(X_t) - h_0(X_t))^{\alpha} | ] \tag{triangle inequality}\\
	\leq~&  \sum_{ \alpha: \|\alpha\|_1 \leq k,\alpha \not  \in S } \frac{ \sqrt{n} }{\norm{\alpha}_1!} \sqrt{\E[| D^{\alpha} m_i(Z_t, \theta_0, h_0(X_t))|^2]} \sqrt{\E_{X_t}[|\hat{h}(X_t) - h_0(X_t)|^{2\alpha}]} \tag{Cauchy-Schwarz}\\
	\leq~& \sum_{ \alpha: \|\alpha\|_1 \leq k,\alpha \not  \in S } \frac{ \sqrt{n} }{\norm{\alpha}_1!} \lambda_*(\theta_0, h_0)^{1/4} \sqrt{\E_{X_t}[|\hat{h}(X_t) - h_0(X_t)|^{2\alpha} ]} \tag{\assumpref{reg_bound_diff}}\\
	\leq~&  \max_{ \alpha: \|\alpha\|_1 \le q k,\alpha \not  \in S }  \lambda_*(\theta_0, h_0)^{1/4} \sqrt{n}\sqrt{\E_{X_t}[|\hat{h}(X_t) - h_0(X_t)|^{2\alpha} ]} \toprob 0.
\tag{\assumpref{first-stage}}
\end{align*}
Since $\tilde{h}^{(i)}$ is a convex combination of $\hat{h}$ and $h_0$, parallel reasoning yields
\begin{align*}
\E  [|F_i| \mid \hat{h} ] &\staticindic{\hat{h} \in \mc{B}_{h_0,r}} 
	\leq  \max_{ \alpha: \|\alpha\|_1 = k+1}  \staticindic{\hat{h} \in \mc{B}_{h_0,r}}\sqrt{\E_{Z_t}[| D^{\alpha} m_i(Z_t, \theta_0, \tilde{h}^{(i)}(X_t))|^2]} \sqrt{n} \sqrt{\E_{X_t}[|\hat{h}(X_t) - h_0(X_t)|^{2\alpha}]} \\
	\leq~&  \max_{ \alpha: \|\alpha\|_1 = k+1}  \lambda_*(\theta_0, h_0)^{1/4}  \sqrt{n} \sqrt{\E_{X_t}[|\hat{h}(X_t) - h_0(X_t)|^{2\alpha}]} \toprob 0
\tag{\assumpsref{reg_bound_diff} and \assumpssref{first-stage}}.
\end{align*}
As in \secref{J-convergence}, the consistency of $\hat{h}$ (\assumpref{first-stage}) further implies that $\E  [|F_i| \mid \hat{h} ]\toprob 0$.

Finally, we argue that orthogonality and the rates of the first stage imply that $G\toprob 0$. By $S$-orthogonality of the moments, for $\alpha \in S$, $\E \left[D^{\alpha}m(Z_t, \theta_0, h_0(X_t)) |X_t \right]=0$ and in particular
 \begin{equation}\label{eqn:firstD} 
 \E \left[  D^{\alpha} m(Z_t, \theta_0, h_0(X_t)) \left(\hat{h}(X_t) - h_0(X_t)\right)^{\alpha} | \hat{h}\right]=\E \left[ \E \left[D^{\alpha}m(Z_t, \theta_0, h_0(X_t)) |X_t \right] \left(\hat{h}(X_t) - h_0(X_t)\right)^{\alpha} | \hat{h}\right]=0.
\end{equation}
We now show that $\E \left[ G_i^2 | \hat{h} \right] \toprob 0$. We have
\begin{align*}
\E \left[G_i^2 | \hat{h} \right] =~& \frac{1}{n} \sum_{t,t'=1,2,\ldots,n, t\neq t'} \E\left[\sum_{ \alpha: \|\alpha\|_1 \leq k,\alpha \in S } \frac{1}{\norm{\alpha}_1!} D^{\alpha} {m_i}(Z_t,\theta_0, h_0(X_t)) \left(\hat{h}(X_t) - h_0(X_t)\right)^{\alpha} | \hat{h} \right]^2\\ 
&+ \frac{1}{n}\sum_{t=t'=1}^{n} \E\left[ \left( \sum_{ \alpha: \|\alpha\|_1 \leq k,\alpha \in S } \frac{1}{\norm{\alpha}_1!} D^{\alpha} {m_i}(Z_t, \theta_0, h_0(X_t)) \left(\hat{h}(X_t) - h_0(X_t)\right)^{\alpha}\right)^2 | \hat{h} \right]
\end{align*} All the cross terms are zero because of \eqnref{firstD}. Therefore:
\begin{align*}
{\E \left[ G_i^2 | \hat{h} \right]} =~& \E\left[ \left( \sum_{ \alpha: \alpha \in S } \frac{1}{\norm{\alpha}_1!} D^{\alpha} {m_i}(Z_t, \theta_0, h_0(X_t)) \left(\hat{h}(X_t) - h_0(X_t)\right)^{\alpha}\right)^2| \hat{h}  \right] \\
\leq~&  \E\left[ \sum_{ \alpha: \alpha \in S } \frac{1}{\norm{\alpha}_1!} \left(D^{\alpha} {m_i}(Z_t, \theta_0, h_0(X_t)) \left(\hat{h}(X_t) - h_0(X_t)\right)^{\alpha}\right)^2  | \hat{h} \right] \tag{Jensen's inequality}\\
\leq~&  \max_{ \alpha: \alpha \in S } \E\left[ \left(D^{\alpha} {m_i}(Z_t, \theta_0, h_0(X_t)) \left(\hat{h}(X_t) - h_0(X_t)\right)^{\alpha}\right)^2 | \hat{h} \right] \\
\leq~&  \max_{ \alpha: \alpha \in S } \sqrt{\E\left[ \left(D^{\alpha} {m_i}(Z_t, \theta_0, h_0(X_t))\right)^4\right]} \sqrt{\E\left[\left(\hat{h}(X_t) - h_0(X_t)\right)^{4\alpha} | \hat{h} \right]} \tag{Cauchy-Schwarz}\\
=~&  \max_{ \alpha:\alpha \in S }  \sqrt{\lambda_*(\theta_0,h_0)} \sqrt{\E\left[\left(\hat{h}(X_t) - h_0(X_t)\right)^{4\alpha} | \hat{h} \right]} \tag{\assumpref{reg_bound_diff}}
\end{align*}
{Given \assumpref{first-stage-consistency} we get that the latter converges to zero in probability. Given that the number of moments $d$ is also a constant, we have shown that $\E[\|G\|_2^2 | \hat{h}] \toprob 0$.  By Lemma \ref{lem:conditional-moment-convergence} the latter implies that $G\toprob 0$.}

The proof for the $K$-fold cross fitting estimator $\cfest$ follows precisely the same steps as the $\ssest$ proof (with $\sqrt{2n}$ scaling instead of $\sqrt{n}$ scaling) except for the final argument concerning $G \toprob 0$. In this case $G=\sum_{k=1}^KG_k$, where, for $k=1,\ldots,K$. 
$$G_k=\frac{1}{\sqrt{2n}}\sum_{t \in I_k} \sum_{ \alpha: \alpha \in S } \frac{1}{\norm{\alpha}_1!} D^{\alpha} m(Z_t, \theta_0, h_k(X_t)) \left(\hat{h}_k(X_t) - h_k(X_t)\right)^{\alpha}.$$ 
$K$ is treated as constant with respect to the other problem parameters, and therefore it suffices to show $G_k \toprob 0$, for all $k=1,2,\ldots,K$. Fix $k \in [K]$.  By Lemma \ref{lem:conditional-moment-convergence} it suffices to show $\mathbb{E}\left[G_k^2|\hat{h}_k\right] \toprob 0$. The proof of this follows exactly the same steps as proving $\mathbb{E}\left[G^2|\hat{h}\right] \toprob 0$ in the $\ssest$ case. The diagonal terms can be bounded in an identical way and the cross terms are zero again because $\hat{h}_k$ is trained in the first stage on data $(X_t)_{t \in I_k^c}$ and therefore the data $(X_t)_{t \in I_k}$ remain independent given $\hat{h}_k$.
Our proof is complete.

\section{Proof of \thmref{consistency}} \label{sec:proof-consistency}
We prove the result for the sample-splitting estimator $\ssest$ in \eqnref{second-stage}.
The proof for the $K$-fold cross fitting estimator $\cfest$ in \eqnref{cross} is analogous and follows as in \citep{Chernozhukov2016}.

Fix any compact $A \subseteq \Theta$.
Our initial goal is to establish the uniform convergence
\begin{align}\label{eqn:uniform_convergence}
\sup_{\theta\in A} |\textstyle\frac{1}{n} \sum_{t=1}^n m_i(Z_t,\theta, \hat{h}(X_t)) - \E[m_i(Z,\theta, h_0(X))] | \toprob 0
\end{align}
for each moment $m_i$.  To this end, we first note that the continuity (\assumpref{continuity}) and domination (\assumpref{reg_m_dominated}) of $m_i$ imply the uniform law of large numbers
\begin{equation*}
\sup_{\theta\in A, h \in\mc{B}_{h_0,r}}\textstyle |\textstyle\frac{1}{n} \sum_{t=1}^n
m_i(Z_t, \theta, h(X_t))- \E_{Z}[m_i(Z, \theta, h(X))]| \toprob 0
\end{equation*}
for each moment $m_i$ \citep[see, e.g.,][Lem. 2.4]{Newey1994}.
Moreover, the mean value theorem and two applications of Cauchy-Schwarz yield
\begin{align*}
|\E[m_i(Z,\theta, \hat{h}(X)) \mid \hat{h}] - \E[m_i(Z,\theta, h_0(X))] |
&\leq |\E[\inner{\grad_\gamma m_i(Z,\theta, \tilde{h}^{(i)}(X))}{\hat{h}(X) - h_0(X)} \mid \hat{h}] |\\
& \leq |\E[\|\grad_\gamma m_i(Z,\theta, \tilde{h}^{(i)}(X))\|_2 \|\hat{h}(X) - h_0(X)\|_2 \mid \hat{h}] \\
&\leq \sqrt{\E[\norm{\grad_\gamma m_i(Z,\theta, \tilde{h}^{(i)}(X))}_2^2 \mid \hat{h}] \E[\norm{\hat{h}(X) - h_0(X)}_2^2 \mid \hat{h}]}
\end{align*}
for $\tilde{h}^{(i)}$ a convex combination of $h_0$ and $\hat{h}$.
Hence, the uniform bound on the moments of $\grad_\gamma m_i$ (\assumpref{reg_grad_gamma}) and the consistency of $\hat{h}$ (\assumpref{first-stage-consistency}) imply $\sup_{\theta\in A} |\E[m_i(Z,\theta, \hat{h}(X)) \mid \hat{h}] - \E[m_i(Z,\theta, h_0(X))] | \toprob 0$, and therefore 
\begin{align*}
\staticindic{\hat{h} \in \mc{B}_{h_0,r}}
\sup_{\theta\in A} |\textstyle\frac{1}{n} \sum_{t=1}^n m_i(Z_t,\theta, \hat{h}(X_t)) - \E[m_i(Z,\theta, h_0(X))] | \toprob 0
\end{align*}
by the triangle inequality.
Since $\staticindic{\hat{h} \in \mc{B}_{h_0,r}}\toprob 1$ by the assumed consistency of $\hat{h}$, 
the uniform convergence \eqnref{uniform_convergence} follows from \lemref{conditionhighprob}.
Given the uniform convergence \eqnref{uniform_convergence}, standard arguments now imply consistency given identifiability (\assumpref{identifiable}) and either the compactness conditions of \assumpargref{consistency}{compactness} \citep[see, e.g.,][Thm. 2.6]{Newey1994} or the convexity conditions of \assumpargref{consistency}{convexity} \citep[see, e.g.,][Thm. 2.7]{Newey1994}. 

\section{Proof of Lemma \ref{thm:Sorth}}

We will use the inequality that for any vector of random variables $(W_1,\ldots, W_{K})$,
$$\textstyle\E \left[ \prod_{i=1}^{K} |W_i|\right] \leq \prod_{i=1}^{K} \E \left[ |W_i|^{K}\right]^{\frac{1}{K}},$$
which follows from repeated application of \Holder's inequality.  
In particular, we have 
\begin{align*}
\textstyle
\E_{X}\left[\prod_{i=1}^\ell \left|\hat{h}_i(X)-h_{0,i}(X)\right|^{2\alpha_i} \right] \leq \prod_{i=1}^\ell \E_{X}\left[\left|\hat{h}_i(X)-h_{0,i}(X)\right|^{2\|\alpha\|_1} \right]^{\nicefrac{\alpha_i}{\|\alpha\|_1}} = \prod_{i=1}^\ell \|\hat{h}_i - h_{0,i}\|_{2\|\alpha\|_1}^{2\alpha_i}
   \end{align*}
Thus the first part follows by taking the root of the latter inequality and multiplying by $\sqrt{n}$. For the second part of the lemma, observe that under the condition for each nuisance function we have:
\begin{align*}
\sqrt{n} \prod_{i=1}^\ell \|\hat{h}_i - h_{0,i}\|_{2\|\alpha\|_1}^{\alpha_i} =~& n^{\frac{1}{2}-\sum_{i=1}^\ell \frac{\alpha_i}{\kappa_i \|\alpha\|_1}} \prod_{i=1}^\ell \left( n^{\frac{1}{\kappa_i\|\alpha\|_1}}\|\hat{h}_i - h_{0,i}\|_{2\|\alpha\|_1}\right)^{\alpha_i}
\end{align*}
If $\frac{1}{2}-\sum_{i=1}^\ell \frac{\alpha_i}{\kappa_i \|\alpha\|_1}\leq 0$, then all parts in the above product converge to $0$ in probability.

For the second part for all $\alpha  \in S$ we similarly have \begin{align*}
\textstyle
\E_{X}\left[\prod_{i=1}^\ell \left|\hat{h}_i(X)-h_{0,i}(X)\right|^{4\alpha_i} \right] \leq \prod_{i=1}^\ell \E_{X}\left[\left|\hat{h}_i(X)-h_{0,i}(X)\right|^{4\|\alpha\|_1} \right]^{\nicefrac{\alpha_i}{4\|\alpha\|_1}} = \prod_{i=1}^\ell \|\hat{h}_i - h_{0,i}\|_{4\|\alpha\|_1}^{4\alpha_i}
\end{align*}Hence to satisfy \assumpref{first-stage-consistency} it suffices to satisfy $\forall \alpha \in S, \forall  i$, $\|\hat{h}_i - h_{0,i}\|_{4\|\alpha\|_1} \toprob 0$. But by Holder inequality and our hypothesis we have $$\|\hat{h}_i - h_{0,i}\|_{4\|\alpha\|_1} \leq \|\hat{h}_i - h_{0,i}\|_{4[\max_{\alpha \in S}\|\alpha\|_1]} \toprob 0,$$ as we wanted.

\section{Proof of \thmref{limitations}}
\label{sec:limitations-proof}
Suppose that the PLR model holds with the conditional distribution of $\eta$ given $X$ Gaussian.
Consider a generic moment $m(T,Y,\theta_0, f_0(X),g_0(X),h_0(X))$, where $h_0(X)$ represents any additional nuisance independent of $f_0(X), g_0(X)$.
We will prove the result by contradiction.
Assume that $m$ is $2$-orthogonal with respect to $(f_0(X), g_0(X))$ and satisfies Assumption~\ref{ass:main}.
By $0$-orthogonality, we have
\begin{align}\label{eqn:0-moment}
\E \left[ m(T,Y, \theta_0, f_0(X),g_0(X),h_0(X)) |X\right]=&0
\end{align} 
for any choice of true model parameters $(\theta_0, f_0, g_0, h_0)$, so 
\begin{equation*}
\nabla_{f_0(X)}\E \left[ m(T,Y, \theta_0, f_0(X),g_0(X),h_0(X)) |X \right]=\nabla_{g_0(X)}\E \left[ m(T,Y, \theta_0, f_0(X),g_0(X),h_0(X)) |X \right]=0.
\end{equation*} 
Since $m$ is continuously differentiable (\assumpref{continuity}), we may differentiate under the integral sign \citep{Flanders1973} to find that
\begin{align*}
0 &= \nabla_{f_0(X)}\E \left[ m(T,Y, \theta_0, f_0(X),g_0(X),h_0(X)) |X \right] \\
&= \nabla_{f_0(X)}\E \left[ m(T,\theta_0T + f_0(X) + \epsilon, \theta_0, f_0(X),g_0(X),h_0(X)) |X \right] \\
&= \E \left[ \nabla_2m(T,Y, \theta_0, f_0(X),g_0(X),h_0(X))+\nabla_4m(T,Y, \theta_0, f_0(X),g_0(X),h_0(X)) |X \right] \quad\text{and}\\
0 &= \nabla_{g_0(X)}\E \left[ m(T,Y, \theta_0, f_0(X),g_0(X),h_0(X)) |X \right] \\
&= \nabla_{g_0(X)}\E \left[ m(g_0(X)+\eta,\theta_0(g_0(X)+\eta) + f_0(X) + \eta, \theta_0, f_0(X),g_0(X),h_0(X)) |X \right] \\
& =\E \left[ \nabla_1m(T,Y, \theta_0, f_0(X),g_0(X),h_0(X))+\nabla_2m(T,Y, \theta_0, f_0(X),g_0(X),h_0(X))\theta_0 |X \right] \\
&+\E \left[ \nabla_5m(T,Y, \theta_0, f_0(X),g_0(X),h_0(X))  |X \right].
\end{align*}
Moreover, by $1$-orthogonality, we have $\E \left[ \nabla_{i} m(T,Y, \theta_0, f_0(X),g_0(X),h_0(X))  |X\right]=0$ for $i \in \{4,5\}$, so
\begin{equation}\label{eq:firstorder}
\E \left[ \nabla_{i} m(T,Y, \theta_0, f_0(X),g_0(X),h_0(X)) |X \right]=0,\quad \forall i \in \{1,2,4,5\} \quad\text{and}\quad \forall\,(\theta_0, f, g, h).
\end{equation}
Hence,
\begin{align*}
\nabla_{g_0(X)}\E \left[ \nabla_4 m(T,Y, \theta_0, f_0(X),g_0(X),h_0(X)) |X \right]=\nabla_{f_0(X)}\E \left[ \nabla_1 m(T,Y, \theta_0, f_0(X),g_0(X),h_0(X)) |X \right] = 0,
\end{align*}
and we again exchange derivative and integral using the continuity of $\grad^2 m$ (\assumpref{continuity}) \citep{Flanders1973} to find
\begin{align*} 
&\E \left[ \nabla_{1,4} m(T,Y, \theta_0, f_0(X),g_0(X),h_0(X))+\nabla_{5,4}m(T,Y, \theta_0, f_0(X),g_0(X),h_0(X))\right]\\
&+ \E\left[\theta_0\nabla_{2,4}m(T,Y, \theta_0, f_0(X),g_0(X),h_0(X)) |X  \right]\\
&=\E \left[ \nabla_{4,1} m(T,Y, \theta_0, f_0(X),g_0(X),h_0(X))+\nabla_{2,1} m(T,Y, \theta_0, f_0(X),g_0(X),h_0(X)) |X  \right].
\end{align*}
Since the partial derivatives of $m$ are differentiable by \assumpref{continuity}, we have $\nabla_{1,4}m=\nabla_{4,1}m$ and therefore 
\begin{align*}  
&\E \left[\nabla_{5,4}m(T,Y, \theta_0, f_0(X),g_0(X),h_0(X))  |X \right]
+ \theta_0\E\left[\nabla_{2,4}m(T,Y, \theta_0, f_0(X),g_0(X),h_0(X)) |X  \right] \\
&=\E \left[\nabla_{2,1} m(T,Y, \theta_0, f_0(X),g_0(X),h_0(X)) |X  \right]
\end{align*}
By $2$-orthogonality, $ \E \left[\nabla_{5,4}m(T,Y, \theta_0, f_0(X),g_0(X),h_0(X)) |X \right]=0$, and hence
\begin{align} \label{eq:12} 
\theta_0\E\left[\nabla_{2,4}m(T,Y, \theta_0, f_0(X),g_0(X),h_0(X)) |X  \right] 
=\E \left[\nabla_{2,1} m(T,Y, \theta_0, f_0(X),g_0(X),h_0(X)) |X  \right].
\end{align}

Note that equality \eqref{eq:firstorder} also implies
\begin{align*}
0=\nabla_{f_0(X)}\E \left[ \nabla_2 m(T,Y, \theta_0, f_0(X),g_0(X),h_0(X)) |X \right]=\nabla_{f_0(X)}\E \left[ \nabla_4 m(T,Y, \theta_0, f_0(X),g_0(X),h_0(X)) |X \right].
\end{align*}
We again exchange derivative and integral using the continuity of $\grad^2 m$ (\assumpref{continuity}) \citep{Flanders1973} to find
\begin{align}\label{eq:zero} 
0&=\E \left[ \nabla_{2,2} m(T,Y, \theta_0, f_0(X),g_0(X),h_0(X))+\nabla_{2,4}m(T,Y, \theta_0, f_0(X),g_0(X),h_0(X))| X\right]\\
&=\E \left[ \nabla_{4,2} m(T,Y, \theta_0, f_0(X),g_0(X),h_0(X))+\nabla_{4,4} m(T,Y, \theta_0, f_0(X),g_0(X),h_0(X)) |X  \right]. \notag 
\end{align}
Since the partial derivatives of $m$ are continuous by \assumpref{continuity}, we have $\nabla_{2,4}m=\nabla_{4,2}m$ and therefore 
\begin{align*}  
\E \left[\nabla_{2,2}m(T,Y, \theta_0, f_0(X),g_0(X),h_0(X))  |X \right]=\E \left[\nabla_{4,4} m(T,Y, \theta_0, f_0(X),g_0(X),h_0(X)) |X  \right]
\end{align*}
By $2$-orthogonality, $ \E \left[\nabla_{4,4}m(T,Y, \theta_0, f_0(X),g_0(X),h_0(X)) |X \right]=0$, and hence
\begin{align} \label{eqn:12} 
\E\left[\nabla_{2,2}m(T,Y, \theta_0, f_0(X),g_0(X),h_0(X)) |X  \right] 
=0
\end{align} Combining the equalities \eqref{eq:12}, \eqref{eq:zero}, and \eqref{eqn:12} we find that
\begin{align} \label{eq:1234} 
\E\left[\nabla_{2,1}m(T,Y, \theta_0, f_0(X),g_0(X),h_0(X)) |X  \right] 
=0.
\end{align}
Now, the $0$-orthogonality condition \eqnref{0-moment}, the continuity of $\grad m$ (\assumpref{continuity}), and differentiation under the integral sign \cite{Flanders1973} imply that
\begin{align*}
0 &= \nabla_{\theta_0}  \E \left[ m(T,Y, \theta_0, f_0(X),g_0(X),h_0(X)) |X\right] 
= \nabla_{\theta_0}  \E \left[ m(T,\theta_0 T + f_0(X) + \epsilon, \theta_0, f_0(X),g_0(X),h_0(X)) |X\right] \\
&= \E \left[  \nabla_2 m(T,Y, \theta_0, f_0(X),g_0(X),h_0(X)) \cdot T+\nabla_3 m(T,Y, \theta_0, f_0(X),g_0(X),h_0(X)) |X \right].
\end{align*}
Since $T=g_0(X)+\eta$ and $\E \left[  \nabla_2 m(T,Y, \theta_0, f_0(X),g_0(X),h_0(X))|X \right]=0$  by equality \ref{eqn:0-moment}, 
\begin{equation}\label{eq:12345}
\E \left[  \nabla_2 m(T,Y, \theta_0, f_0(X),g_0(X),h_0(X)) \cdot \eta+\nabla_3 m(T,Y, \theta_0, f_0(X),g_0(X),h_0(X)) |X \right]=0
\end{equation}
Since $\eta$ is conditionally Gaussian given $X$, Stein's lemma \citep{Stein}, the symmetry of the partial derivatives of $m$, and the equality \ref{eq:1234} imply that
\begin{align*}
&\E \left[  \nabla_2 m(T,Y, \theta_0, f_0(X),g_0(X),h_0(X)) \cdot \eta |X \right]=\E \left[  \nabla_2 m(g_0(X)+\eta,Y, \theta_0, f_0(X),g_0(X),h_0(X)) \cdot \eta |X \right]\\
&= \E \left[  \nabla_{\eta,2} m(g_0(X)+\eta,Y, \theta_0, f_0(X),g_0(X),h_0(X)) |X \right] \\
&=\E \left[  \nabla_{1,2} m(T,Y, \theta_0, f_0(X),g_0(X),h_0(X))  |X \right]
=\E \left[  \nabla_{2,1} m(T,Y, \theta_0, f_0(X),g_0(X),h_0(X))  |X \right]=0.
\end{align*}
Hence the equality \eqref{eq:12345} gives
$ \E \left[ \nabla_3 m(T,Y, \theta_0, f_0(X),g_0(X),h_0(X)) |X \right]=0$
which contradicts \assumpref{reg_full_rank}.

\section{Proof of \propref{deg}}
\label{sec:lemma_degeneracy}
Fix any moment of the form $m(T,Y,\theta,f(X),g(X),h(X))$, where $h$ represents any nuisance in addition to $(f,g)$. Let $F$ be the space of all valid nuisance functions $(f,g,h)$ and $F(X) = \{(f(X),g(X),h(X)) : (f,g,h) \in F\}$.

We prove the lemma by contradiction. Suppose $m$ satisfies the three hypothesis of our lemma. 
We start by establishing that $\Var\left(m(T,Y, \theta_0, f_0(X),g_0(X),h_0(X))\right)=0$ for all $(\theta_0,f_0,g_0,h_0)$. Fix any $(\theta_0,f_0,g_0,h_0)$, and suppose $\Var\left(m(T,Y, \theta_0, f_0(X),g_0(X),h_0(X))\right) > 0$. As in the beginning of the proof of Theorem~\ref{thm:orth} the mean value theorem implies
\begin{equation}\label{eqn:firsteq}
\hat{J}(\hat{f},\hat{g},\hat{h})\sqrt{n} (\theta_0-\ssest)
=\underbrace{\frac{1}{\sqrt{n}} \sum_{t=1}^n m\left(T_t,Y_t, \theta_0, \hat{f}(X_t),\hat{g}(X_t), \hat{h}(X_t)\right)}_B
\end{equation}where $\hat{J}(f,g,h) \defeq \frac{1}{n} \sum_{t=1}^n \nabla_\theta m(T_t,Y_t, \tilde{\theta}, f(X_t),g(X_t),h(X_t)),$  for some $\tilde{\theta}$ which is a convex combination of $\ssest,\theta_0$. In the proof of Theorem~\ref{thm:orth} we only use \assumpref{reg_full_rank} to invert $J = \E\left[\nabla_\theta m(T,Y, \theta_0, f_0(X),g_0(X),h_0(X)\right]$ which is the in-probability limit of $\hat{J}(\hat{f},\hat{g},\hat{h})$. In particular, both of the following results established in the proof of the Theorem~\ref{thm:orth} remain true in our setting:
\begin{itemize}
\item  $B$ tends to a normal distribution with mean zero and variance $\Var\left(m(T,Y, \theta_0, f_0(X),g_0(X),h_0(X))\right) > 0$.
\item  $\hat{J}(\hat{f},\hat{g},\hat{h})$ converges in probability to $J$.
\end{itemize} Since in this case $J=0$, as Assumption~\ref{ass:main}.3 is violated, and $\sqrt{n} (\theta_0-\ssest)$ is bounded in probability, we get that $\hat{J}(\hat{f},\hat{g},\hat{h})\sqrt{n} (\theta_0-\ssest)$ converges to zero in probability. By \eqnref{firsteq}, this contradicts the fact that $B$ converges to a distribution with non-zero variance. 
Hence, $\Var\left(m(T,Y, \theta_0, f_0(X),g_0(X),h_0(X))\right)=0$ as desired.

Now recalling that, for all $(\theta_0,f_0,g_0,h_0)$, $\mathbb{E}[m(T,Y, \theta_0, f_0(X),g_0(X),h_0(X))]=0$, we conclude that for all $(\theta_0,f_0,g_0,h_0)$, $m(T,Y, \theta_0, f_0(X),g_0(X),h_0(X))=0$, almost surely with respect to the random variables $X,\epsilon,\eta$. Now fix $(\theta_0,f_0,g_0,h_0)$. 
Now suppose that for some $(a,b)\in \R^2$, $m(a,b, \theta_0, f_0(X),g_0(X),h_0(X))\neq 0$.  
Then, since $m$ is continuous, there exists a neighborhood $\mc{N}$ such that $m(a',b',\theta_0, f_0(X),g_0(X),h_0(X))\neq 0$ for all $(a',b')\in\mc{N}$.
Since the conditional distribution of $\epsilon,\eta$ has full support (a.s.\ X) and, given $X$,  $(T,Y)$ is an invertible linear function of $(\epsilon, \eta)$, the conditional distribution of $(T,Y)$ given $X$ also has full support on $\R^2$ (a.s.\ X).
Hence, $\Pr(m(T,Y, \theta_0, f_0(X),g_0(X),h_0(X)) \neq 0) \geq \Pr((T,Y) \in \mc{N}) > 0$.
This is a contradiction as $m(T,Y, \theta_0, f_0(X),g_0(X),h_0(X))$ is a.s. zero.
Therefore, for almost every $X$ and all $a,b \in \mathbb{R}$ and $(\theta_0,f_0,g_0,h_0)$, $m(a,b, \theta_0, f_0(X),g_0(X),h_0(X))=0$. 
Since the distribution of $X$ is independent of $\theta_0$ and $|\Theta| \geq 2$, we therefore have 
\[
\E[m(Y,T, \theta, f_0(X),g_0(X),h_0(X))] = 0
\]
for some $\theta\neq\theta_0$, which contradicts identifiability.

\section{Proof of Lemma \ref{MGFgaussian}}\label{sec:proof-MGFgaussian}
Since the characteristic function of a Gaussian distribution is well-defined and finite on the whole real line, Levy's Inversion Formula implies that the Gaussian distribution is uniquely characterized by its moments \citep[Sec. 3.3.1]{Durrett}.

\section{Proof of Theorem \ref{thm:h1}}\label{sec:proof-h1}
Smoothness follows from the fact that $m$ is a polynomial in $(\theta, q(X), g(X), \mu_{r-1}(X))$.
Non-degeneracy follows from the PLR equations (\defref{plr}), the property $\E[\eta \mid X]=0$, and our choice of $r$ as
\begin{align*}
\E [\nabla_{\theta} m(Z, \theta_0, q_0(X),g_0(X),\E[\eta^{r-1}|X])]&
=-\E [ (T-g_0(X))(\eta^r-\E[\eta^r|X]- r\eta \E[\eta^{r-1}|X])]\\
&=-\E [ \eta(\eta^r-\E[\eta^r|X]- r\eta \E[\eta^{r-1}|X] ]\\
&=-\E[\E [ \eta^{r+1}|X]-r \E[\eta^2|X] \E [ \eta^{r-1}|X]] \not =0.
\end{align*}

We next establish $0$-orthogonality using the property $\E[\epsilon \mid X, T]=0$ of \defref{plr}:
\begin{equation*} 
\E\left[ m(Z, \theta_0, q_0(X),g_0(X),\E[\eta^{r-1}|X])\mid X \right]=\E[\epsilon\left(\eta^r-\E[\eta^r|X]- r\eta \E[\eta^{r-1}|X]\mid X\right)]=0.
\end{equation*}
Our choice of $r$ further implies identifiability as, for $\theta\neq\theta_0$,
\begin{align*} 
\E\left[ m(Z, \theta, q_0(X),g_0(X),\E[\eta^{r-1}|X])\right]
&=(\theta_0-\theta)\E[\E[\eta^{r+1}| X]-\E[\eta|X]\E[\eta^r|X]- rE[\eta^2|X] \E[\eta^{r-1}|X]] \\
&=(\theta_0-\theta)\E[\E[\eta^{r+1}| X] - rE[\eta^2|X] \E[\eta^{r-1}|X]]
\neq 0.
\end{align*}
We invoke the properties $\E[\eta \mid X]=0$ and $\E[\epsilon \mid X, T]=0$ of \defref{plr} to derive $1$-orthogonality via
\begin{align*} 
&\E \left[ \nabla_{q(X)} m(Z, \theta_0, q_0(X),g_0(X),\E[\eta^{r-1}|X]) |X\right]=-\E \left[ \eta^r-\E[\eta^r|X]- r\eta \E[\eta^{r-1}|X] \mid X\right]=0,\\
&\E \left[ \nabla_{g(X)} m(Z, \theta_0, q_0(X),g_0(X),\E[\eta^{r-1}|X]) |X\right] \\
&=\theta_0\E\left[\eta^r-\E[\eta^r|X]- r\eta \E[\eta^{r-1}|X]\mid X\right]-\E \left[ \epsilon(r\eta^{r-1}- r \E[\eta^{r-1}|X]) \mid X\right]=0, \qtext{and} \\
&\E\left[ \nabla_{\mu_{r-1}(X)} m(Z, \theta_0, q_0(X),g_0(X),\E[\eta^{r-1}|X]) \right]
=  -\E[\epsilon\, r\, \eta |X] = 0.
\end{align*}
The same properties also yield $2$-orthogonality for the second partial derivatives of $q(X)$ via
\begin{align*} 
&\E \left[ \nabla_{q(X),q(X)}^2 m(Z, \theta_0, q_0(X),g_0(X),\E[\eta^{r-1}|X]) |X\right]=0,\\
&\E \left[ \nabla_{q(X),g(X)}^2 m(Z, \theta_0, q_0(X),g_0(X),\E[\eta^{r-1}|X]) |X\right]=\E \left[ r\eta^{r-1}- r \E[\eta^{r-1}|X] |X\right]=0, \qtext{and}\\ 
&\E \left[ \nabla_{q(X),\mu_{r-1}(X)}^2 m(Z, \theta_0, q_0(X),g_0(X),\E[\eta^{r-1}|X]) |X\right]=\E \left[ r\,\eta |X\right]=0,
\end{align*}
for the second partial derivatives of $g(X)$ via
\begin{align*}
&\E \left[ \nabla_{g(X),g(X)}^2 m(Z, \theta_0, q_0(X),g_0(X),\E[\eta^{r-1}|X]) |X\right] 
=\E\left[ -(r\eta^{r-1}- r \E[\eta^{r-1}|X])+ \epsilon\,r(r-1)\eta^{r-2} |X\right]\\
&=r(r-2) \E \left[\E \left[ \epsilon | X,T \right] \eta^{r-2} |X\right]=0 \qtext{and} \\
&\E \left[ \nabla_{g(X),\mu_{r-1}(X)}^2 m(Z, \theta_0, q_0(X),g_0(X),\E[\eta^{r-1}|X]) |X\right]
= -\theta_0\E[ r \eta | X] + \E[\epsilon\, r| X] = 0,
\end{align*}
and for the second partial derivatives of $\mu_{r-1}(X)$ via
\begin{align*} 
&\E \left[ \nabla_{\mu_{r-1}(X), \mu_{r-1}(X)}^2 m(Z, \theta_0, q_0(X),g_0(X),\E[\eta^{r-1}|X]) \mid X\right]=0.
\end{align*}
This establishes $2$-orthogonality.

\section{Proof of Theorem \ref{thm:unknown2}} \label{sec:proof-unknown2}
The majority of the proof is identical to that of \thmref{h1}; it only remains to show that the advertised partial derivatives with respect to $\mu_{r}(X)$ are also mean zero given $X$.  These equalities follow from the property $\E[\eta \mid X]=0$ of \defref{plr}:
\begin{align*}
&\E \left[ \nabla_{\mu_{r}(X)} m(Z, \theta_0, q_0(X),g_0(X),\E[ \eta^{r-1} | X], \E[ \eta^{r} | X])) |X\right]=-\E \left[ \epsilon|X\right]= 0,\\
&\E \left[ \nabla_{\mu_{r}(X),\mu_{r}(X)}^2m(Z, \theta_0, q_0(X),g_0(X),\E[ \eta^{r-1} | X], \E[ \eta^{r} | X])) \mid X\right]= 0, \qtext{and}\\
&\E \left[ \nabla_{\mu_{r}(X),\mu_{r-1}(X)}^2m(Z, \theta_0, q_0(X),g_0(X),\E[ \eta^{r-1} | X], \E[ \eta^{r} | X])) \mid X\right]= 0.\\
\end{align*}

\section{Proof of Theorem \ref{thm:sparsethm}} \label{sec:proof-sparsethm}
We prove the result explicitly for the excess kurtosis setting with $\E[\eta^4]\not = 3\E[\eta^2]^2$.
A parallel argument yields the result for non-zero skewness ($\E[\eta^3]\not = 0$).

To establish $\sqrt{n}$-consistency and asymptotic normality, it suffices to check each of the preconditions of Theorems \ref{thm:orth} and \ref{thm:consistency}.
Since $\eta$ is independent of $X$ and $\E[\eta^4]\not = 3\E[\eta^2]^2$, the conditions of Theorem \ref{thm:unknown2} are satisfied with $r = 3$. 
Hence, the moments $m$ of Theorem \ref{thm:unknown2} satisfy $S$-orthogonality (\assumpref{orthogonality}) for $S=\{\alpha \in \mathbb{N}^4 : \|\alpha\|_1 \leq 2\}\setminus \{(1,0,0,1),(0,1,0,1)\} $ with respect to the nuisance $(\inner{q_0}{X},\inner{\gamma_0}{X},\E[\eta^2],\E[\eta^3])$, identifiability (\assumpref{identifiable}), non-degeneracy of $\E\left[\nabla_\theta m(Z,\theta_0,h_0(X))\right]$ (\assumpref{reg_full_rank}), and continuity of $\grad m^2$ (\assumpref{continuity}).
The form of $m$, the standard Gaussian i.i.d.\ components of $X$, and the almost sure boundedness of $\eta$ and $\epsilon$ further imply that the regularity conditions of \assumpref{reg_moments} are all satisfied for any choice of $r>0$. 
Hence, it only remains to establish the first stage consistency and rate assumptions (\assumpsref{first-stage-consistency} and \assumpssref{first-stage}) and the convexity conditions (\assumpargref{consistency}{convexity}).
  
\subsection{Checking Rate of First Stage (\assumpref{first-stage})}
We begin with \assumpref{first-stage}. Since $\{\alpha \in \mathbb{N}^4:\|\alpha\|_1 \leq 3\}\setminus S=\{\alpha \in \mathbb{N}^4:\|\alpha\|_1= 3\} \cup \{(1,0,0,1),(0,1,0,1)\}$ by Lemma \ref{thm:Sorth}, it suffices to establish the sufficient condition \eqnref{suff-cond-1} for $\alpha=(0,1,0,1)$ and $\alpha=(1,0,0,1)$ and the condition \eqnref{suff-cond-2} for the $\alpha$ with $\|\alpha\|_1=3$. 
Hence, it suffices to satisfy
\begin{itemize}
\item[(1)] $n^{\frac{1}{2}}\E_X[|\ldot{X}{\hat{q}-q_0}|^4]^{\frac{1}{4}} \cdot |\hat{\mu}_{3}-\E[\eta^3]| \toprob0$, which corresponds to $\alpha=(1,0,0,1)$ and condition \eqnref{suff-cond-1},
\item[(2)] $n^{\frac{1}{2}}\E_X[|\ldot{X}{\hat{\gamma}-\gamma_0}|^4]^{\frac{1}{4}} \cdot |\hat{\mu}_{3}-\E[\eta^3]|\toprob0$, which corresponds to $\alpha=(0,1,0,1)$ and condition \eqnref{suff-cond-1},
\item[(3)]  $ n^{\frac{1}{2}}\E_X[|\ldot{X}{\hat{q}-q_0}|^6]^{\frac{1}{2}}\toprob0$,

\item[(4)] $n^{\frac{1}{2}}\E_X[|\ldot{X}{\hat{\gamma}-\gamma_0}|^6]^{\frac{1}{2}} \toprob0$,

\item[(5)] $n^{\frac{1}{2}}|\hat{\mu}_{2}-\E[\eta^2]|^3 \toprob0$, and
\item[(6)] $n^{\frac{1}{2}}|\hat{\mu}_{3}-\E[\eta^3]|^3 \toprob0$,
\end{itemize} where $X$ a vector of i.i.d.\ mean-zero standard Gaussian entries, independent from the first stage, and the convergence to zero is considered in probability with respect to the first stage random variables.

We will estimate $q,\gamma_0$ using half of our first-stage sample and use our estimate $\hat{\gamma}$ to produce an estimate of the second and third moments of $\eta$ based on the other half of the sample and the following lemma.
\begin{lemma}\label{lem:proofstages}
Suppose that an estimator $\hat{\gamma} \in \mathbb{R}^p$ based on $n$ sample points satisfies $\E_X[|\ldot{X}{\hat{\gamma}-\gamma_0}|^6]^{\frac{1}{2}} =O_P\left( \frac{1}{\sqrt{n}}\right)$ for $X$ independent of $\hat{\gamma}$.  
If
\[\textstyle
\hat{\mu}_{2} := \frac{1}{n}\sum_{t=1}^n(T_t' - \inner{X_t'}{\hat{\gamma}})^2
\qtext{and}
\hat{\mu}_{3} := \frac{1}{n}\sum_{t=1}^n(T_t' - \inner{X_t'}{\hat{\gamma}})^3 - 3\frac{1}{n}\sum_{t=1}^n(T_t' - \inner{X_t'}{\hat{\gamma}})\hat{\mu}_{2}
\]
for $(T_t', X_t')_{t=1}^n$ i.i.d.\ replicates of $(T,X)$ independent of $\hat{\gamma}$, then
\begin{equation}\label{eqn:secondmoment}
\textstyle
|\hat{\mu}_{2}-\E[\eta^2]|=O_P\left(\frac{1}{n^{\frac{1}{3}}}\right)
\qtext{and}
|\hat{\mu}_{3}-\E[\eta^3]|=O_P\left(\frac{1}{\sqrt{n}}\right).
\end{equation}
As a result, 
\[
n^{\frac{1}{2}}|\hat{\mu}_{2}-\E[\eta^2]|^3 \toprob0
\qtext{and} 
n^{\frac{1}{2}}|\hat{\mu}_{3}-\E[\eta^3]|^3 \toprob0.
\]
\end{lemma}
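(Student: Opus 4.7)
The plan is to compute everything conditional on $\hat\gamma$ and then promote conditional stochastic bounds to unconditional $O_P$ statements. Write $\Delta_t \defeq \inner{X_t'}{\hat\gamma - \gamma_0}$, so that $T_t' - \inner{X_t'}{\hat\gamma} = \eta_t' - \Delta_t$, and note that, conditional on $\hat\gamma$, the samples $(X_t',\eta_t')_{t=1}^n$ remain i.i.d.\ with $X_t'$ a standard Gaussian vector and $\eta_t'$ independent of $X_t'$. Two structural facts drive the proof: (i) $\E[\eta_t']=0$ with $\eta_t'$ independent of $\Delta_t$; (ii) $\Delta_t \mid \hat\gamma$ is a centered univariate Gaussian, so $\E[\Delta_t^{2j+1}\mid \hat\gamma] = 0$ for every integer $j \geq 0$. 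The hypothesis $\E_X[|\ldot{X}{\hat\gamma - \gamma_0}|^6]^{1/2} = O_P(n^{-1/2})$ gives $\E[\Delta_t^6\mid \hat\gamma] = O_P(n^{-1})$, and Jensen's inequality then yields the moment bounds $\E[|\Delta_t|^k \mid \hat\gamma] = O_P(n^{-k/6})$ for all $0 \leq k \leq 6$.

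\textbf{Rates for $\hat\mu_2$ and $\hat\mu_3$.} For $\hat\mu_2$, expand $(\eta_t'-\Delta_t)^2$ and bound each of the three resulting averages: the CLT gives $\frac{1}{n}\sum_t \eta_t'^2 = \E[\eta^2] + O_P(n^{-1/2})$; the cross term $\frac{1}{n}\sum_t \eta_t'\Delta_t$ has conditional mean zero by (i) and conditional variance $\frac{1}{n}\E[\eta^2]\E[\Delta_t^2\mid\hat\gamma] = O_P(n^{-4/3})$, hence is $O_P(n^{-2/3})$ by Chebyshev; and the conditional expectation of $\frac{1}{n}\sum_t \Delta_t^2$ equals $\E[\Delta_t^2\mid\hat\gamma] = O_P(n^{-1/3})$, giving $O_P(n^{-1/3})$ by Markov. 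The dominant $O_P(n^{-1/3})$ term yields $|\hat\mu_2 - \E[\eta^2]| = O_P(n^{-1/3})$. For $\hat\mu_3$, expand $(\eta_t'-\Delta_t)^3$ into four averages. The CLT gives $\frac{1}{n}\sum_t \eta_t'^3 = \E[\eta^3] + O_P(n^{-1/2})$; the remaining three averages all have conditional mean zero (for the $\eta_t'^2\Delta_t$ and $\Delta_t^3$ terms by (ii); for the $\eta_t'\Delta_t^2$ term by (i)) and, by variance computations analogous to the $\hat\mu_2$ case, are respectively $O_P(n^{-2/3})$, $O_P(n^{-5/6})$, and $O_P(n^{-1})$. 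Hence $\frac{1}{n}\sum_t (\eta_t'-\Delta_t)^3 = \E[\eta^3] + O_P(n^{-1/2})$. A parallel CLT/symmetry argument gives $\frac{1}{n}\sum_t (\eta_t'-\Delta_t) = O_P(n^{-1/2})$, and since $\hat\mu_2 = O_P(1)$, the subtracted correction $3(\frac{1}{n}\sum_t(\eta_t'-\Delta_t))\hat\mu_2$ is $O_P(n^{-1/2})$. Combining yields $|\hat\mu_3 - \E[\eta^3]| = O_P(n^{-1/2})$.

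\textbf{Conclusion and main obstacle.} Cubing the two bounds gives $|\hat\mu_2-\E[\eta^2]|^3 = O_P(n^{-1})$ and $|\hat\mu_3-\E[\eta^3]|^3 = O_P(n^{-3/2})$; multiplying by $\sqrt{n}$ gives $O_P(n^{-1/2})$ and $O_P(n^{-1})$, both of which tend to zero in probability, as required. The main obstacle is careful bookkeeping in the $\hat\mu_3$ step: one must verify that each of the mixed terms appearing in the binomial expansion has conditional mean zero, which simultaneously relies on $\E[\eta]=0$, independence of $\eta$ from $X$, and the Gaussian symmetry $\E[\Delta_t^{2j+1}\mid\hat\gamma]=0$. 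Without all three, a leading bias term of order $n^{-1/3}$ would survive in $\hat\mu_3$ and violate the target $O_P(n^{-1/2})$ rate; with them, every non-CLT contribution is provably $o_P(n^{-1/2})$.
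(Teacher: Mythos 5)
Your proof is correct in the setting where the lemma is applied, but it follows a genuinely different route from the paper's and quietly imports a hypothesis that the lemma itself does not state. The paper never expands the cube term by term: writing $\delta=\ldot{X}{\gamma_0-\hat\gamma}$, it observes that $\hat\mu_3$ is the empirical version of $\E[(\delta+\eta)^3]-3\E[\delta+\eta]\E[(\delta+\eta)^2]$, which by independence and $\E[\eta]=0$ equals $\E[\eta^3]+\E[\delta^3]-3\E[\delta]\E[\delta^2]$; the entire bias is then controlled by $\E[|\delta|^3]\le\E[\delta^6]^{1/2}=O_P(n^{-1/2})$ via Cauchy--Schwarz and \Holder, and one CLT/Slutsky step finishes. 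In particular, the cross-term bias $-3\E[\eta^2]\E[\Delta_t\mid\hat\gamma]$ produced by the $\eta_t'^2\Delta_t$ term is cancelled \emph{exactly} by the $-3\bigl(\frac{1}{n}\sum_t(\eta_t'-\Delta_t)\bigr)\hat\mu_2$ correction, so no symmetry of $\Delta_t$ is needed and the lemma holds as stated for arbitrary $X$ satisfying the sixth-moment hypothesis. Your argument instead bounds the four binomial averages and the correction term separately, and to make the $\eta_t'^2\Delta_t$ average and the correction each individually $o_P(n^{-1/2})$ you invoke $\E[\Delta_t\mid\hat\gamma]=0$, i.e.\ the Gaussianity (or at least the symmetry) of $X$. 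That is available in \thmref{sparsethm}, where the lemma is used, but it is not among the lemma's hypotheses; without it, $\E[\Delta_t\mid\hat\gamma]$ is only $O_P(n^{-1/6})$, and the two pieces you bound separately would each be too large --- they cancel, but your decomposition does not see the cancellation. So your conditional mean/variance computations are all correct and the proof is valid for the paper's application, while the paper's cumulant-style identity is both shorter and strictly more general.
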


\begin{proof}
 We begin with the third moment estimation. 
 For a new datapoint $(T,X)$ independent of $\hat{\gamma}$, define $\delta \defeq \ldot{X}{\gamma_0-\hat{\gamma}}$ so that $T - \inner{X}{\hat{\gamma}} = \delta + \eta$.
Since $\eta$ is independent of $(X, \hat{\gamma})$, and $\E[\eta]=0$, we have
\begin{align*}
\E_{X, \eta}[\left(\delta+\eta\right)^3]-3\E_{X, \eta}[\left(\delta+\eta\right)]\E_{X, \eta}[\left(\delta+\eta\right)^2]=\E[\eta^3]+\E_{X}[\delta^3]-3\E_{X}[\delta^2]\E_{X}[\delta]
\end{align*}
or equivalently
\begin{align}\label{eq:ident3}
\E[\eta^3]=\E_{X, \eta}[\left(\delta+\eta\right)^3]-3\E_{X, \eta}[\left(\delta+\eta\right)]\E_{X, \eta}[\left( \delta+\eta\right)^2]-\E_X[\delta^3]+3\E_X[\delta^2]\E_X[\delta].
\end{align}

Since $\E_X[|\delta|^3] \leq \E_X[\delta^6]^{\frac{1}{2}}=O_P(1/\sqrt{n})$ by Cauchy-Schwarz and our assumption on $\hat{\gamma}$, and $|\E_X[\delta]\E_X[\delta^2]| \leq \E_X[|\delta|^3]$ by Holder's inequality, the equality (\ref{eq:ident3}) implies that 
\begin{align*}
|\E[\eta^3]-(\E_{X,\eta}[\left(\delta+\eta\right)^3]-3\E_{X,\eta}[\delta+\eta]\E_{X,\eta}[\left( \delta+\eta\right)^2])| =O_P(1/\sqrt{n}).
\end{align*} 
Since $\E_{X,\eta}[\left(\delta+\eta\right)^6] =O(1)$, the central limit theorem, the strong law of large numbers, and Slutsky's theorem imply that
$$
\hat{\mu}_{3} - (\E_{X,\eta}[\left(\delta+\eta\right)^3]-3\E_{X,\eta}[\delta+\eta]\E_{X,\eta}[\left( \delta+\eta\right)^2]) = O_P(1/\sqrt{n}).
$$
Therefore, $$|\hat{\mu}_{3}-\E[\eta^3]|= O_P(1/\sqrt{n}).$$ 

The second moment estimation follows similarly using the identity,
$\E[\eta^2]=\E_{X,\eta}[\left(\delta+\eta\right)^2]-\E_X[\delta^2],$
and the fact that  $\E_X[\delta^2] \leq \E_X[|\delta|^3]^{\frac{2}{3}} =O_P(n^{-\frac{1}{3}})$ by Holder's inequality.
\end{proof}

In light of \lemref{proofstages} it suffices to estimate the vectors $q_0$ and $\gamma_0$ using $n$ sample points so that \begin{itemize}
\item $n^{\frac{1}{2}}\E_X[|\ldot{X}{\hat{q}-q_0}|^4]^{\frac{1}{4}}n^{-\frac{1}{2}} \toprob0 \Leftrightarrow \E_X[|\ldot{X}{\hat{q}-q_0}|^4]^{\frac{1}{4}}\toprob0$, 
\item $n^{\frac{1}{2}}\E_X[|\ldot{X}{\hat{\gamma}-\gamma_0}|^4]^{\frac{1}{4}}n^{-\frac{1}{2}} \toprob0 \Leftrightarrow \E_X[|\ldot{X}{\hat{\gamma}-\gamma_0}|^4]^{\frac{1}{4}}\toprob0$,
\item $n^{\frac{1}{2}}\E_X[|\ldot{X}{\hat{q}-q_0}|^6]^{\frac{1}{2}} \toprob0$, and
\item $n^{\frac{1}{2}}\E_X[|\ldot{X}{\hat{\gamma}-\gamma_0}|^6]^{\frac{1}{2}} \toprob0$,
\end{itemize}
and the rest of the conditions will follow.
To achieve these conclusions we use the following result on the performance of the Lasso.  The following theorem is distilled from \citep[Chapter 11]{HastieTiWa2015}.

{\begin{theorem}\label{LASSO}
Let $p,s \in \mathbb{N}$ with $s \leq p$ and $s=o(n^{2/3}/\log p )$ and $\sigma>0$, and suppose that we observe i.i.d.\ datapoints $(\tilde{Y}_i,\tilde{X}_i)_{i=1}^n$ distributed according to the model $\tilde{Y}=\ldot{\tilde{X}}{\beta_0}+w$ 
for an $s$-sparse $\beta_0\in \mathbb{R}^p$, $\tilde{X} \in \mathbb{R}^p$ with standard Gaussian  entries, and $w\in\R^p$ independent mean-zero noise with $\|w\|_{\infty} \leq \sigma$. 
Suppose that $p$ grows to infinity with $n$. Then with a choice of tuning parameter $\lambda_n = 2\sigma\sqrt{3\log p/n}$, the Lasso estimate $\hat{\beta}_0$ fit to this dataset satisfies $\|\hat{\beta}_0-\beta_0\|_2 = O_P(\sqrt{s \log p/n})$.
\end{theorem}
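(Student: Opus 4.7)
The plan is to follow the classical three-step analysis of $\ell_2$-consistency for the Lasso under Gaussian random design; as the theorem is essentially a standard result distilled from \cite{HastieTiWa2015}, the bulk of the argument is well-trodden.

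First, I would control the gradient of the unregularized loss at $\beta_0$. The key object is $\tfrac{1}{n}\tilde{X}^\top w \in \R^p$: since $\tilde{X}$ has i.i.d.\ standard Gaussian entries independent of $w$ and $\|w\|_\infty \leq \sigma$, each coordinate is, conditionally on $w$, a mean-zero Gaussian with variance at most $\sigma^2/n$. A union bound over the $p$ coordinates together with the standard Gaussian tail inequality yields $\|\tfrac{1}{n}\tilde{X}^\top w\|_\infty \leq \sigma\sqrt{3\log p/n} = \lambda_n/2$ with probability tending to $1$ (this is where the constant $3$ in the choice of $\lambda_n$ originates, via $p \to \infty$). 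On this high-probability event, the Lasso basic inequality (optimality of $\hat{\beta}_0$) combined with the noise bound yields the classical cone condition $\|\hat{\Delta}_{S^c}\|_1 \leq 3\|\hat{\Delta}_S\|_1$ where $\hat{\Delta} \defeq \hat{\beta}_0 - \beta_0$ and $S$ is the support of $\beta_0$; in particular $\|\hat{\Delta}\|_1 \leq 4\sqrt{s}\|\hat{\Delta}\|_2$ by Cauchy-Schwarz, and the basic inequality also gives the quadratic bound $\tfrac{1}{2n}\|\tilde{X}\hat{\Delta}\|_2^2 \leq \tfrac{3\lambda_n}{2}\|\hat{\Delta}_S\|_1$.

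Second, I would invoke the restricted eigenvalue (RE) property for the Gaussian design. The stated sparsity scaling $s = o(n^{2/3}/\log p)$ is stronger than needed at this stage, but it certainly implies $s\log p/n = o(1)$, which is the regime in which a standard random-matrix concentration argument guarantees that with probability tending to one, $\tfrac{1}{n}\|\tilde{X}v\|_2^2 \geq c\|v\|_2^2$ uniformly over the $3$-cone $\{v : \|v_{S^c}\|_1 \leq 3\|v_S\|_1\}$ for a universal constant $c>0$. Chaining this against the quadratic bound from Step 1 delivers $c\|\hat{\Delta}\|_2^2 \leq 3\lambda_n\sqrt{s}\|\hat{\Delta}\|_2$, and hence $\|\hat{\Delta}\|_2 = O_P(\lambda_n\sqrt{s}) = O_P(\sqrt{s\log p/n})$, as claimed.

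The main technical obstacle is establishing the restricted eigenvalue condition for the random Gaussian design; the other two steps are essentially algebraic manipulations of the optimality conditions combined with sub-Gaussian tail bounds. Since the RE fact is well-known specifically for i.i.d.\ Gaussian designs and is precisely the type of off-the-shelf result the theorem was packaged to cite, I would invoke it directly from the literature rather than redo the concentration argument from scratch.
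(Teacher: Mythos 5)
Your proposal is correct and follows essentially the same route as the paper: the paper simply cites the packaged Lasso guarantee (Theorem 11.1 and Example 11.2 of Hastie--Tibshirani--Wainwright) with $\lambda_n = 2\sigma\sqrt{3\log p/n}$ and notes that the failure probability $2\exp(-\tfrac12\log p)$ vanishes as $p\to\infty$, whereas you unpack the standard basic-inequality/cone/restricted-eigenvalue argument underlying that citation and delegate only the RE verification for Gaussian designs to the literature. The substance --- the noise bound $\|\tfrac1n\tilde{X}^\top w\|_\infty \le \lambda_n/2$ driving the constant $3$, the cone condition, and the RE step --- is identical, so there is no gap.
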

\begin{proof}

Using Theorem 11.1 and Example 11.2 of \cite{HastieTiWa2015}, we know that since $\tilde{X}$ has iid $N(0,1)$ entries, if  $\lambda_n = 2\sigma \sqrt{3 \log(p)/n}$, we have
\begin{equation}
\Pr\left[ \frac{\|\hat{\beta}_0-\beta_0\|_2}{\sigma\sqrt{3 s \log p/n}}> 1\right] \leq 2\exp\left\{-\frac{1}{2}\log(p)\right\} .
\end{equation}
Since $p$ grows unboundedly with $n$, for any fixed $\epsilon > 0$, we have that for some some finite $N_\epsilon$, if $n> N_\epsilon$, the right hand side is at most $\epsilon$. Thus we can conclude that: $\|\hat{\beta}_0-\beta_0\|_2 = O_{p}\left(\sqrt{s \log p/n}\right)$.
\end{proof}}

Notice that for $q_0$ we know 
\begin{align*}
Y&=\theta_0T+\ldot{X}{\beta_0}+\epsilon\\
&=\theta_0\ldot{X}{\gamma_0}+\theta_0\eta+\ldot{X}{\beta_0}+\epsilon \tag{from the definition of $T$}\\
&=\ldot{X}{q_0}+\theta_0\eta+\epsilon \tag{since $q_0=\theta_0 \gamma_0+\beta_0$}
\end{align*} Hence, 
\begin{equation*}
Y=\ldot{X}{q_0}+\epsilon+\theta_0\eta,
\end{equation*} and we know that the noise term, $\epsilon+\theta_0\eta$ is almost surely bounded by $C+CM=C(M+1)$. 
Hence, by Theorem \ref{LASSO}, our Lasso estimate $\hat{q}$ satisfies $\|\hat{q}-q_0\|_2 = O_P(\sqrt{{s \log p}{/n}})$. 
Similarly, our Lasso estimate $\hat{\gamma}$ satisfies $\|\hat{\gamma}-\gamma_0\|_2 = O_P(\sqrt{{s \log p}{/n}})$.

Now, since $X$ has iid mean-zero standard Gaussian components, we know that for all vectors $v \in \mathbb{R}^p$ and $a \in \mathbb{N}$ it holds $\E[|\ldot{X}{v}|^a]=O\left(\sqrt{a}^a\|v\|_2^a\right)$. Applying this to $v=\hat{q}-q$ and $v=\hat{\gamma}-\gamma_0$ for $a \in \{4,6\}$ we have
{\balignst
&\E_X[|\ldot{X}{\hat{q}-q_0}|^4]= O(\|\hat{q}-q_0\|_2^4) = O_{P}\left(\left[\sqrt{\frac{s \log p}{n}} \right]^4\right)\\
&\E_X[|\ldot{X}{\hat{\gamma}-\gamma_0}|^4]= O(\|\hat{\gamma}-\gamma_0\|_2^4) = O_{P}\left(\left[\sqrt{\frac{s \log p}{n}} \right]^4\right)\\
&\E_X[|\ldot{X}{\hat{q}-q_0}|^6]= O(\|\hat{q}-q_0\|_2^6)= O_{P}\left(\left[\sqrt{\frac{s \log p}{n}} \right]^6\right) \\
&\E_X[|\ldot{X}{\hat{\gamma}-\gamma_0}|^6]= O(\|\hat{\gamma}-\gamma_0\|_2^6)=O_{P}\left(\left[\sqrt{\frac{s \log p}{n}} \right]^6\right).
\ealignst
Now for the sparsity level $s=o\left(\frac{n^{2/3}}{\log p}\right)$ we have $\sqrt{\frac{s \log p}{n}}=o(n^{-\frac{1}{6}})$ which implies all of the desired conditions for \assumpref{first-stage}.}

\subsection{Checking Consistency of First Stage (\assumpref{first-stage-consistency})}
Next we prove that \assumpref{first-stage-consistency} is satisfied. Since $ \max_{\alpha \in S}\|\alpha\|_1 =2$ it suffices by Lemma \ref{thm:Sorth} to show that for our choices of $\hat{\gamma},\hat{q},\hat{\mu}_2$, and $\hat{\mu}_3$ we have
\begin{align}
&\E_X[|\ldot{X}{\hat{q}-q_0}|^8]^{\frac{1}{8}} \toprob0 \label{eqn:lasso_consistency1}\\
&\E_X[|\ldot{X}{\hat{\gamma}-\gamma_0}|^8]^{\frac{1}{8}} \toprob0 \label{eqn:lasso_consistency2}\\
&|\hat{\mu}_{2}-\E[\eta^2]| \toprob0 \label{eqn:lasso_consistency3}\\ 
&|\hat{\mu}_{3}-\E[\eta^3]| \toprob0 \label{eqn:lasso_consistency4}.
\end{align} Parts \eqnref{lasso_consistency3} and \eqnref{lasso_consistency4} follow directly from Lemma \ref{lem:proofstages}. 
Since $X$ consists of standard Gaussian entries, an analogous argument to that above implies that
\balignst
&\E_X[|\ldot{X}{\hat{q}-q_0}|^8]^{\frac{1}{8}}= O(\|\hat{q}-q_0\|_2) = O_{P}\left(\left[\sqrt{\frac{s \log p}{n}} \right]\right)\\
&\E_X[|\ldot{X}{\hat{\gamma}-\gamma_0}|^8]^{\frac{1}{8}}= O(\|\hat{\gamma}-\gamma_0\|_2) = O_{P}\left(\left[\sqrt{\frac{s \log p}{n}} \right]\right).
\ealignst
Now for the sparsity level $s=o(\frac{n^{\frac{2}{3}}}{\left(M+1\right)^2\log p})$ we have $\sqrt{\frac{s \log p}{n}}=o(1)$ which implies also conditions \eqnref{lasso_consistency1} and \eqnref{lasso_consistency2}.

\subsection{Checking Convexity Conditions (\assumpargref{consistency}{convexity})}
Finally, we establish the convexity conditions (\assumpargref{consistency}{convexity}).  We consider $\Theta=\mathbb{R}$, which is convex.
Without loss of generality, assume $3 \E[\eta^2]^2> \E[\eta^{4}]$; otherwise, one can establish the convexity conditions for $-m$.
Let $F_n(\theta) = \frac{1}{n} \sum_{t=1}^n m(Z_t, \theta, \hat{h}(X_t))$.
Since $F_n$ is continuously differentiable, $F_n$ is the derivative of a convex function whenever $\grad F_n(\theta) \geq 0$, for all $\theta \in \Theta$.
Since $F_n$ is linear in $\theta$ we have for all $\theta \in \Theta$
\[
\grad F_n(\theta) = \frac{1}{n} \sum_{t=1}^n - (T_t - \inner{\hat{\gamma}}{X_t})^{4} + (T_t - \inner{\hat{\gamma}}{X_t}) \hat{\mu}_3 + 3  (T_t -\inner{\hat{\gamma}}{X_t})^2 \hat{\mu}_2,
\]
the established consistency of $(\hat\gamma, \hat{\mu}_3, \hat{\mu}_2)$ and Slutsky's theorem imply that
\[
\grad F_n(\theta) - \frac{1}{n} \sum_{t=1}^n - (T_t - \inner{\gamma}{X_t})^{4} + (T_t - \inner{\gamma}{X_t}) \E[\eta^3] + 3  (T_t -\inner{{\gamma}}{X_t})^2 \E[\eta^2] \toprob 0.
\]
The strong law of large numbers now yields
\[
\grad F_n(\theta) - (3\E[\eta^2]^2- \E[\eta^4] ) \toprob 0.\]
Hence, 
\[\Pr(\grad F_n(\theta) < 0) \leq \Pr(|\grad F_n(\theta) - (3\E[\eta^2]^2- \E[\eta^4] )| > 3\E[\eta^2]^2- \E[\eta^4] ) \to 0,
\] verifying \assumpargref{consistency}{convexity}.
The proof is complete. 

\section{Proofs of Auxiliary Lemmata}\label{sec:lemmata}

\subsection{Proof of \lemref{conditionhighprob}}\label{sec:proof_conditionhighprob}
Since each $Y_n$ is binary, and $Y_n \toprob 1$, for every $\epsilon>0$, 
$$\Pr[|X_n (1-Y_n)|>\epsilon] \leq \Pr[ Y_n = 0 ] = \Pr[ |1 - Y_n | > 1/2 ] \rightarrow 0.$$
Hence, $X_n (1-Y_n) \toprob 0$. Both advertised claims now follow by Slutsky's theorem \citep[Thm.\ 2.8]{vander}.

\subsection{Proof of \lemref{conditional-moment-convergence}}\label{sec:proof_conditional-moment-convergence}
Let $X_{n,i}$ denote the $i$-th coordinate of $X_n$, i.e. $\|X_n\|_p^p = \sum_{i=1}^d X_{n,i}^p$. By the assumption of the lemma, we have that for every $\epsilon, \delta$, there exists $n(\epsilon,\delta)$, such that for all $n\geq n(\epsilon,\delta)$:
\begin{equation*}
\Pr\left[ \max_{i}\E\left[ |X_{n,i}|^p | Z_n \right] > \epsilon \right] < \delta
\end{equation*}
Let ${\cal E}_n$ denote the event $\{\max_{i} \E\left[ |X_{n,i}|^p | Z_n \right] \leq  \epsilon\}$. Hence, $\Pr[{\cal E}_n]\geq 1-\delta$, for any $n\geq n(\epsilon,\delta)$. By Markov's inequality, for any $n\geq n\left(\nicefrac{\epsilon^{p}\delta}{2d}, \nicefrac{\delta}{2d}\right)$, the event ${\cal E}_n$ implies that:
\begin{equation*}
\Pr\left[ |X_{n,i}|^{p} > \epsilon^{p} | Z_n\right] \leq \frac{\E\left[ |X_{n,i}|^{p} | Z_n\right]}{\epsilon^{p}} \leq \frac{\delta}{2d}
\end{equation*}
Thus, we have:
\begin{align*}
\Pr[|X_{n,i}| > \epsilon ] =~& \E\left[\Pr\left[|X_{n,i}|^{p} > \epsilon^{p} | Z_n\right]\right]\\
 =~& \E\left[\Pr[|X_{n,i}|^{p} > \epsilon^{p} | Z_n] | {\cal E}_n\right]\cdot \Pr[{\cal E}_n] +\E\left[\Pr[|X_{n,i}|^{p} > \epsilon^{p} | Z_n] | \neg {\cal E}_n\right]\cdot \Pr[\neg {\cal E}_n]
\leq \frac{\delta}{d}
\end{align*}
By a union bound over $i$, we have that $\Pr[ \max_{i} |X_{n,i}| > \epsilon] \leq \delta$. Hence, we also have that for any $\epsilon, \delta$, for any $n\geq n(\nicefrac{\epsilon^p \delta}{2d}, \nicefrac{\delta}{2d})$, $\Pr[\|X_n\|_{\infty} > \epsilon]\leq \delta$, which implies $X_n\toprob 0$.
}

\end{document}